\documentclass[10pt]{article}

\RequirePackage{amsthm}
\let\amsproof\proof
\let\endamsproof\endproof

\usepackage[authoryear]{fardlab}

\let\proof\amsproof
\let\endproof\endamsproof

\usepackage{needspace}
\preto\section{\needspace{5\baselineskip}}
\preto\subsection{\needspace{4\baselineskip}}

\usepackage{colortbl}
\usepackage{wrapfig}
\usepackage{listings}
\usepackage{cleveref}
\usepackage{pifont}
\usepackage{etoc}

\usepackage{amsmath,amsfonts,bm}

\def\eqref#1{equation~\ref{#1}}

\def\1{\bm{1}}

\DeclareMathAlphabet{\mathsfit}{\encodingdefault}{\sfdefault}{m}{sl}
\SetMathAlphabet{\mathsfit}{bold}{\encodingdefault}{\sfdefault}{bx}{n}

\newcommand{\KL}{D_{\mathrm{KL}}}

\makeatletter
\@for\fard@env:=theorem,proposition,lemma,corollary,definition,assumption,remark\do{\expandafter\let\csname\fard@env\endcsname\relax\expandafter\let\csname end\fard@env\endcsname\relax\expandafter\let\csname c@\fard@env\endcsname\relax}
\makeatother
\theoremstyle{plain}

\newtheorem{proposition}{Proposition}[section]
\newtheorem{lemma}{Lemma}[section]
\newtheorem{corollary}{Corollary}[section]
\theoremstyle{definition}
\newtheorem{definition}{Definition}[section]
\newtheorem{assumption}{Assumption}[section]
\theoremstyle{remark}

\theoremstyle{plain}

\crefname{theorem}{Theorem}{Theorems}
\Crefname{theorem}{Theorem}{Theorems}
\crefname{proposition}{Proposition}{Propositions}
\Crefname{proposition}{Proposition}{Propositions}
\crefname{lemma}{Lemma}{Lemmas}
\Crefname{lemma}{Lemma}{Lemmas}
\crefname{corollary}{Corollary}{Corollaries}
\Crefname{corollary}{Corollary}{Corollaries}
\crefname{definition}{Definition}{Definitions}
\Crefname{definition}{Definition}{Definitions}
\crefname{assumption}{Assumption}{Assumptions}
\Crefname{assumption}{Assumption}{Assumptions}
\crefname{remark}{Remark}{Remarks}
\Crefname{remark}{Remark}{Remarks}
\crefname{figure}{Figure}{Figures}
\crefname{table}{Table}{Tables}
\crefname{section}{Section}{Sections}
\crefname{subsection}{Section}{Sections}
\crefname{subsubsection}{Section}{Sections}
\crefname{appendix}{Appendix}{Appendices}
\crefname{subappendix}{Appendix}{Appendices}
\crefname{subsubappendix}{Appendix}{Appendices}

\definecolor{tolGrey}{HTML}{DDDDDD}
\definecolor{tolBlue}{HTML}{77AADD}
\definecolor{tolRose}{HTML}{CC6677}
\definecolor{tolPurple}{HTML}{AA4499}

\tcbset{thmbox/.style={
    enhanced, arc=2pt, boxrule=0.4pt, colframe=black,
    left=6pt, right=6pt, top=4pt, bottom=4pt,
    before skip=8pt, after skip=8pt}}

\tcolorboxenvironment{definition}{thmbox, colback=tolGrey!35}
\tcolorboxenvironment{theorem}{thmbox, colback=tolBlue!22}
\tcolorboxenvironment{proposition}{thmbox, colback=tolBlue!22}
\tcolorboxenvironment{lemma}{thmbox, colback=tolBlue!22}
\tcolorboxenvironment{corollary}{thmbox, colback=tolBlue!22}
\tcolorboxenvironment{assumption}{thmbox, colback=tolRose!18}
\tcolorboxenvironment{remark}{thmbox, colback=tolPurple!14}
\tcbset{casebox/.style={
    enhanced, arc=2pt, boxrule=0.4pt, colframe=black!75,
    colback=tolGrey!25, colbacktitle=tolBlue!45, coltitle=black,
    fonttitle=\bfseries\small, width=\textwidth, top=1mm, bottom=1mm,
    attach boxed title to top left={xshift=3mm, yshift*=-\tcboxedtitleheight/2},
    boxed title style={arc=1pt, boxrule=0.4pt, colframe=black!75},
    before upper={\vspace{1mm}}}}

\newcommand{\TV}{d_{\mathrm{TV}}}

\definecolor{improvebg}{RGB}{198,239,206}
\definecolor{worsebg}{RGB}{242,201,201}
\definecolor{improvetxt}{RGB}{34,110,54}
\definecolor{worsetxt}{RGB}{160,40,40}
\definecolor{greytext}{gray}{0.55}
\definecolor{restcoral}{HTML}{E76F51}
\newcommand{\basetxt}{\textcolor{greytext}{Base}}
\newcommand{\gaincell}[1]{\cellcolor{improvebg}{\textbf{#1}}}
\newcommand{\losscell}[1]{#1}
\newcommand{\savecell}[1]{\gaincell{#1}}

\title{Principled Thoughts for Latent Recursive LLM Systems}
\shorttitle{Principled Thoughts for Latent Recursive LLM Systems}

\author{Fahd Seddik\textsuperscript{1}\enspace Fatemeh Fard\textsuperscript{1}}
\affil{1}{FARD Lab, University of British Columbia, Okanagan, Canada}
\makeatletter
\def\fard@authornotes{Correspondence: \href{mailto:fahd.seddik@ubc.ca}{fahd.seddik@ubc.ca}, \href{mailto:fatemeh.fard@ubc.ca}{fatemeh.fard@ubc.ca}}
\makeatother

\projecturl[fard-lab.github.io/REST]{https://fard-lab.github.io/REST}

\begin{document}
\etocdepthtag.toc{main}
\maketitle

\begin{abstract}
Large language models can reason in continuous space instead of decoded text, by recurring on their own hidden states or by passing those states between agents, while training supervises only the Cross-Entropy (CE) of the final decoded answer and does not constrain the thought. Theoretical and empirical analyses establish and confirm four failures of CE-only training that lead to a lower probability of the correct answer such as collapsing thoughts across distinct questions and retaining irrelevant information. We introduce \textbf{REST} (REpresentation-Supervised Thoughts), a training objective that turns four properties of a valid thought representation (causality, minimality, separability, and stability) into differentiable losses added to CE.
We instantiate it in latent single-agent and multi-agent systems, without architectural changes or added parameters at inference. Across 7 benchmarks spanning mathematics, science, medicine, and code generation, with the same training data, compute, and latent budget, REST increases accuracy over CE-only training across agent settings and model sizes by up to 7.5 percentage points and convergence on a final answer by 30\%.
Furthermore, REST thoughts encode more of what is required to achieve the correct answer, and decoding them better recovers the intended output of the agent, which makes latent communication easier to interpret.

\end{abstract}

\begin{figure}[h!]
\vspace{-1.0\baselineskip}
\centering
\includegraphics[width=\linewidth]{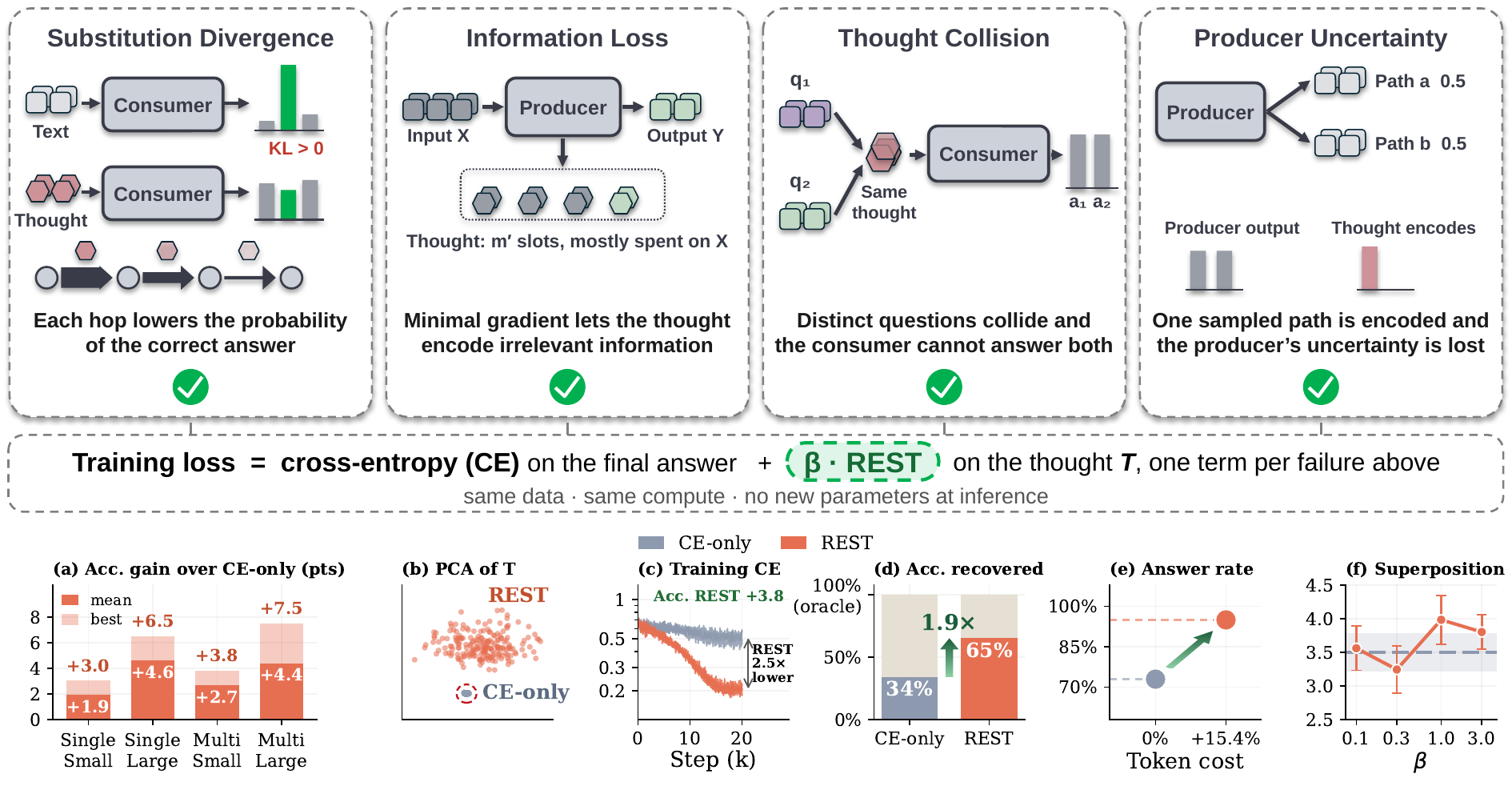}
\caption{\textbf{REST.} The top row shows the four failures of Cross-Entropy (CE) only training of the thought. REST adds a $\beta$-weighted loss on to the CE, with one term per failure, under the same data and compute. (a) Accuracy gain for single- and multi-agent settings across Small (1--2B) and Large (3--4B) models. (b)--(f) REST thoughts spread apart instead of collapsing, reach a lower training CE and a higher accuracy, recover more accuracy when they replace the oracle text, reach a final answer more often, and keep their superposition.}
\label{fig:teaser}
\end{figure}

\section{Introduction}
\label{sec:introduction}

Chain-of-Thought (CoT) prompting enables Large Language Models (LLMs) to solve complex problems by generating each intermediate reasoning step as explicit text \citep{wei2022cotprompting}.
However, expressing every step through tokens of a discrete vocabulary confines reasoning to paths that can be written in words \citep{li2025implicitsurvey}.
The resulting reasoning traces are also long, which raises inference cost and leads models to allocate excessive computation to simple problems \citep{chen2025overthinking}.

To resolve these limitations, a growing line of work reasons directly in the continuous latent space of LLMs rather than through decoded text \citep{hao2026coconut,zhang2025softthinking,butt2026soft,shen2025selfdistillcot,wei2025simcot,sheshanarayana2026thinking}. Within a single model, this is realized by training the model to consume its own hidden states as the next reasoning step instead of a token embedding \citep{hao2026coconut,shen2025selfdistillcot,li2026persistentlatent}. From one model to another, hidden states or KV caches have instead been used for inter-agent communication \citep{liu2024cacheaugmentation,zheng2025thoughtcommunication,fu2026cachetocache}.
Recent systems unify both, with multiple agents reasoning and communicating in latent space \citep{zou2026latentcollaboration} and repeating this collaboration over several rounds of recursion \citep{zou2026recursivemultiagentsystems}. The main objective is usually Cross-Entropy (CE) of the final decoded answer. As these systems scale in agents and rounds, each later agent builds on the thoughts it receives, and error compounds through the remainder of the system. Existing representation-level supervision targets the text the thought replaces, which CODI does by matching the hidden states that text induces \citep{shen2025selfdistillcot} or SIM-COT by decoding that text back from the thought \citep{wei2025simcot}. However, supervising only on the target text does not dictate the relation between two thoughts nor the distribution of outputs the model can generate. Both requirements are among the four properties that define a valid thought representation \citep{seddik2026formalizinglatentthoughtsaxioms}.

A valid thought representation should preserve what is needed to reproduce the output (Causality), discard what is irrelevant to that output (Minimality), remain distinguishable across semantically distinct inputs (Separability), and reflect the distribution of possible outputs rather than one sample from it (Stability). An audit of existing latent representations finds that every candidate violates at least one property. The audit evaluates existing methods, and does not perform training to target these properties. Training with CE only does not enforce them either, since a model receives no signal as long as the correct answer maintains high probability (\cref{sec:price}).

\begin{figure}[h!]
    \centering
    \includegraphics[width=\columnwidth]{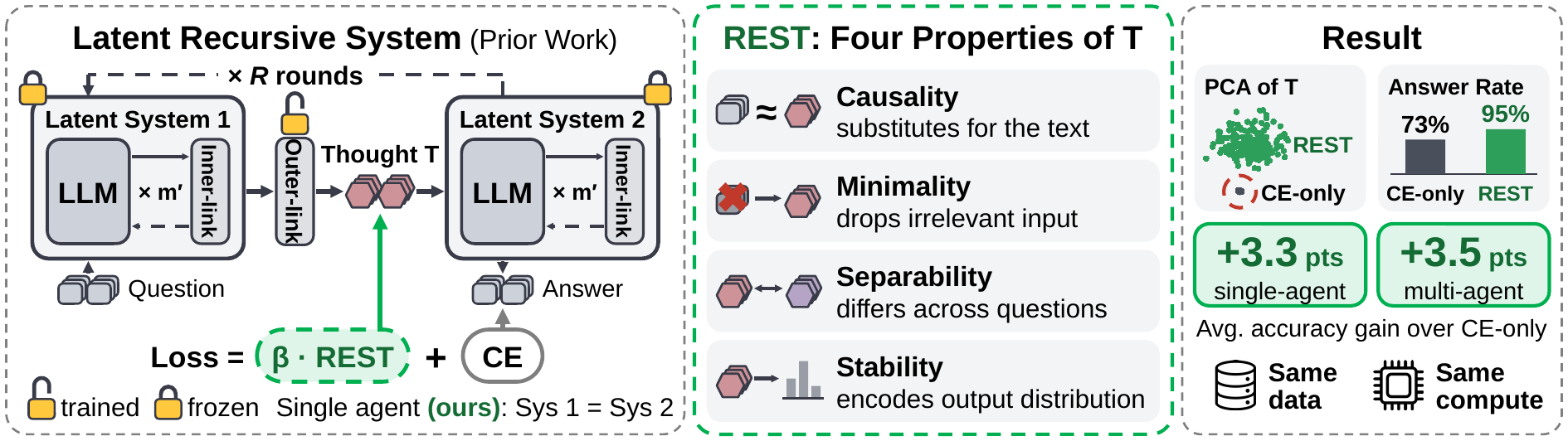}
    \caption{\textbf{Overview of REST.} Latent systems pass a thought $\mathbf{T}$ through a trained outer link (left). REST adds a $\beta$-weighted loss built from the four properties of $\mathbf{T}$ to the CE objective (middle). CE-only thoughts collapse and REST improves accuracy under the same data and compute (right).}
    \label{fig:overview}
\end{figure}

We introduce \textbf{REpresentation-Supervised Thought(s) (REST)}. REST translates each of these four properties into a differentiable loss term (\cref{app:theory:diff}) and adds it to the total loss alongside CE. Each term is weighted by a hyperparameter $\beta$, added on top of the existing CE loss. Training then targets the thought representation as well as the decoded answer (\cref{fig:overview}). REST is studied on a latent recursive system in which the base LLMs are frozen \citep{zou2026recursivemultiagentsystems}. We extend this construction from the multi-agent setting in which it was proposed to a single-agent setting, where a model recurs on its own hidden states. In addition, no changes to architecture or inference procedure are required, allowing it to be added to existing latent recursive systems without modification.

REST increases downstream accuracy across MATH500 \citep{lightman2024stepverify}, GPQA-Diamond \citep{rein2024gpqabench}, MedQA \citep{jin2021disease}, AIME2025/2026 \citep{zhang2025aimeexam,dekoninck2026matharena}, LiveCodeBench-v6 \citep{jain2025livecodebench}, and MBPP+ \citep{liu2023evalplus}, spanning mathematical, scientific, and code-generation domains. Averaged over all REST configurations, it improves accuracy over CE only by +3.5 points in the multi-agent setting and +3.3 in the single-agent setting, under a matched training budget, the same training data, and the same latent budget. The best configurations reach +7.5 and +6.5 points, respectively. Through our analysis, we confirm that the trained thoughts exhibit the properties that were targeted, which in turn lower-bound the probability of the correct answer (\cref{app:ce}). REST decodes 15.4\% more tokens on average at inference, which we attribute to a higher rate of convergence on a final answer (\cref{sec:analysis}).

Our contributions can be summarized as follows:
\begin{itemize}
    \item We identify four failures of CE-only thoughts that lead to a lower probability of the correct answer (\cref{sec:price}). In trained systems, CE-only thoughts collapse across distinct questions and retain irrelevant information (\cref{sec:analysis}).
    \item We introduce REST, which translates four theoretically motivated properties of thought representations \citep{seddik2026formalizinglatentthoughtsaxioms} into differentiable loss terms added to the CE objective, and we extend latent recursive systems \citep{zou2026recursivemultiagentsystems} from the multi-agent setting to a single agent that recurs on its own hidden states.
    \item REST improves accuracy over the CE baseline across mathematical, scientific, medical, and code-generation benchmarks in both the single-agent and multi-agent settings, under matched training and latent budgets, and its gains remain robust across loss weights.
\end{itemize}

\section{Background}
\label{sec:background}

\textbf{Setup and Notation.}
Sets and operators are denoted with ($\mathcal{V}$, $\mathcal{R}_{\mathrm{in}}$, $\mathcal{R}_\psi$), random
variables, matrices, and scalar-valued functions in upper case ($Y$, $H$, $J$), and scalars, indices,
token sequences, and single vectors in lower case ($m$, $t$, $u$, $e_i$), with bold for the
thought $\mathbf{T}$ and vectors derived from it ($\mathbf{t}$).
A next-token distribution is lower case ($p$, $q$) while the sequence is the
matching upper case ($P$, $Q$).

We adopt recursive multi-agent systems \citep{zou2026recursivemultiagentsystems}, and extend their approach to the single-agent setting to study REST. Let $f_\theta(\cdot)$ denote an auto-regressive transformer model with vocabulary $\mathcal{V}$ and hidden size $d_h$.
Given a token sequence $u$ with input embeddings
$E(u) = [e_1, \dots, e_m] \in \mathbb{R}^{m \times d_h}$, a forward pass yields last-layer hidden
states $H = [h_1, \dots, h_m] \in \mathbb{R}^{m \times d_h}$ and, at each position $i$, the
next-token distribution $\mathrm{softmax}(h_i W_{\mathrm{out}})$ over $\mathcal{V}$.
A multi-agent system $\mathcal{S}$ comprises $N$ agents $\mathcal{A} = \{A_1, \dots, A_N\}$, where
each $A_i$ corresponds to $f_{\theta_i}$, where every $\theta_i$ is frozen.

Hidden states re-enter embedding space through two links \citep{zou2026recursivemultiagentsystems}.
An inner link $\mathcal{R}_{\mathrm{in}}$ maps an agent's hidden state back to its own input space, which
lets $A_i$ take a latent step instead of decoding a token.
It is trained per agent by aligning $\mathcal{R}_{\mathrm{in}}(H)$ onto the embeddings of the
ground-truth text.
An outer link $\mathcal{R}_\psi$ maps the output of $\mathcal{R}_{\mathrm{in}}$ into the input space of another agent.
$\mathcal{R}_{\mathrm{in}}$'s objective only aligns hidden states to the input embedding space, whereas all
outer links are trained through CE on the target answer text at the final agent.

A latent thought of a producer $A_i \in \mathcal{A}$ to a consumer $A_j$ composes both links as
$\mathbf{T} = \mathcal{R}_\psi(\mathcal{R}_{\mathrm{in}}(H_u)) \in \mathbb{R}^{m' \times d_h}$, where
$H_u \in \mathbb{R}^{m \times d_h}$ denotes the producer's last-layer hidden states at the $m$
positions of its own output $u \in \mathcal{V}^{m}$, and $m'$ is a latent steps budget for transferred thoughts.

\begin{definition}[\textbf{Transfer}]\label{def:transfer}
For a consumer $A_j \in \mathcal{A}$, let $E_{\mathrm{pre}}$ and $E_{\mathrm{post}}$ denote the embedding blocks of a prompt
that surround the transferred content, both independent of $\psi$.
For every block $Z$ of $d_h$-dimensional embedding vectors, the consumer
performs a forward pass on $[\,E_{\mathrm{pre}} \,;\, Z \,;\, E_{\mathrm{post}} \,;\, E(v)]$ where $v$ is the teacher-forced target answer for the consumer. The transfer can be through latent thoughts for $Z = \mathbf{T}$ and through textual embeddings for $Z = E(u)$.
\end{definition}

Training and inference differ in how $H_u$ is obtained.
During training, $u$ is the producer's ground-truth output, and one teacher-forced pass over the
producer's prompt concatenated with $u$ yields $H_u$ at $u$'s positions.
During inference no $u$ is available in advance, so the producer starts from its prompt alone and
takes $m'$ latent steps.
At each step, $\mathcal{R}_{\mathrm{in}}$ maps the last position's hidden state into the next input
embedding, and $H_u = [h_1, \dots, h_{m'}]$ is the sequence of hidden states over these steps.

\textbf{Roles and Hops.}
The system chains a planner $A_1$, a refiner $A_2$ and a solver $A_3$ (\cref{fig:overview},
\cref{tab:agent-config}).
The planner proposes an approach to the question, the refiner revises that approach, and the solver
produces the answer.
Each agent takes $m'$ latent steps through $\mathcal{R}_{\mathrm{in}}$, and $\mathcal{R}_\psi$ maps
the resulting hidden states into the next agent's input space.
Every handoff of this kind is one transfer of \cref{def:transfer}.

\textbf{Recursion.}
Once the solver finishes, its thought returns to the planner and the chain repeats for a further
round \citep{zou2026recursivemultiagentsystems}.
Let $R$ denote the number of rounds, let $\mathbf{T}^{(r,k)}$ denote the thought that $A_k$ receives
at round $r$, and let $\mathcal{H}$ collect the pairs $(r,k)$ at which a transfer occurs.
The planner of the first round is the one agent that conditions on the question without a
transferred thought.
Every later agent conditions on both its own input context and the thought it receives.
The solver of the final round decodes $v$ as text, and every earlier hop stays in latent space.

\subsection{Limitations of Answer-Level Supervision}
\label{sec:price}

\textbf{Substitution Divergence.}
Let $P(\cdot \mid Z)$ denote the distribution the consumer assigns to complete targets under a
transferred block $Z$.
A consumer answers under either block of \cref{def:transfer}.
Let $\mathrm{CE}_{\mathrm{lat}}$ and $\mathrm{CE}_{\mathrm{txt}}$ denote the per-token CE
it incurs on a target $v$ under each one.
Let
$\Lambda(v) = \mathrm{CE}_{\mathrm{lat}}(v) - \mathrm{CE}_{\mathrm{txt}}(v)$
denote the cost of substituting the thought for the producer's text.
We can represent this as a divergence between the two transfers over the targets the producer's text induces,
\begin{equation}\label{eq:bg-substitution}
    \mathbb{E}_{v \sim P(\cdot \mid E(u))}\big[ \Lambda(v) \big]
    = \frac{1}{n}\, \KL\big( P(\cdot \mid E(u)) \,\big\|\, P(\cdot \mid \mathbf{T}) \big) ,
\end{equation}
where $n$ is the length of $v$ (\cref{app:ce:causality}).
This divergence can be zero only when both transfers lead the consumer agent to the same distribution.
The probability the consumer assigns to the correct target is reduced by a factor of
$\exp(-n \Lambda(v))$.
A longer target is therefore penalized more for the same divergence per token. This effect compounds for each hop $A_i \to A_j$ in the pipeline.
Empirically, CE-only does not reach the accuracy of the oracle text from the producer agent (\cref{fig:analysis-panels}(b)).

\textbf{Information Loss.}
When a consumer agent solves a question, it may solve a large part on its own (\cref{tab:base-model}), and on those
questions CE would be close to its minimum. Gradients updating the thought for those examples would be small compared to others. Therefore, the output from a producer agent may not be preserved as $\psi$ may not learn from such examples. The consumer agent is then left with less information than when
given the producer's generated output (\cref{prop:ce-minimality-chain}). In addition, since the transfer is further limited to a budget of $m'$ positions, $\psi$ may learn to encode irrelevant information due to such examples since their supervision signal is too weak.
Empirically, CE-only encodes more content from the producer agent's input than its output (\cref{fig:analysis-panels}(a)).

\textbf{Thought Collision.}
If training resulted in two different examples having colliding thoughts, then a consumer agent must
answer both with nearly the same distribution even if the two target texts are different.
This imposes a lower bound on their average CE (\cref{prop:ce-separability-floor}).
Under total collapse, where the two thoughts are identical, the optimal solution would be assigning equal probability to both targets, which would mean it is random chance.
CE evaluates every example on its own target.
Summing over examples does not change this, since each term depends on one thought only.
A collision raises the loss on both examples, and CE cannot distinguish this from two
difficult examples.
Empirically, CE-only collapses distinct questions together (\cref{fig:separability-pca}).

\textbf{Producer Uncertainty.}
Generated text from a producer used as supervision targets would not encode the level of certainty for that output. At each step, it may be choosing between alternative solutions of equal
probability. CE supervises only the consumer's final answer, and it does not require the thought to
encode this certainty. The loss would be higher for a thought that doesn't encode the possible output distribution from a producer agent (\cref{prop:ce-stability-variance}).

\begin{figure}[h!]
    \centering
    \includegraphics[width=\linewidth]{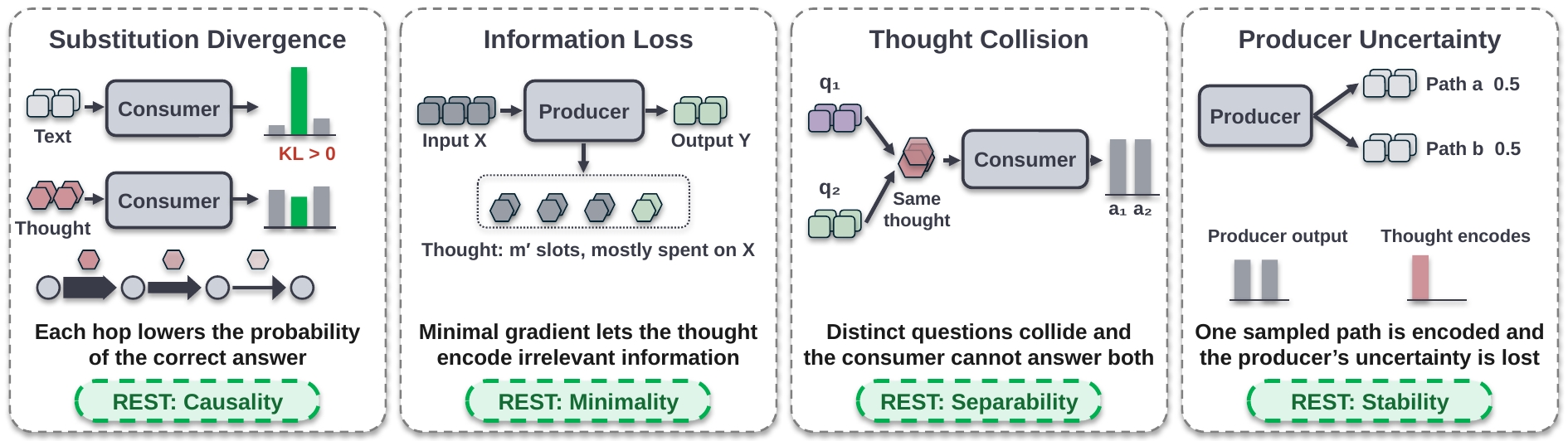}
    \caption{\textbf{Why CE is not enough.} Four failures of CE thoughts and the property that addresses each.}
    \label{fig:failures}
\end{figure}

\section{REST: Representation-Supervised Thought(s)}
\label{sec:method}

Every agent base LLM $f_{\theta_i}$ stays frozen alongside $\mathcal{R}_{\mathrm{in}}$. The outer link $\mathcal{R}_\psi$ is the component that is trained with CE, thus we keep $\mathcal{R}_{\mathrm{in}}$ frozen and only train $\mathcal{R}_\psi$ to isolate the effect of REST. CE is kept as part of the total loss function to maintain legible text outputs when answering. For each term, we start with the definition of the property and arrive at the loss function below (proofs in \cref{app:theory:diff}).

\subsection{Principled Thoughts}
\label{sec:thoughts}

Let $n$ denote the length of the target $v$ of \cref{def:transfer}, let
$v_{<t} = (v_1, \dots, v_{t-1})$ denote its prefix at position $t \in \{1, \dots, n\}$, and let
$p(\cdot \mid v_{<t}, Z)$ denote the consumer agent's next-token distribution at that position under the
transferred block $Z$.

\textbf{Causality} requires that $\mathbf{T}$ hold the required information about the text generated by a producer agent when given to a
consumer agent without altering what a consumer agent would have predicted. We enforce this property by penalizing the divergence
between the two transfers of \cref{def:transfer},
\begin{equation}\label{eq:loss-causality}
    \mathcal{L}_{\mathrm{caus}}(\psi)
    = \frac{1}{n} \sum_{t=1}^{n}
      \KL\Big( p\big(\cdot \mid v_{<t}, E(u)\big) \,\Big\|\, p\big(\cdot \mid v_{<t}, \mathbf{T}\big) \Big).
\end{equation}

\textbf{Minimality} requires that $\mathbf{T}$ should remove irrelevant information that was present in the input of the producer agent while maintaining relevant information relative to its output.
Let $X$ and $Y$ denote the producer's input and output as random variables, the latter realized by
the sequence $u$ (\cref{def:transfer}), and let $\mathrm{CE}(Y \mid \mathbf{T})$ and
$\mathrm{CE}(X \mid Y, \mathbf{T})$ denote the CE terms calculated for the consumer agent,
\begin{equation}\label{eq:loss-minimality}
    \mathcal{L}_{\mathrm{min}}(\psi)
    = \lambda_1 \, \mathrm{CE}\big( Y \mid \mathbf{T} \big)
      - \lambda_2 \, \mathrm{CE}\big( X \mid Y, \mathbf{T} \big) ,
\end{equation}
where $\lambda_1, \lambda_2 > 0$ are scalars. Since a thought representation $\mathbf{T}$ from the producer agent may encode information from its input and output, this loss term would penalize a representation that would encode irrelevant information from its input relative to the information present in its output.

\textbf{Separability} requires that two thought representations for semantically distinct outputs should be distinguishable or separable to represent that they encode semantically distinct information. In order to apply this on $\mathbf{T}$, which can be a sequence of vectors, we utilize an attention pool to produce one vector for each sequence. Let $\mathbf{t}$ and $\mathbf{t}_1, \dots, \mathbf{t}_K$ in $\mathbb{R}^{d_h}$ denote the thoughts pooled over the $m'$ positions of the current example and of the $K$ training examples that precede it, respectively, and let $\mathrm{sim}(a, b)$ denote the cosine similarity.
We penalize the similarity of $\mathbf{t}$ to each $\mathbf{t}_k$,
\begin{equation}\label{eq:loss-separability}
    \mathcal{L}_{\mathrm{sep}}(\psi, \omega)
    = \log \sum_{k=1}^{K}
      \exp\!\Big( \mathrm{sim}\big( \mathbf{t}, \mathbf{t}_k \big) \big/ \tau \Big) ,
\end{equation}
where $\tau > 0$ is a scalar for temperature and $\omega$ denotes the parameters of the attention pooling.

\textbf{Stability} requires encoding the output distribution rather than one sequence.
Sampling that distribution would require many outputs at every stage of the system. Estimating the entropy on the other hand can track the property without bias (\cref{app:theory:stability}).
Let $q(\cdot \mid u_{<s})$ denote the producer's next-token distribution at position
$s \in \{1, \dots, m\}$ of its own output $u$, let $\mathbb{H}$ denote Shannon entropy, and let
\begin{equation}\label{eq:entropy-rate-estimate}
    \widehat{\mathbb{H}}(u)
    = \frac{1}{m} \sum_{s=1}^{m} \mathbb{H}\big( q(\cdot \mid u_{<s}) \big)
\end{equation}
denote the producer's mean predictive entropy along $u$.
We enforce the property through the squared error between \cref{eq:entropy-rate-estimate} and an estimate of its value through a learned probe that receives $\mathbf{T}$,
\begin{equation}\label{eq:loss-stability}
    \mathcal{L}_{\mathrm{stab}}(\psi, \omega)
    = \Big( g_\omega(\mathbf{T}) - \widehat{\mathbb{H}}(u) \Big)^{2} ,
\end{equation}
where $g_\omega$ is an attention pooling over the $m'$ positions of $\mathbf{T}$ with an
affine map from $\mathbb{R}^{d_h}$ to $\mathbb{R}$.

\subsection{Multi-Agent Systems}
\label{sec:multi-agent}

REST adds one term at every transfer of \cref{def:transfer} and leaves the CE of
the system unchanged,
\begin{equation}\label{eq:objective}
    \mathcal{L}(\psi, \omega)
    = \mathcal{L}_{\mathrm{CE}}\big(v \mid \mathbf{T}^{(R,N)}\big)
      + \beta \, \frac{1}{|\mathcal{H}|} \sum_{(r,k) \in \mathcal{H}}
        \mathcal{L}_{\mathrm{prop}}\big(\psi, \omega; \mathbf{T}^{(r,k)}\big) ,
\end{equation}
where $\mathcal{L}_{\mathrm{prop}} \in \{ \mathcal{L}_{\mathrm{caus}}, \mathcal{L}_{\mathrm{min}},
\mathcal{L}_{\mathrm{sep}}, \mathcal{L}_{\mathrm{stab}} \}$, and $\mathbf{T}^{(R,N)}$ is the thought that the solver of the final round
receives. $\omega$ is present only in $\mathcal{L}_{\mathrm{sep}}$ and
$\mathcal{L}_{\mathrm{stab}}$.
The property term is averaged over each transfer, therefore receiving a gradient from its own term in addition to the final CE.

\subsection{Single-Agent Systems}
\label{sec:single-agent}

\begin{wrapfigure}{r}{0.37\linewidth}
\centering
\vspace{-2.0\baselineskip}
\includegraphics[width=\linewidth]{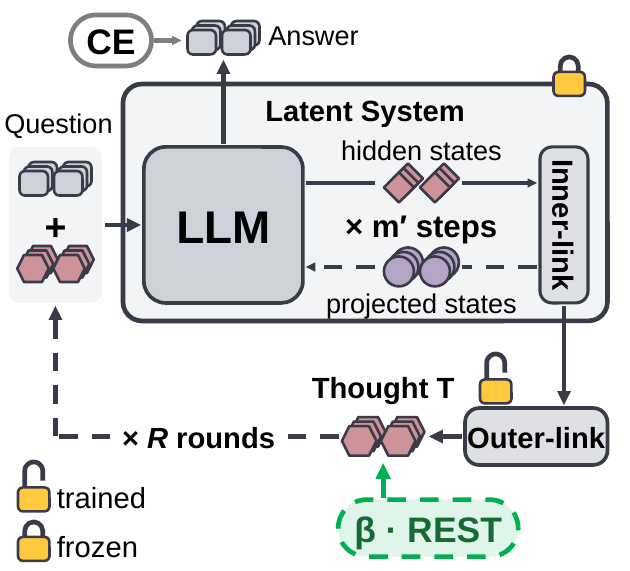}
\caption{\textbf{Single agent}. $\mathcal{R}_{\mathrm{in}}$ runs at each of the $m'$ steps, and $\mathcal{R}_\psi$ once per round.}
\label{fig:selfloop}
\vspace{-1.0\baselineskip}
\end{wrapfigure}

It is important to note that \cref{def:transfer} permits $A_i = A_j$, which corresponds to our extension to the single-agent setting. REST extends to a single agent reasoning in latent space, rather than being restricted to
a pipeline of several agents.

\textbf{Self-Loop.}
A single solver acts as both the producer and the consumer, with no planner and no refiner
(\cref{fig:selfloop}).
The solver first runs on its own input context and computes last-layer hidden states.
These states pass through the frozen inner link $\mathcal{R}_{\mathrm{in}}$ and the trained outer
link $\mathcal{R}_\psi$, which gives the thought $\mathbf{T}$ of \cref{sec:background}.
The solver then runs a second time, conditioned on both its own input context and $\mathbf{T}$.
Each further round repeats the second pass, and the hidden states it computes form the next round's
thought.
\Cref{eq:objective} applies at $N = 1$.
The two links act at different points of the system.
$\mathcal{R}_{\mathrm{in}}$ runs at every one of the $m'$ latent steps and lets the solver take a
latent step instead of decoding a token, whereas $\mathcal{R}_\psi$ runs once per round.
Removing $\mathcal{R}_{\mathrm{in}}$ therefore removes latent reasoning itself rather than one
trainable component.
Keeping it frozen leaves $\psi$ as the only parameter that varies between CE and REST,
and between the single- and multi-agent settings.

\section{Evaluation}
\label{sec:experiments}

\subsection{Experimental Setup}

\begin{wraptable}{R}{0.35\linewidth}
\centering
\vspace{-1.5\baselineskip} 
\scriptsize
\setlength{\tabcolsep}{3pt}
\caption{Light and Scaled systems.}
\label{tab:agent-config}
\begin{tabular}{l|l|l}
\toprule
\textbf{System} & \textbf{Role} & \textbf{Model} \\
\midrule

\multirow{3}{*}{Light}
& Planner & Qwen3-1.7B  \\
& Refiner & Llama-3.2-1B-Instruct \\
& Solver  & Qwen2.5-Math-1.5B-Instruct \\
\midrule

\multirow{3}{*}{Scaled}
& Planner & Gemma-3-4B-it \\
& Refiner & Llama-3.2-3B-Instruct \\
& Solver  & Qwen3.5-4B \\
\bottomrule
\end{tabular}
\vspace{-1.0\baselineskip} 
\end{wraptable}

\textbf{Systems and Baselines.} We evaluate REST on open-weight LLMs across the Qwen \citep{qwen2025qwentwofive,yang2024qwen25mathtechnicalreportmathematical,yang2025qwenthree,qwenteam2026qwenthreefive}, Llama \citep{grattafiori2024llamaherd}, and Gemma \citep{gemmateam2025gemmathree} model families for heterogeneous agent collaborations. \Cref{tab:agent-config} lists the model assigned to each role. For baseline comparisons, we evaluate CODI \citep{shen2025selfdistillcot} and SIM-CoT \citep{wei2025simcot} adapted as loss terms added to CE, and the frozen LLMs as a text baseline (\cref{tab:base-model}). CE only is the unmodified latent recursive system that REST builds on, and Best pair combines causality and minimality. Detailed implementations and hyperparameters are in \cref{app:baseline} and \cref{app:hyperparams}.

\textbf{Data.}
We adopt Sequential-Math \citep{zou2026recursivemultiagentsystems} for training, constructed by rewriting question-answer pairs curated from s1K \citep{muennighoff2025simplescaling}, m1K \citep{huang2026medicalscaling} into role-specific texts.
Evaluation covers four domains, mathematics with MATH500 \citep{lightman2024stepverify}, AIME2025 \citep{zhang2025aimeexam}, and AIME2026 \citep{dekoninck2026matharena}, science with GPQA-Diamond \citep{rein2024gpqabench}, medicine with MedQA \citep{jin2021disease}, and code generation with MBPP+ \citep{liu2023evalplus} and LiveCodeBench-v6 \citep{jain2025livecodebench}. Additional details in \cref{app:datasets}.

\textbf{Training and Inference.}
For training, we freeze all agent parameters and update only $\psi$ and $\omega$. The training objective adds a weighted term to the loss function. AdamW is used under a cosine learning rate schedule. During inference, we follow each model's official recommended sampling settings, and a lower temperature for code generation than for other reasoning tasks. We average values across three training seeds. The Avg.\ Change column reports the average change against the CE-only baseline across the benchmarks of one system, and each system is reported at the recursion round that performs best for it, with the baseline taken at that same round for fair comparison.

\subsection{Single-Agent Evaluation}

\begin{table}[h!]
\centering \small
\caption{Single-agent, Light vs Scaled, at $r=1$ and $r=3$ respectively, each row at its best $\beta$.}
\label{tab:single-agent-main}
\resizebox{\linewidth}{!}{%
\arrayrulecolor{black!30}
\begin{tabular}{l|c|c>{\columncolor{tolGrey!40}}cc>{\columncolor{tolGrey!40}}cc>{\columncolor{tolGrey!40}}c|c>{\columncolor{tolGrey!40}}c|c>{\columncolor{tolGrey!40}}c|c>{\columncolor{tolGrey!40}}c|cc}
\arrayrulecolor{restcoral}\specialrule{2.0pt}{0pt}{2pt}\arrayrulecolor{black!30}
\textbf{Method} & \textbf{Metric} & \multicolumn{2}{c}{\textbf{Math500}} & \multicolumn{2}{c}{\textbf{AIME2025}} & \multicolumn{2}{c|}{\textbf{AIME2026}} & \multicolumn{2}{c|}{\textbf{GPQA-D}} & \multicolumn{2}{c|}{\textbf{MedQA}} & \multicolumn{2}{c|}{\textbf{Code Gen.}} & \multicolumn{2}{c}{\textbf{Avg.\ Change}} \\
\cmidrule(lr){3-4}\cmidrule(lr){5-6}\cmidrule(lr){7-8}\cmidrule(lr){9-10}\cmidrule(lr){11-12}\cmidrule(lr){13-14}\cmidrule(lr){15-16}
 & & \textbf{Light} & \textbf{Scaled} & \textbf{Light} & \textbf{Scaled} & \textbf{Light} & \textbf{Scaled} & \textbf{Light} & \textbf{Scaled} & \textbf{Light} & \textbf{Scaled} & \textbf{Light} & \textbf{Scaled} & \textbf{Light} & \textbf{Scaled} \\
\midrule
 & Acc. & 70.6 & 80.9 & 27.8 & 65.6 & 12.2 & 72.2 & 27.3 & 63.3 & 27.1 & 79.2 & 27.7 & 35.5 & \basetxt & \basetxt \\
\multirow{-2}{*}{CE only} & Token & 557 & 862 & 905 & 7226 & 1016 & 6967 & 911 & 1709 & 1177 & 739 & 477 & 1289 & \basetxt & \basetxt \\
\arrayrulecolor{restcoral}\specialrule{0.8pt}{0pt}{0pt}\arrayrulecolor{black!30}
\rowcolor{restcoral!40}\multicolumn{16}{c}{\textbf{\textsc{REST (ours), single property}}} \\
\arrayrulecolor{restcoral}\specialrule{0.8pt}{0pt}{0pt}\arrayrulecolor{black!30}
 & Acc. & 72.4 & 80.4 & 23.3 & 78.9 & 16.7 & 83.3 & 23.9 & 63.5 & 29.7 & 79.4 & 32.9 & 40.2 & \gaincell{$\uparrow 1.0$} & \gaincell{$\uparrow 4.8$} \\
\multirow{-2}{*}{Causality} & Token & 551 & 1008 & 907 & 8677 & 1008 & 8165 & 851 & 2216 & 1113 & 955 & 570 & 1662 & \savecell{$-0.9\%$} & $+20.7\%$ \\
\midrule
 & Acc. & 72.1 & 81.1 & 30.0 & 78.9 & 20.0 & 86.7 & 27.6 & 63.8 & 29.7 & 83.0 & 31.5 & 42.3 & \gaincell{$\uparrow 3.0$} & \gaincell{$\uparrow 6.5$} \\
\multirow{-2}{*}{Minimality} & Token & 546 & 1053 & 906 & 10256 & 936 & 10390 & 872 & 2526 & 1075 & 1162 & 561 & 1850 & \savecell{$-2.9\%$} & $+44.9\%$ \\
\midrule
 & Acc. & 73.1 & 80.2 & 26.7 & 76.7 & 17.8 & 83.3 & 28.8 & 61.1 & 27.7 & 81.7 & 30.2 & 39.4 & \gaincell{$\uparrow 1.9$} & \gaincell{$\uparrow 4.3$} \\
\multirow{-2}{*}{Separability} & Token & 551 & 1056 & 918 & 8657 & 987 & 8033 & 901 & 2556 & 1083 & 1087 & 488 & 1764 & \savecell{$-2.3\%$} & $+23.2\%$ \\
\midrule
 & Acc. & 72.1 & 79.1 & 27.8 & 83.3 & 15.6 & 88.3 & 28.1 & 64.4 & 29.8 & 79.8 & 29.1 & 40.4 & \gaincell{$\uparrow 1.6$} & \gaincell{$\uparrow 6.4$} \\
\multirow{-2}{*}{Stability} & Token & 538 & 1049 & 921 & 9626 & 944 & 9174 & 845 & 2464 & 1051 & 1071 & 645 & 1783 & \savecell{$-2.0\%$} & $+33.9\%$ \\
\arrayrulecolor{restcoral}\specialrule{0.8pt}{0pt}{0pt}\arrayrulecolor{black!30}
\rowcolor{restcoral!40}\multicolumn{16}{c}{\textbf{\textsc{REST (ours), composition of properties}}} \\
\arrayrulecolor{restcoral}\specialrule{0.8pt}{0pt}{0pt}\arrayrulecolor{black!30}
 & Acc. & 69.7 & 79.4 & 23.3 & 80.0 & 16.7 & 86.7 & 26.1 & 57.6 & 29.6 & 79.0 & 33.5 & 38.6 & \gaincell{$\uparrow 1.0$} & \gaincell{$\uparrow 4.1$} \\
\multirow{-2}{*}{Best pair} & Token & 542 & 1065 & 876 & 10503 & 1039 & 10728 & 826 & 2605 & 1083 & 1179 & 553 & 1632 & \savecell{$-2.4\%$} & $+47.5\%$ \\
\midrule
 & Acc. & 72.4 & 78.4 & 33.3 & 73.3 & 16.7 & 76.7 & 23.7 & 60.6 & 31.0 & 79.3 & 32.8 & 37.0 & \gaincell{$\uparrow 2.9$} & \gaincell{$\uparrow 1.4$} \\
\multirow{-2}{*}{All properties} & Token & 569 & 981 & 934 & 11053 & 984 & 10338 & 953 & 1744 & 1130 & 1001 & 555 & 1375 & $+1.6\%$ & $+41.0\%$ \\
\arrayrulecolor{restcoral}\specialrule{2.0pt}{2pt}{0pt}\arrayrulecolor{black}
\end{tabular}%
}
\end{table}

\Cref{tab:single-agent-main} shows every property term improving accuracy over the CE-only objective in this setting.
The effect can be attributed to the thought rather than collaboration among agents. A practitioner is also more likely to already be running a single model system when considering costs. In the Light system, the token count is lower than CE, while it is higher in the multi-agent setting. Among the four properties, minimality delivers the largest gain in the single-agent setting.
In the Light system, minimality exceeds the baseline on every task while decoding fewer tokens overall, indicating that its accuracy gain does not come from longer generation.
This is consistent with the structure of the self-loop, in which the producer's input contains the same question and instructions that the consumer already conditions on when receiving $\mathbf{T}$.
As a result, any input content encoded in $\mathbf{T}$ is redundant and occupies latent positions that would otherwise carry the producer's output.

\subsection{Multi-Agent Evaluation}

\begin{table}[h!]
\centering \small
\caption{Multi-agent, Light vs Scaled, at $r=1$ and $r=3$ respectively, each row at its best $\beta$.}
\label{tab:multi-agent-main}
\resizebox{\linewidth}{!}{%
\arrayrulecolor{black!30}
\begin{tabular}{l|c|c>{\columncolor{tolGrey!40}}cc>{\columncolor{tolGrey!40}}cc>{\columncolor{tolGrey!40}}c|c>{\columncolor{tolGrey!40}}c|c>{\columncolor{tolGrey!40}}c|c>{\columncolor{tolGrey!40}}c|cc}
\arrayrulecolor{restcoral}\specialrule{2.0pt}{0pt}{2pt}\arrayrulecolor{black!30}
\textbf{Method} & \textbf{Metric} & \multicolumn{2}{c}{\textbf{Math500}} & \multicolumn{2}{c}{\textbf{AIME2025}} & \multicolumn{2}{c|}{\textbf{AIME2026}} & \multicolumn{2}{c|}{\textbf{GPQA-D}} & \multicolumn{2}{c|}{\textbf{MedQA}} & \multicolumn{2}{c|}{\textbf{Code Gen.}} & \multicolumn{2}{c}{\textbf{Avg.\ Change}} \\
\cmidrule(lr){3-4}\cmidrule(lr){5-6}\cmidrule(lr){7-8}\cmidrule(lr){9-10}\cmidrule(lr){11-12}\cmidrule(lr){13-14}\cmidrule(lr){15-16}
 & & \textbf{Light} & \textbf{Scaled} & \textbf{Light} & \textbf{Scaled} & \textbf{Light} & \textbf{Scaled} & \textbf{Light} & \textbf{Scaled} & \textbf{Light} & \textbf{Scaled} & \textbf{Light} & \textbf{Scaled} & \textbf{Light} & \textbf{Scaled} \\
\midrule
 & Acc. & 71.1 & 86.6 & 22.2 & 80.0 & 17.8 & 60.0 & 24.9 & 59.1 & 28.9 & 79.7 & 32.5 & 33.5 & \basetxt & \basetxt \\
\multirow{-2}{*}{CE only} & Token & 550 & 1252 & 849 & 10485 & 884 & 5420 & 749 & 2572 & 868 & 829 & 604 & 1029 & \basetxt & \basetxt \\
\arrayrulecolor{restcoral}\specialrule{0.8pt}{0pt}{0pt}\arrayrulecolor{black!30}
\rowcolor{restcoral!40}\multicolumn{16}{c}{\textbf{\textsc{REST (ours), single property}}} \\
\arrayrulecolor{restcoral}\specialrule{0.8pt}{0pt}{0pt}\arrayrulecolor{black!30}
 & Acc. & 77.8 & 86.8 & 27.8 & 80.0 & 22.2 & 86.7 & 28.6 & 54.9 & 30.0 & 78.7 & 33.9 & 39.1 & \gaincell{$\uparrow 3.8$} & \gaincell{$\uparrow 4.6$} \\
\multirow{-2}{*}{Causality} & Token & 601 & 1158 & 894 & 8576 & 978 & 8278 & 888 & 1568 & 1049 & 996 & 643 & 1924 & $+12.2\%$ & $+4.2\%$ \\
\midrule
 & Acc. & 77.1 & 85.8 & 30.0 & 86.7 & 22.2 & 83.3 & 27.6 & 62.6 & 26.8 & 83.0 & 32.0 & 39.0 & \gaincell{$\uparrow 3.1$} & \gaincell{$\uparrow 6.9$} \\
\multirow{-2}{*}{Minimality} & Token & 605 & 1204 & 920 & 9873 & 1052 & 9558 & 849 & 2528 & 1158 & 1275 & 575 & 1875 & $+14.6\%$ & $+21.9\%$ \\
\midrule
 & Acc. & 74.7 & 82.2 & 25.6 & 73.3 & 17.8 & 65.6 & 25.6 & 52.5 & 30.9 & 78.7 & 31.1 & 31.6 & \gaincell{$\uparrow 1.4$} & \losscell{$\downarrow 2.5$} \\
\multirow{-2}{*}{Separability} & Token & 564 & 1191 & 838 & 7039 & 923 & 6940 & 758 & 1940 & 1116 & 1189 & 1650 & 1280 & $+29.9\%$ & \savecell{$-9.3\%$} \\
\midrule
 & Acc. & 74.0 & 85.6 & 27.8 & 80.0 & 16.7 & 80.0 & 26.8 & 62.6 & 31.0 & 75.0 & 35.1 & 34.2 & \gaincell{$\uparrow 2.3$} & \gaincell{$\uparrow 3.1$} \\
\multirow{-2}{*}{Stability} & Token & 551 & 1105 & 881 & 7588 & 976 & 6497 & 785 & 1908 & 1205 & 941 & 781 & 1215 & $+15.0\%$ & \savecell{$-10.8\%$} \\
\arrayrulecolor{restcoral}\specialrule{0.8pt}{0pt}{0pt}\arrayrulecolor{black!30}
\rowcolor{restcoral!40}\multicolumn{16}{c}{\textbf{\textsc{REST (ours), composition of properties}}} \\
\arrayrulecolor{restcoral}\specialrule{0.8pt}{0pt}{0pt}\arrayrulecolor{black!30}
 & Acc. & 77.6 & 87.0 & 25.6 & 83.3 & 15.6 & 86.7 & 29.0 & 60.1 & 30.1 & 85.0 & 33.9 & 41.7 & \gaincell{$\uparrow 2.4$} & \gaincell{$\uparrow 7.5$} \\
\multirow{-2}{*}{Best pair} & Token & 604 & 1227 & 916 & 10281 & 978 & 9828 & 886 & 2629 & 1100 & 1259 & 613 & 2134 & $+13.2\%$ & $+26.7\%$ \\
\midrule
 & Acc. & 76.2 & 86.6 & 28.9 & 78.9 & 21.1 & 86.7 & 27.6 & 63.6 & 28.7 & 81.7 & 33.3 & 42.1 & \gaincell{$\uparrow 3.1$} & \gaincell{$\uparrow 6.8$} \\
\multirow{-2}{*}{All properties} & Token & 608 & 1253 & 900 & 11033 & 1022 & 11275 & 879 & 2673 & 1042 & 1373 & 569 & 2151 & $+11.5\%$ & $+37.9\%$ \\
\arrayrulecolor{restcoral}\specialrule{2.0pt}{2pt}{0pt}\arrayrulecolor{black}
\end{tabular}%
}
\end{table}

\Cref{tab:multi-agent-main} reports the full planner, refiner, and solver system.
Every setting improves accuracy over CE-only in the Light system. This trend continues for the Scaled system as well except separability, where it reduces the number of tokens instead.
In the Light system, REST with causality or minimality matches or surpasses the strongest frozen agent in \cref{tab:base-model} on all three mathematical benchmarks, while the CE-only system underperforms it on all three. Unlike in the Light single-agent setting, REST increases tokens in the multi-agent system (more in \cref{sec:analysis}). Beyond the best configuration, the $r=1$ sweep in \cref{tab:property-sweep} shows that REST does not rely on careful weight tuning, as both causality and minimality improve the average accuracy at every $\beta$.

\subsection{Analysis}
\label{sec:analysis}

\begin{wrapfigure}{R}{0.3\linewidth}
\centering
\vspace{-2.0\baselineskip}
\includegraphics[width=\linewidth]{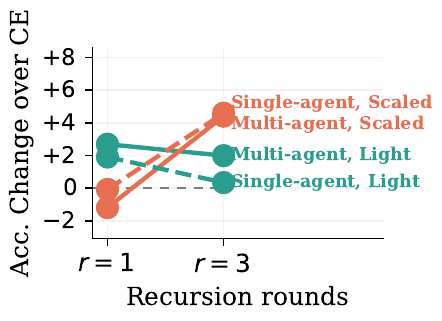}
\caption{Acc.\ Change vs $r$}
\label{fig:depth-scale}
\vspace{-1.0\baselineskip}
\end{wrapfigure}

\textbf{Scale and Recursion Depth.}
\Cref{fig:depth-scale} averages the change in accuracy over CE-only at each recursion round.
Additional rounds increase the gain of REST for the Scaled agents and reduce it for the Light agents, in both the single-agent and the multi-agent systems.
Larger agents therefore make use of additional rounds of latent exchange, whereas smaller agents obtain their gain from a single round.
\Cref{app:per-round} provides the full tables at both rounds.

\textbf{Auxiliary-Loss Baselines.}
\Cref{tab:baseline-comparison} compares REST against CODI and SIM-CoT adapted as loss terms.
REST retains an advantage over the two in accuracy on both systems, and in token overhead on the Light system.
Both baselines supervise the latent state against a text the producer would have written, whereas REST constrains a property of the thought.
CODI requires the consumer's hidden state to match the state observed under text, while causality requires only that the consumer predict the same continuation.
The comparison isolates the effect of an auxiliary term that targets the representation.

\begin{table}[h!]
\centering \small
\caption{REST against auxiliary-loss baselines, Light vs Scaled.}
\label{tab:baseline-comparison}
\resizebox{\linewidth}{!}{%
\arrayrulecolor{black!30}
\begin{tabular}{l|c|c>{\columncolor{tolGrey!40}}cc>{\columncolor{tolGrey!40}}cc>{\columncolor{tolGrey!40}}c|c>{\columncolor{tolGrey!40}}c|c>{\columncolor{tolGrey!40}}c|c>{\columncolor{tolGrey!40}}c|cc}
\arrayrulecolor{restcoral}\specialrule{2.0pt}{0pt}{2pt}\arrayrulecolor{black!30}
\textbf{Method} & \textbf{Metric} & \multicolumn{2}{c}{\textbf{Math500}} & \multicolumn{2}{c}{\textbf{AIME2025}} & \multicolumn{2}{c|}{\textbf{AIME2026}} & \multicolumn{2}{c|}{\textbf{GPQA-D}} & \multicolumn{2}{c|}{\textbf{MedQA}} & \multicolumn{2}{c|}{\textbf{Code Gen.}} & \multicolumn{2}{c}{\textbf{Avg.\ Change}} \\
\cmidrule(lr){3-4}\cmidrule(lr){5-6}\cmidrule(lr){7-8}\cmidrule(lr){9-10}\cmidrule(lr){11-12}\cmidrule(lr){13-14}\cmidrule(lr){15-16}
 & & \textbf{Light} & \textbf{Scaled} & \textbf{Light} & \textbf{Scaled} & \textbf{Light} & \textbf{Scaled} & \textbf{Light} & \textbf{Scaled} & \textbf{Light} & \textbf{Scaled} & \textbf{Light} & \textbf{Scaled} & \textbf{Light} & \textbf{Scaled} \\
\midrule
 & Acc. & 71.1 & 87.7 & 22.2 & 76.7 & 17.8 & 86.7 & 24.9 & 63.1 & 28.9 & 80.2 & 32.5 & 39.0 & \basetxt & \basetxt \\
\multirow{-2}{*}{CE only} & Token & 550 & 1018 & 849 & 9009 & 884 & 8488 & 749 & 2379 & 868 & 1137 & 604 & 1507 & \basetxt & \basetxt \\
\midrule
 & Acc. & 76.9 & 85.9 & 30.0 & 83.3 & 22.2 & 83.3 & 28.8 & 61.8 & 27.3 & 80.3 & 32.0 & 39.5 & \gaincell{$\uparrow 3.3$} & \gaincell{$\uparrow 0.1$} \\
\multirow{-2}{*}{CODI ($\beta=20$)} & Token & 598 & 1035 & 933 & 9344 & 987 & 9153 & 861 & 2293 & 1101 & 1097 & 682 & 1602 & $+14.6\%$ & $+4.2\%$ \\
\midrule
 & Acc. & 76.7 & 82.0 & 27.8 & 77.8 & 21.1 & 78.9 & 26.9 & 54.5 & 28.9 & 79.0 & 34.9 & 35.0 & \gaincell{$\uparrow 3.2$} & \losscell{$\downarrow 4.4$} \\
\multirow{-2}{*}{SIM-CoT ($\lambda_{\text{step}}=0.3$)} & Token & 598 & 728 & 929 & 9854 & 1004 & 8797 & 914 & 943 & 1202 & 624 & 549 & 1276 & $+15.4\%$ & \savecell{$-5.6\%$} \\
\midrule
 & Acc. & 77.8 & 88.0 & 27.8 & 83.3 & 22.2 & 90.0 & 28.6 & 67.2 & 30.0 & 81.3 & 33.9 & 42.3 & \gaincell{$\uparrow 3.8$} & \gaincell{$\uparrow 3.1$} \\
\multirow{-2}{*}{\textbf{REST (ours)}} & Token & 601 & 1044 & 894 & 10309 & 978 & 9956 & 888 & 2287 & 1049 & 1120 & 643 & 2036 & $+12.2\%$ & $+13.7\%$ \\
\arrayrulecolor{restcoral}\specialrule{2.0pt}{2pt}{0pt}\arrayrulecolor{black}
\end{tabular}%
}

\vspace{2pt}
\end{table}

\begin{figure}[h!]
\centering
\includegraphics[width=\linewidth]{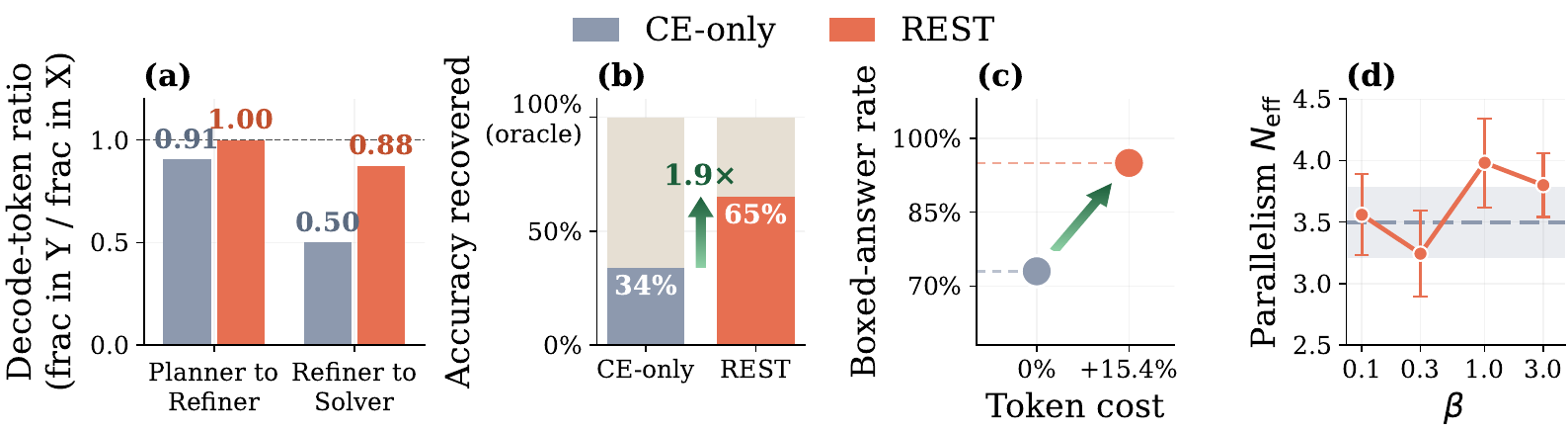}
\caption{\textbf{REST against CE-only.} (a) Decoded thoughts compared to output / input. (b) Replacing oracle text with thought. (c) Answer rate vs token usage. (d) Effective Superposition.}
\label{fig:analysis-panels}
\end{figure}

\begin{figure}[!h]
    \centering
    \includegraphics[width=\linewidth]{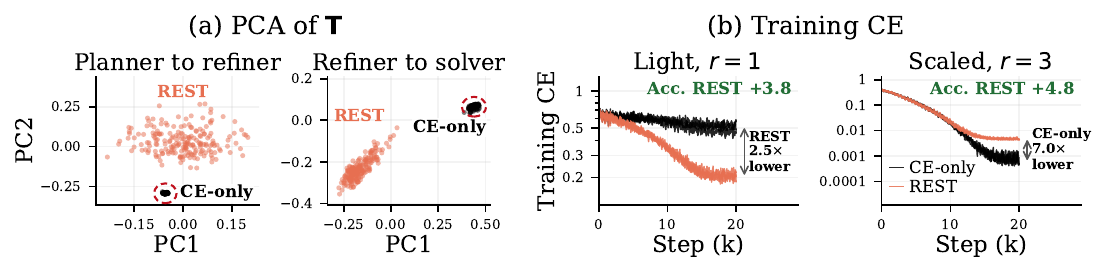}
    \caption{\textbf{REST against CE-only.} (a) PCA for $\mathbf{T}$. CE thoughts collapse into dense clusters, while REST spreads thoughts apart. (b) Training CE under causality. More details in Appendix~\ref{app:results}.}
    \label{fig:separability-pca}
\end{figure}

\textbf{Encoding Producer Output.}
\Cref{fig:analysis-panels}(a) measures how much of a thought's decoded content belongs to the producer's own output, against how much belongs to the prompt it received.
Under REST the thought encodes the producer's plan, or the refined plan. CE has relatively more content from the input encoded, which might be irrelevant to the next agent. This is exactly the behavior targeted by \cref{eq:loss-minimality}.
The effect grows as the system moves further along the chain of latent thoughts.

\textbf{Preserved Information.}
\Cref{fig:analysis-panels}(b) compares the performance of the solver when given the oracle refined plan compared to the thought representation.
REST reaches more of this accuracy than CE. Therefore, REST contains relatively more useful information to the solver than CE.

\textbf{Token Cost.} This information comes at a token cost.
Although we observe an increase in tokens when using REST (+15.4\% on average), this leads to the model being more likely to generate a final answer. We analyze the boxed-answer rate which is the percentage of examples the model converges on an answer. As illustrated in \cref{fig:analysis-panels}(c), REST reaches a 95\% boxed-answer rate compared to 73\% for CE. This indicates that the additional tokens are spent towards reaching a final answer.

\textbf{Superposition.}
\Cref{fig:analysis-panels}(d) measures how many candidate reasoning paths a thought supports at once, a metric we adapt from \citet{deng2025latentsft} (\cref{eq:neff}).
It decodes $\mathbf{T}$ through the consumer's vocabulary, scores each candidate refined plan by the decoded tokens it receives, and reports the exponentiated entropy of the resulting posterior over candidates.
Higher $N_{\text{eff}}$ indicates higher effective superposition.
REST maintains superposition and slightly increases with increasing the weight $\beta$.

\textbf{CE Thoughts.}
\Cref{fig:separability-pca}(a) projects $\mathbf{T}$ for 200 random samples. Thoughts from CE are gathered into dense clusters, hence semantically distinct questions would have close representations that would confuse a consumer agent. After inspection of the CE-only clusters, the texts are not related to each other (\cref{app:results}). Therefore, the clustering represents collapse with no shared content.
REST penalizes this behavior through separability across a continuous region.
This is the failure in \cref{sec:price}, where difficult examples would be indistinguishable from a collapse. Lower training CE does not achieve better accuracy (\cref{fig:separability-pca}(b)).
CE-only plateaus above causality for Light, where it does not memorize, while for Scaled it memorizes the training set and still underperforms REST in accuracy.
Therefore, only minimizing CE would make the thought representation underconstrained.

\section{Related Work}
\label{sec:related}

\begin{wrapfigure}{R}{0.35\linewidth}
\centering
\vspace{-2.0\baselineskip}
\includegraphics[width=\linewidth]{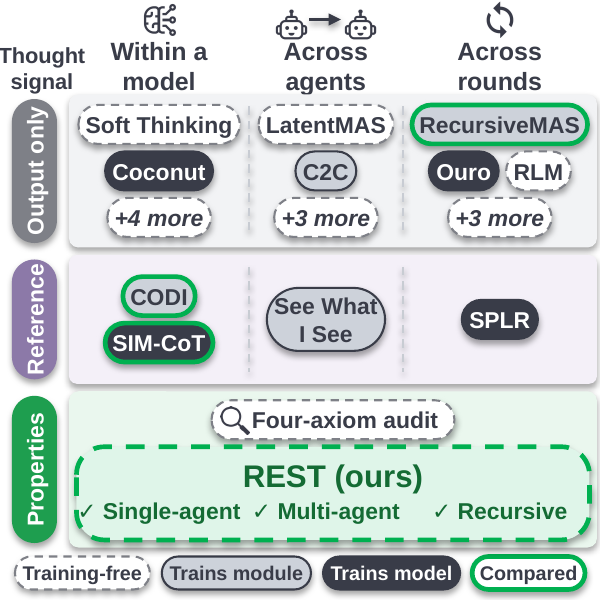}
\caption{Positioning of REST.}
\label{fig:related}
\vspace{-1.0\baselineskip}
\end{wrapfigure}

\textbf{Reasoning in Latent Space.}
Recent work treats the continuous latent space of an LLM as the medium of reasoning rather than the decoded text \citep{chen2026latentcotsurvey, zhu2025latentreasoningsurvey}.
Within a single model, hidden states re-enter the input stream as the next reasoning step \citep{zhang2025softthinking, tan2025latentcompression, butt2026soft, sheshanarayana2026thinking, li2026persistentlatent}.
Across models, hidden representations and KV caches carry information between agents in place of text \citep{liu2024cacheaugmentation, xu2025softchain, zheng2025thoughtcommunication, fu2026cachetocache, ye2025crosscontextkv, zou2026latentcollaboration}.
A further line reuses the same computation over several rounds to deepen latent reasoning \citep{geiping2025recurrentdepth, zhu2026loopedlm, jolicoeurmartineau2025tinyrecursion, bae2025recursionmixture}, which \citet{zou2026recursivemultiagentsystems} raise to the system level by looping heterogeneous agents through a learned link.
Most of these systems rely on the cross-entropy of the final decoded answer for training, and typically leave the latent thoughts themselves unconstrained.

\textbf{Supervising Latent Thoughts.}
The signal that trains a latent state is the answer following the latent steps \citep{hao2026coconut}, a self-distilled explicit chain of thought \citep{shen2025selfdistillcot}, or an auxiliary decoder aligning every latent step with its explicit counterpart \citep{wei2025simcot}.
Each of these signals is defined against a reference the latent state must reproduce, either a text or another model's own cache \citep{chen2026iseeknowi}.
Audits of the resulting representations report high accuracy without latent reasoning, collapsed candidate solutions, and unstable trajectories \citep{cui2026how, rizvi-martel2026the, sahoo2026when}.
A recent framework states four properties a valid thought representation should satisfy, and audits existing methods against them \citep{seddik2026formalizinglatentthoughtsaxioms}.
REST is the first to turn the theoretically motivated properties of a valid thought representation into loss terms for a latent recursive LLM system (Figure~\ref{fig:related}). Additional related work in \cref{app:related}.

\section{Conclusion}
\label{sec:conclusion}

We introduce REST, a training objective that supervises the thoughts of latent recursive LLM systems through four properties of a valid thought representation.
Our objective translates causality, minimality, separability, and stability into differentiable loss terms added to CE.
These terms require no architectural changes or added parameters at inference.
Theoretical analyses show that the failures these properties prevent lead to a lower probability of the correct answer.
Empirically, thoughts trained only with CE collapse across distinct questions and retain irrelevant information.
Evaluations across mathematical, scientific, medical, and code-generation benchmarks demonstrate that REST improves accuracy over CE in both single- and multi-agent settings.
Overall, REST establishes a principled way to supervise latent thoughts beyond the final decoded answer.

\textbf{Implications.}
The improvements from an auxiliary term in both settings indicate that CE-only insufficiently constrains latent thoughts, and that latent recursive systems benefit from supervising the thought representation.
The property that yields the largest gain further depends on the consumer, as causality leads when the thought passes to a different agent and minimality leads when a model recurs on its own thought.
Since the collapse and input content observed under CE are not reflected in the CE objective, final-answer accuracy provides an incomplete view of latent systems, and inspection of the thought representation should accompany it.
Finally, decoded REST thoughts contain relatively more of the producer's output than of its prompt, suggesting that property supervision can make latent communication easier to audit as agents increasingly communicate outside of text.

\textbf{Limitations.}
Due to the substantial computational cost, we do not sweep all combinations of properties and hyperparameters, as the number of required runs grows multiplicatively with each addition.
Nevertheless, causality and minimality improve over CE at every weight we evaluate (\cref{tab:property-sweep}). A cheap approximation to Stability is opted for  due to compute limitations but still leads to performance gain. Another limitation is that REST trains only the outer link while the base LLMs and the inner link remain frozen, a design choice to isolate the contribution of each property term.
Extending REST to the training of the inner link or the agent is left to future work.

\bibliography{references}

\appendix
\newpage
\etocdepthtag.toc{app}
\etocsettagdepth{main}{none}
\etocsettagdepth{app}{subsubsection}
\etocsetnexttocdepth{subsubsection}
\tableofcontents
\newpage

\section{Theoretical Analysis}
\label{app:theory}

\subsection{From Axioms to Differentiable Surrogates}
\label{app:theory:diff}

This subsection derives each differentiable loss term used in our work from the functional properties of \citep{seddik2026formalizinglatentthoughtsaxioms}. Starting from the definition of each property, we arrive at the loss term used in \cref{sec:method}.

\subsubsection{Causality}
\label{app:theory:causality}

Chaining the per-position distributions of \cref{sec:thoughts} yields the distribution that the
consumer assigns to a complete target,
\begin{equation}\label{eq:seq-law}
    P(v \mid Z) = \prod_{t=1}^{n} p\big(v_t \mid v_{<t}, Z\big),
\end{equation}
where truncating the product at $t-1$ gives the distribution $P(v_{<t} \mid Z)$ it assigns to a
prefix.
Only $p(\cdot \mid v_{<t}, \mathbf{T})$ and $P(\cdot \mid \mathbf{T})$ depend on $\psi$, since a
textual transfer consists of embeddings independent of $\psi$ and every $\theta_i$ and
$\mathcal{R}_{\mathrm{in}}$ are frozen.

In the notation of \cref{eq:seq-law}, the Causality axiom requires that both transfers induce the
same distribution over complete targets, and the divergence
\begin{equation}\label{eq:causality-axiom}
    \KL\big(P(\cdot \mid E(u)) \,\big\|\, P(\cdot \mid \mathbf{T})\big)
\end{equation}
quantifies the extent to which they do not.
Evaluating \cref{eq:causality-axiom} ranges over every continuation in $\mathcal{V}^{n}$ and
therefore requires autoregressive sampling, whereas one teacher-forced pass yields a single
continuation at all $n$ positions simultaneously.
\Cref{eq:loss-causality} is \cref{eq:causality-axiom} in differentiable form, and two differences
separate them.
The surrogate averages a per-position divergence, and it takes its prefixes from the dataset.
This is contrary to a divergence over sequences and to the continuations a textual transfer would
produce.
Since we can factorize the divergence, the per-position average does not introduce error.

\begin{proposition}[\textbf{Exact factorization}]\label{prop:causality-factorization}
For every transfer,
\begin{equation}\label{eq:causality-factorization}
    \KL\big(P(\cdot \mid E(u)) \,\big\|\, P(\cdot \mid \mathbf{T})\big)
    = \sum_{t=1}^{n} \mathbb{E}_{v_{<t} \sim P(\cdot \mid E(u))}
      \Big[ \KL\big( p(\cdot \mid v_{<t}, E(u)) \,\big\|\, p(\cdot \mid v_{<t}, \mathbf{T}) \big) \Big].
\end{equation}
\end{proposition}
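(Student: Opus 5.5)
This is the chain rule for KL divergence applied to the autoregressive factorization \cref{eq:seq-law}. Write $P = P(\cdot \mid E(u))$ and $Q = P(\cdot \mid \mathbf{T})$, both distributions on $\mathcal{V}^{n}$, and let $p_t$, $q_t$ denote their conditionals $p(\cdot \mid v_{<t}, E(u))$ and $p(\cdot \mid v_{<t}, \mathbf{T})$.

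First, write the sequence-level divergence as an expectation of a log-ratio:
\[
\KL(P\|Q) = \mathbb{E}_{v\sim P}\bigl[\log P(v) - \log Q(v)\bigr].
\]
Substituting the product form of \cref{eq:seq-law} for both $P$ and $Q$ turns the log-ratio into a telescoped sum,
\[
\sum_{t=1}^{n}\log\frac{p_t(v_t)}{q_t(v_t)}.
\]
By linearity of expectation, the divergence becomes a sum over $t$ of $\mathbb{E}_{v\sim P}\bigl[\log p_t(v_t)/q_t(v_t)\bigr]$.

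Second, for each fixed $t$, note that the $t$-th summand depends only on $v_{\le t}$. Marginalize out $v_{>t}$, then apply the tower property by conditioning on $v_{<t}$. Under $P$, the conditional law of $v_t$ given $v_{<t}$ is exactly $p_t$, by the definition of $P$ through \cref{eq:seq-law}. So the inner expectation over $v_t$ equals $\KL(p_t\|q_t)$, and the outer expectation is over $v_{<t}\sim P$ (the prefix marginal obtained by truncating the product). This is precisely the right-hand side of \cref{eq:causality-factorization}.

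The only real obstacle is well-definedness and support. I would first handle the case where $Q$ fails to dominate $P$.
\begin{itemize}
\item If $P \not\ll Q$, then some $v$ with $P(v)>0$ has $Q(v)=0$. Hence some $q_t(v_t)=0$ while $p_t(v_t)>0$ at a prefix of positive $P$-probability, so both sides equal $+\infty$.
\item Otherwise every term is finite. Since $\mathcal{V}^{n}$ is a finite set, exchanging the sum and the expectation needs no integrability argument.
\end{itemize}
A minor remark: because the softmax outputs are strictly positive, $Q$ has full support in practice. The support case is then vacuous, and the identity holds with every quantity finite.
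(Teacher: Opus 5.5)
Your proposal is correct and follows essentially the same route as the paper's proof. Both expand the sequence-level divergence, turn the log of the product in \cref{eq:seq-law} into a sum of per-position log-ratios, and exchange the finite sums. Both then marginalize out $v_{>t}$ and split the prefix to recognize the inner sum over $v_t$ as the per-position divergence and the outer sum as the expectation over prefixes. Your support discussion is a small addition the paper omits, and it is vacuous here since softmax probabilities are strictly positive; also, the sum of log-ratios is not a telescoping sum, just a sum.
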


\begin{proof}[Proof of \cref{prop:causality-factorization}]
Expanding the divergence over complete targets,
\begin{align*}
    \KL\big(P(\cdot \mid E(u)) \,\big\|\, P(\cdot \mid \mathbf{T})\big)
    &= \sum_{v \in \mathcal{V}^{n}} P\big(v \mid E(u)\big)
       \log \frac{P\big(v \mid E(u)\big)}{P\big(v \mid \mathbf{T}\big)} \\
    &\stackrel{\eqref{eq:seq-law}}{=} \sum_{v \in \mathcal{V}^{n}} P\big(v \mid E(u)\big)
       \sum_{t=1}^{n} \log \frac{p\big(v_t \mid v_{<t}, E(u)\big)}{p\big(v_t \mid v_{<t}, \mathbf{T}\big)} ,
\end{align*}
where the second line writes the logarithm of a product as a sum of logarithms.
Both sums are finite, hence exchanging them gives
\begin{equation}\label{eq:causality-exchanged}
    \KL\big(P(\cdot \mid E(u)) \,\big\|\, P(\cdot \mid \mathbf{T})\big)
    = \sum_{t=1}^{n} \; \sum_{v \in \mathcal{V}^{n}} P\big(v \mid E(u)\big)
      \log \frac{p\big(v_t \mid v_{<t}, E(u)\big)}{p\big(v_t \mid v_{<t}, \mathbf{T}\big)} .
\end{equation}
Fix a position $t$ and evaluate its inner sum.
Its summand depends on $v$ only through $v_{\leq t}$.
Therefore, summing over every continuation past $t$ reduces it to a sum over $\mathcal{V}^{t}$.
The prefix factorization
\begin{equation}\label{eq:causality-prefix-split}
    P\big( v_{\leq t} \mid E(u) \big)
    = P\big( v_{<t} \mid E(u) \big) \, p\big( v_t \mid v_{<t}, E(u) \big)
\end{equation}
then splits that sum over the prefix $v_{<t}$ and the final token $v_t = w$,
\begin{align*}
    &\sum_{v \in \mathcal{V}^{n}} P\big(v \mid E(u)\big)
      \log \frac{p\big(v_t \mid v_{<t}, E(u)\big)}{p\big(v_t \mid v_{<t}, \mathbf{T}\big)} \\
    &\qquad = \sum_{v_{\leq t} \in \mathcal{V}^{t}} P\big(v_{\leq t} \mid E(u)\big)
      \log \frac{p\big(v_t \mid v_{<t}, E(u)\big)}{p\big(v_t \mid v_{<t}, \mathbf{T}\big)} \\
    &\qquad = \sum_{v_{<t} \in \mathcal{V}^{t-1}} P\big(v_{<t} \mid E(u)\big)
      \sum_{w \in \mathcal{V}} p\big(w \mid v_{<t}, E(u)\big)
      \log \frac{p\big(w \mid v_{<t}, E(u)\big)}{p\big(w \mid v_{<t}, \mathbf{T}\big)} \\
    &\qquad = \sum_{v_{<t} \in \mathcal{V}^{t-1}} P\big(v_{<t} \mid E(u)\big) \,
      \KL\big( p(\cdot \mid v_{<t}, E(u)) \,\big\|\, p(\cdot \mid v_{<t}, \mathbf{T}) \big) \\
    &\qquad = \mathbb{E}_{v_{<t} \sim P(\cdot \mid E(u))}
      \Big[ \KL\big( p(\cdot \mid v_{<t}, E(u)) \,\big\|\, p(\cdot \mid v_{<t}, \mathbf{T}) \big) \Big] ,
\end{align*}
where the third equality recognizes the inner sum over $w$ as a divergence.
The fourth recognizes the outer sum as an expectation over the prefixes a textual transfer produces.
Substituting this equality into \cref{eq:causality-exchanged} at every position yields
\cref{eq:causality-factorization}.
\end{proof}

The second difference therefore remains, which is the distribution the prefixes $v_{<t}$ of
\cref{eq:causality-factorization} are sampled from.
This is the only step at which the surrogate is not the exact property, since training samples those
prefixes from the dataset rather than from the consumer's own generations.
Two conditions bound its effect.

\begin{assumption}[\textbf{Bounded disagreement}]\label{ass:bounded-disagreement}
There exists a finite $B$ with
\begin{equation}\label{eq:bounded-disagreement}
    \Big| \log p\big( w \mid v_{<t}, E(u) \big) - \log p\big( w \mid v_{<t}, \mathbf{T} \big) \Big|
    \;\leq\; B
\end{equation}
at every position $t$, every prefix $v_{<t}$, every token $w \in \mathcal{V}$, and every $\psi$ in
the optimization domain.
\end{assumption}

Bounding the consumer's logits $z$ in absolute value by $M$ lower-bounds every next-token
probability,
\begin{equation*}
    p\big( w \mid v_{<t}, Z \big)
    = \frac{\exp( z_w )}{\sum_{w' \in \mathcal{V}} \exp( z_{w'} )}
    \;\geq\; \frac{e^{-M}}{|\mathcal{V}| \, e^{M}}
    \;=\; \frac{e^{-2M}}{|\mathcal{V}|} ,
\end{equation*}
hence every log-probability lies in an interval of length
\begin{equation*}
    \log 1 - \log \frac{e^{-2M}}{|\mathcal{V}|} = 2M + \log|\mathcal{V}| .
\end{equation*}
Every difference of two of them therefore satisfies
\begin{equation*}
    \Big| \log p\big( w \mid v_{<t}, E(u) \big) - \log p\big( w \mid v_{<t}, \mathbf{T} \big) \Big|
    \;\leq\; 2M + \log|\mathcal{V}| ,
\end{equation*}
at which \cref{ass:bounded-disagreement} holds with $B = 2M + \log|\mathcal{V}|$.\footnote{This
constant is the worst case over the whole vocabulary.}
\Cref{eq:bounded-disagreement} bounds the corresponding divergence by the same constant,
\begin{equation}\label{eq:bounded-disagreement-kl}
    \KL\big( p(\cdot \mid v_{<t}, E(u)) \,\big\|\, p(\cdot \mid v_{<t}, \mathbf{T}) \big)
    = \sum_{w} p\big( w \mid v_{<t}, E(u) \big)
      \log \frac{p( w \mid v_{<t}, E(u) )}{p( w \mid v_{<t}, \mathbf{T} )}
    \;\leq\; B ,
\end{equation}
where the inequality averages \cref{eq:bounded-disagreement} under
$p(\cdot \mid v_{<t}, E(u))$.

\begin{assumption}[\textbf{Reference faithfulness}]\label{ass:reference-faithfulness}
Let $D$ denote the distribution from which the training data draws the target $v$, conditioned on the same
$E_{\mathrm{pre}}$ and $E_{\mathrm{post}}$ that the consumer receives in \cref{def:transfer}.
There exists $\varepsilon \in [0, 1]$ with $\TV\big(D, P(\cdot \mid E(u))\big) \leq \varepsilon$.
\end{assumption}

\Cref{ass:reference-faithfulness} requires that the consumer, once given the producer's text
explicitly, reproduce what the training data records.
The training trajectory in general comes from a different model than the frozen consumer, and
$\varepsilon$ therefore quantifies a distillation gap between the two.

We here state a standard result on averaging a bounded quantity under two distributions, for
completeness.
It will later let us replace the distribution that weights each position.

\begin{lemma}[\textbf{Total variation controls a bounded average}]\label{lem:tv-averaging}
Averaging a quantity whose range fits in an interval of length $B$ under two distributions $\pi$ and
$\pi'$ on a common finite space alters the value by at most $B$ times their total variation
distance,
\begin{equation}\label{eq:tv-averaging}
    \big| \mathbb{E}_{\pi}[f] - \mathbb{E}_{\pi'}[f] \big|
    \;\leq\; B \cdot \TV(\pi, \pi') .
\end{equation}
\end{lemma}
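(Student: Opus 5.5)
The plan is to reduce \cref{lem:tv-averaging} to a centered sum over the common finite space $\Omega$, and to use the convention $\TV(\pi,\pi') = \tfrac{1}{2}\sum_{x\in\Omega} |\pi(x)-\pi'(x)|$. By hypothesis the range of $f$ lies in some interval $[a, a+B]$. First, write the difference of expectations as a single sum,
\begin{equation*}
    \mathbb{E}_{\pi}[f] - \mathbb{E}_{\pi'}[f] = \sum_{x \in \Omega} \big(\pi(x) - \pi'(x)\big) f(x) .
\end{equation*}
Second, use that $\sum_x (\pi(x)-\pi'(x)) = 0$. This lets us replace $f$ by $f - c$ for any constant $c$ without changing the sum. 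We choose the midpoint $c = a + B/2$, so that $|f(x) - c| \leq B/2$ for every $x$.

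Third, apply the triangle inequality:
\begin{equation*}
    \big| \mathbb{E}_{\pi}[f] - \mathbb{E}_{\pi'}[f] \big| \leq \frac{B}{2} \sum_{x\in\Omega} |\pi(x)-\pi'(x)| = B \cdot \TV(\pi,\pi') ,
\end{equation*}
which is \cref{eq:tv-averaging}.

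An equivalent route splits $\Omega$ into $S^{+}=\{\pi>\pi'\}$ and its complement. The positive and negative parts of $\pi-\pi'$ each have mass $\TV(\pi,\pi')$. On $S^{+}$ we bound $f$ by $a+B$, and on the complement by $a$. The $a$ terms cancel, leaving $B\cdot\TV$. Applying the same argument to $-f$ gives the bound in the other direction.

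There is no real obstacle. The only point needing care is the centering step. Bounding $|f|\le B$ directly, without centering, would give only $2B\cdot\TV$. The midpoint choice, equivalently the cancellation of the constant $a$, is what yields the sharp constant $B$. It also explains why the hypothesis is stated as a range of length $B$ rather than as a bound on $|f|$.

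For the later application, note that \cref{eq:bounded-disagreement-kl} places the per-position divergence in $[0,B]$, which is an interval of length $B$. The lemma then applies directly with $\pi = D$ and $\pi' = P(\cdot \mid E(u))$, marginalized to prefixes. Marginalizing does not increase total variation, so \cref{ass:reference-faithfulness} carries over to the prefix marginals.
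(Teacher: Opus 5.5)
Your proof is correct and matches the paper's argument step for step: center $f$ at the midpoint of its range, subtract that constant using $\sum_x(\pi(x)-\pi'(x))=0$, and apply the triangle inequality with $\TV(\pi,\pi')=\tfrac12\sum_x|\pi(x)-\pi'(x)|$. Your alternative split over $\{\pi>\pi'\}$ is also valid, but the paper does not use it.
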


\begin{proof}[Proof of \cref{lem:tv-averaging}]
Let $c$ denote the midpoint of the interval containing $f$'s range, at which
\begin{equation*}
    \big| f - c \big| \;\leq\; \frac{B}{2} .
\end{equation*}
Then
\begin{equation*}
    \big| \mathbb{E}_{\pi}[f] - \mathbb{E}_{\pi'}[f] \big|
    = \Big| \sum (\pi - \pi')(f - c) \Big|
    \;\leq\; \frac{B}{2} \sum |\pi - \pi'|
    \;=\; B \cdot \TV(\pi, \pi') ,
\end{equation*}
where subtracting $c$ leaves the difference unchanged because $\sum(\pi - \pi') = 0$.
\end{proof}

\begin{proposition}[\textbf{The surrogate tracks the axiom}]\label{prop:causality-tracking}
Under \cref{ass:bounded-disagreement,ass:reference-faithfulness}, for every $\psi$ in the
optimization domain,
the surrogate averaged over the training target,
\begin{equation*}
    \mathbb{E}_{v \sim D}\big[ \mathcal{L}_{\mathrm{caus}}(\psi) \big]
    = \mathbb{E}_{v \sim D}\bigg[ \frac{1}{n} \sum_{t=1}^{n}
      \KL\big( p(\cdot \mid v_{<t}, E(u)) \,\big\|\, p(\cdot \mid v_{<t}, \mathbf{T}) \big) \bigg] ,
\end{equation*}
satisfies
\begin{equation}\label{eq:causality-tracking}
    \Big|\, \mathbb{E}_{v \sim D}\big[ \mathcal{L}_{\mathrm{caus}}(\psi) \big]
    \;-\; \frac{1}{n}\, \KL\big(P(\cdot \mid E(u)) \,\big\|\, P(\cdot \mid \mathbf{T})\big) \Big|
    \;\leq\; B \varepsilon .
\end{equation}
\end{proposition}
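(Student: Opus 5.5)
The plan is to rewrite both sides of \cref{eq:causality-tracking} as expectations of one bounded function under two distributions on $\mathcal{V}^{n}$, and then apply \cref{lem:tv-averaging}. Define, for a complete target $v \in \mathcal{V}^{n}$,
\begin{equation*}
    f(v) = \frac{1}{n} \sum_{t=1}^{n}
      \KL\big( p(\cdot \mid v_{<t}, E(u)) \,\big\|\, p(\cdot \mid v_{<t}, \mathbf{T}) \big) ,
\end{equation*}
so that $\mathbb{E}_{v \sim D}[\mathcal{L}_{\mathrm{caus}}(\psi)] = \mathbb{E}_{D}[f]$ by definition of \cref{eq:loss-causality}.

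The first step is to identify the right-hand term with $\mathbb{E}_{P(\cdot \mid E(u))}[f]$. By \cref{prop:causality-factorization}, the sequence divergence equals $\sum_t \mathbb{E}_{v_{<t} \sim P(\cdot \mid E(u))}[\KL_t(v_{<t})]$. Each summand depends only on the prefix $v_{<t}$. The marginal of $P(\cdot \mid E(u))$ on $\mathcal{V}^{t-1}$ is exactly the truncated product $P(v_{<t} \mid E(u))$ noted after \cref{eq:seq-law}, since the remaining conditionals sum to one. Hence each prefix expectation can be replaced by an expectation over full $v \sim P(\cdot \mid E(u))$. Linearity then gives $\frac{1}{n}\KL(P(\cdot\mid E(u))\|P(\cdot\mid \mathbf{T})) = \mathbb{E}_{P(\cdot \mid E(u))}[f]$.

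The second step bounds the range of $f$. Each per-position divergence is nonnegative, and by \cref{eq:bounded-disagreement-kl} it is at most $B$ under \cref{ass:bounded-disagreement}. Their average $f$ therefore lies in $[0, B]$, an interval of length $B$. Both $D$ and $P(\cdot \mid E(u))$ are distributions on the finite space $\mathcal{V}^{n}$, given the shared $E_{\mathrm{pre}}, E_{\mathrm{post}}$. \Cref{lem:tv-averaging} then gives
\begin{equation*}
    \big| \mathbb{E}_{D}[f] - \mathbb{E}_{P(\cdot\mid E(u))}[f] \big| \leq B \cdot \TV\big(D, P(\cdot \mid E(u))\big) \leq B\varepsilon ,
\end{equation*}
where the last inequality is \cref{ass:reference-faithfulness}. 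This is \cref{eq:causality-tracking}, and it holds for every $\psi$ because $B$ is uniform over the optimization domain.

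The main obstacle is the bookkeeping in the first step. Note that $D$ is supported on targets of length $n$ only if the target length is fixed, so I would treat $n$ as fixed, or condition on it, so that both measures live on the same $\mathcal{V}^{n}$. I would also check that marginalizing the autoregressive law to prefixes is legitimate, which is a finite sum of conditionals equal to one at each later position. Everything else is a direct citation of earlier results.
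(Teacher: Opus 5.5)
Your proof is correct and gives the same constant $B\varepsilon$. It differs from the paper's proof only in the granularity at which \cref{lem:tv-averaging} is applied.

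The paper keeps the per-position divergences $\Delta_t(v_{<t})$ separate. For each $t$ it pushes $D$ and $P(\cdot \mid E(u))$ forward to prefix distributions on $\mathcal{V}^{t-1}$ through the projection $v \mapsto v_{<t}$, and uses the data-processing inequality to bound their total variation by $\varepsilon$. It then applies the lemma to $\Delta_t$ to get $B\varepsilon$ at each position, sums with the triangle inequality to $nB\varepsilon$, and divides by $n$.

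You instead lift every $\Delta_t$ to a function on $\mathcal{V}^{n}$, average them into one $f$ with range in $[0,B]$, and apply the lemma once to $D$ and $P(\cdot \mid E(u))$ themselves. This removes both the data-processing step and the triangle inequality. The marginalization fact you flag is the only extra bookkeeping, and the paper relies on it implicitly too when it identifies the prefix weighting with the pushforward of the full-sequence law.

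The paper's per-position route buys a slightly finer intermediate bound, $\frac{B}{n}\sum_{t=1}^{n}\TV\big(D \circ \pi_t^{-1}, P(\cdot \mid E(u)) \circ \pi_t^{-1}\big)$ with $\pi_t(v) = v_{<t}$. This can be smaller than $B\varepsilon$ when short prefixes are better matched than full targets. It also makes visible the paper's remark that the $1/n$ normalization cancels the $n$ accumulated per-position errors. Since the paper immediately relaxes that bound to $B\varepsilon$, the two arguments prove the same statement, and yours is the shorter one.
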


\begin{proof}[Proof of \cref{prop:causality-tracking}]
Denote the divergence at a single position by
\begin{equation*}
    \Delta_t(v_{<t})
    = \KL\big( p(\cdot \mid v_{<t}, E(u)) \,\big\|\, p(\cdot \mid v_{<t}, \mathbf{T}) \big)
    \;\in\; [0, B] ,
\end{equation*}
where the lower end holds of every divergence and the upper end follows from
\cref{eq:bounded-disagreement-kl}. In this notation \cref{eq:loss-causality} expands to
\begin{equation*}
    \mathcal{L}_{\mathrm{caus}}(\psi)
    = \frac{1}{n} \sum_{t=1}^{n}
      \KL\big( p(\cdot \mid v_{<t}, E(u)) \,\big\|\, p(\cdot \mid v_{<t}, \mathbf{T}) \big)
    = \frac{1}{n} \sum_{t=1}^{n} \Delta_t(v_{<t}) ,
\end{equation*}
and \cref{prop:causality-factorization} becomes
\begin{equation*}
    \frac{1}{n}\, \KL\big(P(\cdot \mid E(u)) \,\big\|\, P(\cdot \mid \mathbf{T})\big)
    = \frac{1}{n} \sum_{t=1}^{n} \mathbb{E}_{P(\cdot \mid E(u))}\big[ \Delta_t \big] ,
\end{equation*}
where both are expressed through the same $\Delta_t$ and differ only in how each position is
weighted. The two weightings are the distributions of $v_{<t}$ under $D$ and under $P(\cdot \mid E(u))$, each
obtained through the projection
\begin{equation*}
    \pi_t : \mathcal{V}^{n} \to \mathcal{V}^{t-1} ,
    \qquad
    \pi_t(v) = v_{<t} .
\end{equation*}
Projecting cannot increase total variation distance,
\begin{align*}
    &\TV\Big( D \circ \pi_t^{-1}, \; P(\cdot \mid E(u)) \circ \pi_t^{-1} \Big) \\
    &\qquad \;\leq\; \TV\big( D, \, P(\cdot \mid E(u)) \big)
    && \text{data processing} \\
    &\qquad \;\leq\; \varepsilon
    && \text{\cref{ass:reference-faithfulness}} ,
\end{align*}
and \cref{lem:tv-averaging} with $f = \Delta_t$ yields
\begin{equation*}
    \Big| \mathbb{E}_{D}\big[ \Delta_t \big]
        - \mathbb{E}_{P(\cdot \mid E(u))}\big[ \Delta_t \big] \Big|
    \;\leq\; B \varepsilon
    \qquad \text{at every position } t .
\end{equation*}

Summing the $n$ positions and applying the triangle inequality,
\begin{align*}
    &\Big| \, n\, \mathbb{E}_{v \sim D}\big[ \mathcal{L}_{\mathrm{caus}}(\psi) \big]
      - \KL\big(P(\cdot \mid E(u)) \,\big\|\, P(\cdot \mid \mathbf{T})\big) \Big| \\
    &\qquad = \bigg| \sum_{t=1}^{n} \Big( \mathbb{E}_{D}\big[ \Delta_t \big]
      - \mathbb{E}_{P(\cdot \mid E(u))}\big[ \Delta_t \big] \Big) \bigg| \\
    &\qquad \leq \sum_{t=1}^{n} \Big| \mathbb{E}_{D}\big[ \Delta_t \big]
      - \mathbb{E}_{P(\cdot \mid E(u))}\big[ \Delta_t \big] \Big|
    \;\leq\; n B \varepsilon ,
\end{align*}
and dividing by $n$ yields \cref{eq:causality-tracking}.
\end{proof}

The division by $n$ in \cref{eq:loss-causality} renders \cref{eq:causality-tracking} independent of
the target length.
The $n$ per-position bounds accumulate to $n B \varepsilon$, and the normalization then removes the
factor.
Within a transfer that division is a constant free of $\psi$, hence the minimizer is the same either
way.
\Cref{prop:causality-tracking} compares the two quantities, and the following corollary establishes
that minimizing one minimizes the other up to a bounded excess.

\begin{corollary}[\textbf{Surrogate minimizers are near-optimal for the axiom}]\label{cor:causality-minimizer}
Under \cref{ass:bounded-disagreement,ass:reference-faithfulness}, let $\hat{\psi}$ minimize
$\mathbb{E}_{v \sim D}[ \mathcal{L}_{\mathrm{caus}}(\psi) ]$ over the optimization domain and let
$\psi^{\star}$ minimize \cref{eq:causality-axiom} over the same domain.
Then \cref{eq:causality-axiom} at $\hat{\psi}$ exceeds its minimum by at most $2 n B \varepsilon$.
\end{corollary}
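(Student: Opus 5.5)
The argument is the standard two-sided comparison of a surrogate with its target, using \cref{prop:causality-tracking} as the only ingredient. Write $S(\psi) = \mathbb{E}_{v \sim D}[\mathcal{L}_{\mathrm{caus}}(\psi)]$ for the surrogate and $F(\psi) = \KL\big(P(\cdot \mid E(u)) \,\|\, P(\cdot \mid \mathbf{T})\big)$ for the axiom divergence \cref{eq:causality-axiom}, where $\mathbf{T}$ depends on $\psi$. \Cref{prop:causality-tracking} holds uniformly over the optimization domain, so it gives $|S(\psi) - F(\psi)/n| \le B\varepsilon$ for every $\psi$. This applies in particular at both $\hat{\psi}$ and $\psi^{\star}$. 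The uniformity matters, because $\hat{\psi}$ is data-dependent, and a bound at fixed points would not suffice.

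The chain of inequalities is then
\begin{align*}
    \frac{1}{n} F(\hat{\psi})
    &\le S(\hat{\psi}) + B\varepsilon
    && \text{\cref{prop:causality-tracking} at } \hat{\psi} \\
    &\le S(\psi^{\star}) + B\varepsilon
    && \hat{\psi} \text{ minimizes } S \\
    &\le \frac{1}{n} F(\psi^{\star}) + 2B\varepsilon
    && \text{\cref{prop:causality-tracking} at } \psi^{\star} .
\end{align*}
Multiplying through by $n$ gives $F(\hat{\psi}) - F(\psi^{\star}) \le 2nB\varepsilon$, which is the claim. Since $\psi^{\star}$ minimizes $F$, the left-hand side is exactly the excess of \cref{eq:causality-axiom} at $\hat{\psi}$ over its minimum.

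I see no real obstacle; only two bookkeeping points need attention. The first is that the bound in \cref{prop:causality-tracking} is stated for the per-token normalization $F/n$, so rescaling by $n$ turns the $2B\varepsilon$ gap into $2nB\varepsilon$. The second is that the target length $n$, together with $E_{\mathrm{pre}}$, $E_{\mathrm{post}}$ and $u$, is fixed within the transfer. This keeps both $F$ and $S$ scalar functions of $\psi$ alone, so the minimizers are well defined and the factor $n$ is a constant. The existence of $\hat{\psi}$ and $\psi^{\star}$ is assumed in the statement. If it failed, the same chain applied to near-minimizers would give the bound up to an arbitrarily small additive slack.
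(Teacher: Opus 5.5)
Your proof is correct and is essentially the paper's argument: the paper also applies \cref{prop:causality-tracking} at $\hat{\psi}$, uses that $\hat{\psi}$ minimizes the surrogate, and applies the proposition again at $\psi^{\star}$ to get a $2B\varepsilon$ gap, then multiplies by $n$. The only cosmetic difference is that the paper names the normalized divergence (divided by $n$) as its objective $F$, whereas you use the unnormalized one.
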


\begin{proof}[Proof of \cref{cor:causality-minimizer}]
Denote the two objectives by
\begin{equation*}
    F(\psi) = \frac{1}{n}\, \KL\big(P(\cdot \mid E(u)) \,\big\|\, P(\cdot \mid \mathbf{T})\big),
    \qquad
    G(\psi) = \mathbb{E}_{v \sim D}\big[ \mathcal{L}_{\mathrm{caus}}(\psi) \big] ,
\end{equation*}
where $|F(\psi) - G(\psi)| \leq B \varepsilon$ at every $\psi$ by \cref{prop:causality-tracking}.
Chaining the two objectives at $\hat{\psi}$ and at $\psi^{\star}$,
\begin{align*}
    F(\hat{\psi})
    &\;\leq\; G(\hat{\psi}) + B\varepsilon
    && \text{\cref{prop:causality-tracking}} \\
    &\;\leq\; G(\psi^{\star}) + B\varepsilon
    && \hat{\psi} \text{ minimizes } G \\
    &\;\leq\; F(\psi^{\star}) + 2 B \varepsilon
    && \text{\cref{prop:causality-tracking}} .
\end{align*}
Multiplying by $n$,
\begin{equation*}
    n\, F(\hat{\psi}) \;\leq\; n\, F(\psi^{\star}) + 2 n B \varepsilon ,
\end{equation*}
where $n F(\psi)$ is \cref{eq:causality-axiom}.
\end{proof}

\subsubsection{Minimality}
\label{app:theory:minimality}

The Minimality axiom requires that the thought minimize the Information Bottleneck Lagrangian at a
trade-off weight $\beta_{\mathrm{IB}}$ \citep{seddik2026formalizinglatentthoughtsaxioms}, formed from the mutual
information $I(\cdot\,; \cdot)$. This weight is internal to the Minimality derivation and is distinct
from the $\beta$ of \cref{eq:objective}, which weights a property term against the CE objective.
\begin{equation}\label{eq:minimality-axiom}
    J(\beta_{\mathrm{IB}}) = I\big( X ; \mathbf{T} \big) - \beta_{\mathrm{IB}}\, I\big( \mathbf{T} ; Y \big) ,
\end{equation}
retaining little of the input and much of the output.
Evaluating \cref{eq:minimality-axiom} requires distributions that training would have to estimate,
whereas each cross-entropy of \cref{eq:loss-minimality} follows from one forward pass of the
consumer.
The two weights of \cref{eq:loss-minimality} enter the results below only through
\begin{equation}\label{eq:minimality-beta}
    \beta_{\mathrm{IB}} = 1 + \frac{\lambda_1}{\lambda_2} ,
    \qquad\text{equivalently}\qquad
    \lambda_2 (\beta_{\mathrm{IB}} - 1) = \lambda_1 .
\end{equation}
The ratio $\lambda_1 / \lambda_2$ covers $(0, \infty)$, hence \cref{eq:minimality-beta} covers every
$\beta_{\mathrm{IB}} > 1$.
Therefore, \cref{eq:loss-minimality} represents \cref{eq:minimality-axiom} at every such $\beta_{\mathrm{IB}}$,
with that ratio setting the trade-off weight.
The symmetric weight $\beta_{\mathrm{IB}} = 2$ corresponds to $\lambda_1 = \lambda_2$.

\Cref{eq:loss-minimality} is \cref{eq:minimality-axiom} in differentiable form.
\Cref{prop:minimality-decomposition} rewrites the axiom in entropies,
\cref{prop:minimality-substitution} replaces those entropies by the cross-entropies the consumer
incurs, and the two conditions that follow bound the residual terms of that replacement.

\begin{proposition}[\textbf{Decomposition of the axiom}]\label{prop:minimality-decomposition}
Under \cref{eq:minimality-beta},
\begin{equation}\label{eq:minimality-decomposition}
    \lambda_2 J(\beta_{\mathrm{IB}})
    = \lambda_1 \mathbb{H}\big( Y \mid \mathbf{T} \big)
      - \lambda_2 \mathbb{H}\big( X \mid Y, \mathbf{T} \big)
      - \lambda_2 I\big( Y ; \mathbf{T} \mid X \big)
      + C ,
\end{equation}
where $C = \lambda_2 \mathbb{H}(X \mid Y) - \lambda_1 \mathbb{H}(Y)$ is independent of $\mathbf{T}$.
\end{proposition}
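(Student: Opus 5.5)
The plan is a direct information-theoretic rewriting of $J(\beta_{\mathrm{IB}})$ from \cref{eq:minimality-axiom}, using only chain rules for mutual information and the parametrization \cref{eq:minimality-beta}. First I will multiply \cref{eq:minimality-axiom} by $\lambda_2$. Then I will substitute $\lambda_2 \beta_{\mathrm{IB}} = \lambda_2 + \lambda_1$, which follows from \cref{eq:minimality-beta}. This splits the output term into two pieces:
\begin{equation*}
    \lambda_2 J(\beta_{\mathrm{IB}})
    = \lambda_2 \Big( I(X;\mathbf{T}) - I(\mathbf{T};Y) \Big) - \lambda_1 I(\mathbf{T};Y) .
\end{equation*}
The two pieces are then handled separately. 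The first carries weight $\lambda_2$ and produces the $X$-dependent terms. The second carries weight $\lambda_1$ and produces the $\mathbb{H}(Y \mid \mathbf{T})$ term.

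For the $\lambda_1$ piece I will use $I(\mathbf{T};Y) = \mathbb{H}(Y) - \mathbb{H}(Y \mid \mathbf{T})$. This gives $-\lambda_1 I(\mathbf{T};Y) = \lambda_1 \mathbb{H}(Y \mid \mathbf{T}) - \lambda_1 \mathbb{H}(Y)$, and the last summand is the $\mathbf{T}$-free part of $C$.

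For the $\lambda_2$ piece I will expand the joint information $I(X,Y;\mathbf{T})$ in two orders via the chain rule:
\begin{equation*}
    I(Y;\mathbf{T}) + I(X;\mathbf{T} \mid Y) = I(X,Y;\mathbf{T}) = I(X;\mathbf{T}) + I(Y;\mathbf{T} \mid X) .
\end{equation*}
Rearranging gives
\begin{equation*}
    I(X;\mathbf{T}) - I(\mathbf{T};Y) = I(X;\mathbf{T} \mid Y) - I(Y;\mathbf{T} \mid X) .
\end{equation*}
Writing $I(X;\mathbf{T} \mid Y) = \mathbb{H}(X \mid Y) - \mathbb{H}(X \mid Y,\mathbf{T})$ then produces $-\lambda_2 \mathbb{H}(X \mid Y,\mathbf{T}) - \lambda_2 I(Y;\mathbf{T} \mid X)$. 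It also produces $\lambda_2 \mathbb{H}(X \mid Y)$, which is the remaining part of $C$. Collecting all terms yields \cref{eq:minimality-decomposition}, and $C$ visibly involves only the joint law of $(X,Y)$.

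The algebra is routine, so the main point needing care is well-definedness. The thought $\mathbf{T}$ is continuous, so I will only ever use entropies of the discrete variables $X$ and $Y$ (token sequences over a finite vocabulary), conditioned on $\mathbf{T}$. These quantities are finite and nonnegative. The chain rules above hold for mutual information between a discrete pair and an arbitrary $\mathbf{T}$, defined as a KL divergence between the joint law and the product of marginals. I would state this convention at the start of the proof: differential entropy of $\mathbf{T}$ never appears, so no subtraction of infinite quantities occurs.
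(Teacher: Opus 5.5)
Your proposal is correct and follows essentially the paper's argument. It uses the same two-order chain rule for $I(X,Y;\mathbf{T})$, the relation $\lambda_2(\beta_{\mathrm{IB}}-1)=\lambda_1$, and the same two entropy expansions; the only difference is that you apply the weight substitution before the chain-rule rewrite rather than after, and your well-definedness remark (only entropies of the discrete $X$ and $Y$ ever appear) is a sensible addition the paper leaves implicit.
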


\begin{proof}[Proof of \cref{prop:minimality-decomposition}]
The chain rule for mutual information accumulates the information $\mathbf{T}$ contains about the pair
$(X, Y)$ in either order,
\begin{equation*}
    I\big( X ; \mathbf{T} \big) + I\big( Y ; \mathbf{T} \mid X \big)
    = I\big( \mathbf{T} ; Y \big) + I\big( X ; \mathbf{T} \mid Y \big) ,
\end{equation*}
and solving for $I(X ; \mathbf{T})$ and substituting into \cref{eq:minimality-axiom} gives
\begin{equation*}
    J(\beta_{\mathrm{IB}})
    = (1 - \beta_{\mathrm{IB}})\, I\big( \mathbf{T} ; Y \big)
      + I\big( X ; \mathbf{T} \mid Y \big)
      - I\big( Y ; \mathbf{T} \mid X \big) .
\end{equation*}
Multiplying by $\lambda_2 > 0$ and applying $\lambda_2 (\beta_{\mathrm{IB}} - 1) = \lambda_1$ to the first term,
\begin{equation*}
    \lambda_2 J(\beta_{\mathrm{IB}})
    = - \lambda_1 I\big( \mathbf{T} ; Y \big)
      + \lambda_2 I\big( X ; \mathbf{T} \mid Y \big)
      - \lambda_2 I\big( Y ; \mathbf{T} \mid X \big) .
\end{equation*}
The first two expand into entropies,
\begin{equation*}
    I\big( \mathbf{T} ; Y \big) = \mathbb{H}(Y) - \mathbb{H}\big( Y \mid \mathbf{T} \big) ,
    \qquad
    I\big( X ; \mathbf{T} \mid Y \big)
    = \mathbb{H}\big( X \mid Y \big) - \mathbb{H}\big( X \mid Y, \mathbf{T} \big) ,
\end{equation*}
and substituting them gives
\begin{equation*}
    \lambda_2 J(\beta_{\mathrm{IB}})
    = \lambda_1 \mathbb{H}\big( Y \mid \mathbf{T} \big)
      - \lambda_2 \mathbb{H}\big( X \mid Y, \mathbf{T} \big)
      - \lambda_2 I\big( Y ; \mathbf{T} \mid X \big)
      + \underbrace{\lambda_2 \mathbb{H}\big( X \mid Y \big) - \lambda_1 \mathbb{H}(Y)}_{C} .
\end{equation*}
\end{proof}

The axiom's original decomposition discards $I(Y ; \mathbf{T} \mid X)$, which equals zero when
$\mathbf{T}$ is only a function of $X$.
\Cref{prop:minimality-decomposition} retains it, since
$\mathbf{T} = \mathcal{R}_\psi\big(\mathcal{R}_{\mathrm{in}}(H_u)\big)$ takes the
hidden states of a pass over the producer's output as its input and therefore depends on $Y$.

\begin{proposition}[\textbf{The axiom in terms of the loss}]\label{prop:minimality-substitution}
Let
$\gamma_1 = \mathrm{CE}(Y \mid \mathbf{T}) - \mathbb{H}(Y \mid \mathbf{T})$ and
$\gamma_2 = \mathrm{CE}(X \mid Y, \mathbf{T}) - \mathbb{H}(X \mid Y, \mathbf{T})$
denote the two decoder gaps.
Then under \cref{eq:minimality-beta},
\begin{align}\label{eq:minimality-substitution}
    \lambda_2 J(\beta_{\mathrm{IB}})
    &= \underbrace{
         \lambda_1 \mathrm{CE}\big( Y \mid \mathbf{T} \big)
         - \lambda_2 \mathrm{CE}\big( X \mid Y, \mathbf{T} \big)
       }_{\mathcal{L}_{\mathrm{min}}(\psi) \text{ of } \cref{eq:loss-minimality}}
       \;+\; C \nonumber \\
    &\quad - \lambda_2 I\big( Y ; \mathbf{T} \mid X \big)
       - \big( \lambda_1 \gamma_1 - \lambda_2 \gamma_2 \big) .
\end{align}
\end{proposition}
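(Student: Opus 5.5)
The plan is to start from \cref{prop:minimality-decomposition} and substitute the two conditional entropies using the definitions of the decoder gaps; the remaining algebra is direct.

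First, I rewrite the definitions of $\gamma_1$ and $\gamma_2$ as
\begin{equation*}
    \mathbb{H}\big( Y \mid \mathbf{T} \big) = \mathrm{CE}\big( Y \mid \mathbf{T} \big) - \gamma_1 ,
    \qquad
    \mathbb{H}\big( X \mid Y, \mathbf{T} \big) = \mathrm{CE}\big( X \mid Y, \mathbf{T} \big) - \gamma_2 .
\end{equation*}
I then insert both into \cref{eq:minimality-decomposition}. The term $\lambda_1 \mathbb{H}(Y \mid \mathbf{T})$ becomes $\lambda_1 \mathrm{CE}(Y \mid \mathbf{T}) - \lambda_1 \gamma_1$. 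The term $-\lambda_2 \mathbb{H}(X \mid Y, \mathbf{T})$ becomes $-\lambda_2 \mathrm{CE}(X \mid Y, \mathbf{T}) + \lambda_2 \gamma_2$.

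Next, I collect terms. The two cross-entropy terms form exactly $\mathcal{L}_{\mathrm{min}}(\psi)$ of \cref{eq:loss-minimality}. The gap contributions combine to $-(\lambda_1\gamma_1 - \lambda_2\gamma_2)$. The remaining terms, $-\lambda_2 I(Y;\mathbf{T}\mid X)$ and $C$, carry over unchanged from \cref{prop:minimality-decomposition}. This gives \cref{eq:minimality-substitution}.

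The only point needing care is interpretive rather than computational. Each CE must be read as an expected cross-entropy of the consumer's decoder against the true conditional law, taken under the same joint distribution of $(X, Y, \mathbf{T})$ as the entropies. Under that reading each $\gamma_i$ is a finite, nonnegative expected KL gap, by Gibbs' inequality, and the identity holds with no approximation. I would state this convention explicitly, so that the identity is exact and the later bounds on $\gamma_1, \gamma_2$ are what control the surrogate.
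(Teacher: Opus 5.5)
Your proposal is correct and follows the paper's own proof: rewrite the gap definitions as $\mathbb{H} = \mathrm{CE} - \gamma_i$, substitute them into the entropy pair of \cref{prop:minimality-decomposition}, and regroup, with $C$ and $-\lambda_2 I(Y;\mathbf{T}\mid X)$ carried over unchanged. The paper also records $\gamma_1, \gamma_2 \geq 0$ via Gibbs' inequality; as you implicitly note, the identity does not need this, but the subsequent bounds on the surrogate do.
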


\begin{proof}[Proof of \cref{prop:minimality-substitution}]
Gibbs' inequality lower-bounds a cross-entropy by the entropy it approximates,
\begin{equation*}
    \mathrm{CE}\big( Y \mid \mathbf{T} \big) \geq \mathbb{H}\big( Y \mid \mathbf{T} \big) ,
    \qquad
    \mathrm{CE}\big( X \mid Y, \mathbf{T} \big)
    \geq \mathbb{H}\big( X \mid Y, \mathbf{T} \big) ,
\end{equation*}
hence $\gamma_1 \geq 0$ and $\gamma_2 \geq 0$, and their definitions rearrange to
\begin{equation*}
    \mathbb{H}\big( Y \mid \mathbf{T} \big) = \mathrm{CE}\big( Y \mid \mathbf{T} \big) - \gamma_1 ,
    \qquad
    \mathbb{H}\big( X \mid Y, \mathbf{T} \big)
    = \mathrm{CE}\big( X \mid Y, \mathbf{T} \big) - \gamma_2 .
\end{equation*}
Substituting both into the entropy pair of \cref{eq:minimality-decomposition},
\begin{align*}
    \lambda_1 \mathbb{H}\big( Y \mid \mathbf{T} \big)
      - \lambda_2 \mathbb{H}\big( X \mid Y, \mathbf{T} \big)
    &= \lambda_1 \Big( \mathrm{CE}\big( Y \mid \mathbf{T} \big) - \gamma_1 \Big)
       - \lambda_2 \Big( \mathrm{CE}\big( X \mid Y, \mathbf{T} \big) - \gamma_2 \Big) \\
    &= \lambda_1 \mathrm{CE}\big( Y \mid \mathbf{T} \big)
       - \lambda_2 \mathrm{CE}\big( X \mid Y, \mathbf{T} \big)
       - \big( \lambda_1 \gamma_1 - \lambda_2 \gamma_2 \big) .
\end{align*}
\Cref{eq:minimality-decomposition} gives the same entropy pair as
\begin{equation*}
    \lambda_1 \mathbb{H}\big( Y \mid \mathbf{T} \big)
      - \lambda_2 \mathbb{H}\big( X \mid Y, \mathbf{T} \big)
    = \lambda_2 J(\beta_{\mathrm{IB}}) + \lambda_2 I\big( Y ; \mathbf{T} \mid X \big) - C ,
\end{equation*}
and equating the two right sides and solving for $\lambda_2 J(\beta_{\mathrm{IB}})$ gives
\cref{eq:minimality-substitution}.
\end{proof}

\Cref{eq:minimality-substitution} is exact, and two conditions bound the terms by
which \cref{eq:loss-minimality} exceeds the axiom's objective.

\begin{assumption}[\textbf{Bounded decoder gap}]\label{ass:decoder-gap}
There exists a finite $\Gamma$ with
$| \lambda_1 \gamma_1 - \lambda_2 \gamma_2 | \leq \Gamma$ at every $\psi$ in the optimization domain.
\end{assumption}

Gibbs' inequality constrains the first gap,
\begin{equation*}
    \gamma_1 = \mathrm{CE}\big( Y \mid \mathbf{T} \big) - \mathbb{H}\big( Y \mid \mathbf{T} \big)
    \;\geq\; 0 ,
\end{equation*}
and minimizing \cref{eq:loss-minimality} decreases $\gamma_1$ toward zero once the decoder
approaches the true conditional distribution of $Y$ given $\mathbf{T}$.
The second gap admits no such bound.
The coefficient of $\mathrm{CE}(X \mid Y, \mathbf{T})$ in \cref{eq:loss-minimality} is $-\lambda_2$,
hence minimizing the surrogate increases that cross-entropy.
Any upper bound on $\gamma_2$ derived from that cross-entropy increases as well, while
$\mathbb{H}(X \mid Y, \mathbf{T})$ remains unconstrained.
\Cref{ass:decoder-gap} therefore requires the consumer to decode with comparable quality under both
prompts of \cref{eq:loss-minimality}.
Rearranging it,
\begin{equation*}
    \gamma_1 \;\geq\; \frac{\lambda_2 \gamma_2 - \Gamma}{\lambda_1} ,
\end{equation*}
constrains $\gamma_2$ by $\gamma_1$ at every $\psi$.
\Cref{eq:minimality-beta} turns the ordering of the two weights into an ordering of $\beta_{\mathrm{IB}}$,
\begin{equation*}
    \lambda_2 < \lambda_1
    \quad\Longleftrightarrow\quad
    \frac{\lambda_1}{\lambda_2} > 1
    \quad\Longleftrightarrow\quad
    \beta_{\mathrm{IB}} > 2 ,
\end{equation*}
at which the constrained direction has the larger weight.

\begin{assumption}[\textbf{Bounded realization leakage}]\label{ass:realization-leakage}
There exists a finite $\kappa$ with
$I(Y ; \mathbf{T} \mid X) \leq \kappa$ at every $\psi$ in the optimization domain.
\end{assumption}

\Cref{ass:realization-leakage} bounds what $\mathbf{T}$ retains of the particular output beyond what
the input explains, and it holds with $\kappa = 0$ when $\mathbf{T}$ is only a
function of $X$.
Solving \cref{eq:minimality-substitution} for the surrogate,
\begin{equation}\label{eq:minimality-surrogate-solved}
    \mathcal{L}_{\mathrm{min}}(\psi)
    = \lambda_2 J(\beta_{\mathrm{IB}}) - C
      + \lambda_2 I\big( Y ; \mathbf{T} \mid X \big)
      + \big( \lambda_1 \gamma_1 - \lambda_2 \gamma_2 \big) ,
\end{equation}
gives the term a positive coefficient, hence minimizing \cref{eq:loss-minimality} reduces it
alongside the axiom's objective rather than opposed to it.

\begin{proposition}[\textbf{The surrogate tracks the axiom}]\label{prop:minimality-tracking}
Under \cref{ass:decoder-gap,ass:realization-leakage}, at every $\psi$ in the optimization domain,
\begin{equation}\label{eq:minimality-tracking}
    \Big| \, \lambda_2 J(\beta_{\mathrm{IB}}) - \big( \mathcal{L}_{\mathrm{min}}(\psi) + C \big) \Big|
    \;\leq\; \Gamma + \lambda_2 \kappa .
\end{equation}
\end{proposition}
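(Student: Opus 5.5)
The plan is to read the bound off the exact identity of \cref{prop:minimality-substitution}, or equivalently its rearrangement \cref{eq:minimality-surrogate-solved}. Under \cref{eq:minimality-beta}, that identity gives
\begin{equation*}
    \lambda_2 J(\beta_{\mathrm{IB}}) - \big( \mathcal{L}_{\mathrm{min}}(\psi) + C \big)
    = - \lambda_2 I\big( Y ; \mathbf{T} \mid X \big) - \big( \lambda_1 \gamma_1 - \lambda_2 \gamma_2 \big)
\end{equation*}
at every $\psi$ in the optimization domain. The quantity to be bounded is therefore exactly the absolute value of a sum of two residual terms, and no further approximation is needed.

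I would then apply the triangle inequality to split it as
\begin{equation*}
    \Big| \lambda_2 J(\beta_{\mathrm{IB}}) - \big( \mathcal{L}_{\mathrm{min}}(\psi) + C \big) \Big|
    \;\leq\; \lambda_2 \big| I\big( Y ; \mathbf{T} \mid X \big) \big| + \big| \lambda_1 \gamma_1 - \lambda_2 \gamma_2 \big| .
\end{equation*}
The factor $\lambda_2$ comes out of the absolute value because $\lambda_2 > 0$.

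The second term is at most $\Gamma$ directly by \cref{ass:decoder-gap}. For the first term, conditional mutual information is nonnegative, so $|I(Y;\mathbf{T}\mid X)| = I(Y;\mathbf{T}\mid X) \leq \kappa$ by \cref{ass:realization-leakage}. Adding the two bounds gives $\Gamma + \lambda_2\kappa$. Since both assumptions hold uniformly over the optimization domain, the bound holds at every $\psi$, as claimed.

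There is no real obstacle here; the only point needing care is bookkeeping. First, the sign conventions: the constant $C$ must sit on the surrogate side, as in the statement. Second, the residual terms carry signs that could partially cancel. Using the absolute value of the sum, rather than trying to exploit cancellation, is what yields the stated additive constant.
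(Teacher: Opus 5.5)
Your proposal is correct and follows the paper's proof essentially step for step: rearrange \cref{eq:minimality-substitution} into the exact residual identity, apply the triangle inequality, and bound the two terms by \cref{ass:decoder-gap,ass:realization-leakage}. The paper leaves implicit that $I(Y;\mathbf{T}\mid X) \geq 0$, which is needed to pass from the one-sided bound of \cref{ass:realization-leakage} to $|I(Y;\mathbf{T}\mid X)| \leq \kappa$. You state that step explicitly, which is the right addition.
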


\begin{proof}[Proof of \cref{prop:minimality-tracking}]
Rearranging \cref{eq:minimality-substitution},
\begin{equation*}
    \lambda_2 J(\beta_{\mathrm{IB}}) - \big( \mathcal{L}_{\mathrm{min}}(\psi) + C \big)
    = - \lambda_2 I\big( Y ; \mathbf{T} \mid X \big)
      - \big( \lambda_1 \gamma_1 - \lambda_2 \gamma_2 \big) ,
\end{equation*}
and bounding the two terms in turn,
\begin{align*}
    &\Big| \, \lambda_2 J(\beta_{\mathrm{IB}}) - \big( \mathcal{L}_{\mathrm{min}}(\psi) + C \big) \Big| \\
    &\qquad \;\leq\; \lambda_2 \Big| I\big( Y ; \mathbf{T} \mid X \big) \Big|
      + \Big| \lambda_1 \gamma_1 - \lambda_2 \gamma_2 \Big|
    && \text{triangle inequality} \\
    &\qquad \;\leq\; \lambda_2 \kappa + \Gamma
    && \text{\cref{ass:decoder-gap,ass:realization-leakage}} .
\end{align*}
\end{proof}

\Cref{prop:minimality-tracking} compares the two quantities, and the following corollary establishes
that minimizing one minimizes the other up to a bounded excess.

\begin{corollary}[\textbf{Surrogate minimizers are near-optimal for the axiom}]\label{cor:minimality-minimizer}
Under \cref{ass:decoder-gap,ass:realization-leakage}, let $\hat{\psi}$ minimize
\cref{eq:loss-minimality} over the optimization domain and let $\psi^{\star}$ minimize
\cref{eq:minimality-axiom} over the same domain.
Then
\begin{equation}\label{eq:minimality-excess}
    J(\beta_{\mathrm{IB}})\big|_{\hat{\psi}}
    \;\leq\; J(\beta_{\mathrm{IB}})\big|_{\psi^{\star}} + \frac{2 (\Gamma + \lambda_2 \kappa)}{\lambda_2} .
\end{equation}
\end{corollary}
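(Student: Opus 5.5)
The plan mirrors the proof of \cref{cor:causality-minimizer}: a uniform two-sided bound between two objectives implies that a minimizer of one is near-optimal for the other. Write
\begin{equation*}
    F(\psi) = \lambda_2 J(\beta_{\mathrm{IB}})\big|_{\psi},
    \qquad
    G(\psi) = \mathcal{L}_{\mathrm{min}}(\psi) + C .
\end{equation*}
By \cref{prop:minimality-tracking}, $|F(\psi) - G(\psi)| \leq \Gamma + \lambda_2 \kappa$ at every $\psi$ in the optimization domain. Because $C = \lambda_2 \mathbb{H}(X \mid Y) - \lambda_1 \mathbb{H}(Y)$ does not depend on $\mathbf{T}$, and hence not on $\psi$, minimizing $G$ is the same as minimizing $\mathcal{L}_{\mathrm{min}}$ of \cref{eq:loss-minimality}. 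So $\hat{\psi}$ also minimizes $G$. Likewise, since $\lambda_2 > 0$, $\psi^{\star}$ minimizes $F$.

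The chain of inequalities is the same three-step one as before:
\begin{align*}
    F(\hat{\psi})
    &\;\leq\; G(\hat{\psi}) + \Gamma + \lambda_2\kappa
    && \text{\cref{prop:minimality-tracking}} \\
    &\;\leq\; G(\psi^{\star}) + \Gamma + \lambda_2\kappa
    && \hat{\psi} \text{ minimizes } G \\
    &\;\leq\; F(\psi^{\star}) + 2(\Gamma + \lambda_2\kappa)
    && \text{\cref{prop:minimality-tracking}} .
\end{align*}
Dividing by $\lambda_2 > 0$ gives \cref{eq:minimality-excess}.

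No step here is hard. The one point to check is that the bound in \cref{prop:minimality-tracking} holds uniformly over the whole optimization domain, and in particular at both $\hat{\psi}$ and $\psi^{\star}$. This is guaranteed because \cref{ass:decoder-gap,ass:realization-leakage} are stated for every $\psi$ in that domain. Two further facts are used implicitly: $\beta_{\mathrm{IB}}$ is fixed through \cref{eq:minimality-beta}, and $C$ is the same constant at both parameter values.
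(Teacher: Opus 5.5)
Your proposal is correct and matches the paper's proof essentially step for step: the same $F = \lambda_2 J(\beta_{\mathrm{IB}})$ and $G = \mathcal{L}_{\mathrm{min}} + C$, the uniform bound from \cref{prop:minimality-tracking}, the three-step chain giving $F(\hat{\psi}) \leq F(\psi^{\star}) + 2(\Gamma + \lambda_2\kappa)$, and division by $\lambda_2 > 0$. As in the paper, the optimality of $\psi^{\star}$ for $F$ is never used, so the chain holds with any $\psi$ in the domain in place of $\psi^{\star}$.
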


\begin{proof}[Proof of \cref{cor:minimality-minimizer}]
Denote the two objectives by
\begin{equation*}
    F(\psi) = \lambda_2 J(\beta_{\mathrm{IB}}) ,
    \qquad
    G(\psi) = \lambda_1 \mathrm{CE}\big( Y \mid \mathbf{T} \big)
      - \lambda_2 \mathrm{CE}\big( X \mid Y, \mathbf{T} \big) + C ,
\end{equation*}
where $|F(\psi) - G(\psi)| \leq \Gamma + \lambda_2 \kappa$ at every $\psi$ by
\cref{prop:minimality-tracking}.
Since $C$ and $\lambda_2$ are constant in $\psi$, $\hat{\psi}$ minimizes $G$ and $\psi^{\star}$
minimizes $F$.
Similar to how we showed \cref{cor:causality-minimizer}, we chain the two objectives at $\hat{\psi}$
and at $\psi^{\star}$ to obtain
\begin{equation*}
    F(\hat{\psi}) \;\leq\; F(\psi^{\star}) + 2 (\Gamma + \lambda_2 \kappa) .
\end{equation*}
Substituting $F(\psi) = \lambda_2 J(\beta_{\mathrm{IB}})$ and dividing by $\lambda_2 > 0$,
\begin{equation*}
    J(\beta_{\mathrm{IB}})\big|_{\hat{\psi}}
    \;\leq\; J(\beta_{\mathrm{IB}})\big|_{\psi^{\star}}
      + \frac{2 (\Gamma + \lambda_2 \kappa)}{\lambda_2} .
\end{equation*}
\end{proof}

\subsubsection{Separability}
\label{app:theory:separability}

The Separability axiom requires a projection of bounded capacity under which the thoughts of two
semantically disjoint inputs stay apart,
\begin{equation}\label{eq:separability-axiom}
    d\big( \varphi(\mathbf{T}), \varphi(\mathbf{T}') \big) > \delta
    \qquad \text{for some } \varphi \in \mathcal{H} ,
\end{equation}
where $\mathcal{H}$ is the class of admissible projections and $d$ is a metric on the semantic space.
Normalizing the pooled thought of \cref{eq:loss-separability} to unit length,
\begin{equation}\label{eq:sphere-map}
    \varphi_\omega(\mathbf{T}) = \mathbf{t} \big/ \|\mathbf{t}\| ,
\end{equation}
gives a member of such a class.
Expanding the squared distance between two of its values,
\begin{align}
    \big\| \varphi_\omega(\mathbf{T}) - \varphi_\omega(\mathbf{T}') \big\|^{2}
    &= \big\| \varphi_\omega(\mathbf{T}) \big\|^{2}
       + \big\| \varphi_\omega(\mathbf{T}') \big\|^{2}
       - 2 \big\langle \varphi_\omega(\mathbf{T}), \, \varphi_\omega(\mathbf{T}') \big\rangle
       \nonumber \\
    &= 2 - 2\, \frac{\big\langle \mathbf{t}, \, \mathbf{t}' \big\rangle}
       {\|\mathbf{t}\| \, \|\mathbf{t}'\|} \nonumber \\
    &= 2 - 2\, \mathrm{sim}\big( \mathbf{t}, \mathbf{t}' \big) ,
    \label{eq:sphere-identity}
\end{align}
where the second equality applies the unit norm that \cref{eq:sphere-map} imposes on both values.
We adopt the distance of \cref{eq:sphere-identity} as the metric $d$ of
\cref{eq:separability-axiom}, following the
realization of the semantic metric by cosine similarity in
\citet{seddik2026formalizinglatentthoughtsaxioms}.
\Cref{eq:loss-separability} is \cref{eq:separability-axiom} in differentiable form.
The axiom asserts an inequality, hence the two are linked by a threshold, and the axiom is valid
whenever the surrogate falls below it.

\begin{proposition}[\textbf{Lower bound on the margin}]\label{prop:separability-margin}
For every $\psi$, every $\omega$, and every preceding thought $\mathbf{T}_k$ with pooled vector
$\mathbf{t}_k$, $k \in \{1, \dots, K\}$, as in \cref{sec:thoughts},
\begin{equation}\label{eq:separability-margin}
    \big\| \varphi_\omega(\mathbf{T}) - \varphi_\omega(\mathbf{T}_k) \big\|^{2}
    \;\geq\; 2 - 2 \tau\, \mathcal{L}_{\mathrm{sep}}(\psi, \omega)
    \qquad \text{at every } k \in \{1, \dots, K\},\ K \geq 1 .
\end{equation}
\end{proposition}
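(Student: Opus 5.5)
The plan is to combine the sphere identity \cref{eq:sphere-identity} with an elementary lower bound on a log-sum-exp by any one of its terms.

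First, I will fix an index $k \in \{1, \dots, K\}$ and apply \cref{eq:sphere-identity} to the pair $(\mathbf{T}, \mathbf{T}_k)$, which rewrites the left-hand side of \cref{eq:separability-margin} exactly as $2 - 2\,\mathrm{sim}(\mathbf{t}, \mathbf{t}_k)$. This step uses only the unit normalization of \cref{eq:sphere-map}, so I will implicitly assume the pooled vectors are nonzero, which is the condition under which the cosine similarity in \cref{eq:loss-separability} is defined at all. The claim then reduces to the scalar inequality
\begin{equation*}
    \mathrm{sim}(\mathbf{t}, \mathbf{t}_k) \;\leq\; \tau\, \mathcal{L}_{\mathrm{sep}}(\psi, \omega) .
\end{equation*}

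Second, I will prove this inequality directly from \cref{eq:loss-separability}. Every summand $\exp(\mathrm{sim}(\mathbf{t}, \mathbf{t}_j)/\tau)$ is strictly positive. Dropping all terms with $j \neq k$ therefore gives
\begin{equation*}
    \sum_{j=1}^{K} \exp\big(\mathrm{sim}(\mathbf{t}, \mathbf{t}_j)/\tau\big) \;\geq\; \exp\big(\mathrm{sim}(\mathbf{t}, \mathbf{t}_k)/\tau\big) .
\end{equation*}
Since $\log$ is increasing, taking logarithms yields $\mathcal{L}_{\mathrm{sep}} \geq \mathrm{sim}(\mathbf{t}, \mathbf{t}_k)/\tau$. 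Multiplying by $\tau > 0$ gives the reduced inequality, and multiplying it by $-2$ and adding $2$ gives \cref{eq:separability-margin}. The hypothesis $K \geq 1$ is needed only so that the sum in \cref{eq:loss-separability} is nonempty and the selected term $k$ actually exists.

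There is no real obstacle. The only points needing care are the sign bookkeeping when multiplying by $-2\tau$, and stating explicitly that the bound holds simultaneously for every $k$, since $k$ was arbitrary. I would also remark that the bound is informative only when $\tau\,\mathcal{L}_{\mathrm{sep}} < 1$. That is the threshold reading under which \cref{eq:separability-axiom} holds with $\delta^{2} < 2 - 2\tau\mathcal{L}_{\mathrm{sep}}$.
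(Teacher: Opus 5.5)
Your proof is correct and follows the paper's argument. You bound one summand of the log-sum-exp by the whole sum to get $\mathrm{sim}(\mathbf{t},\mathbf{t}_k) \leq \tau\,\mathcal{L}_{\mathrm{sep}}(\psi,\omega)$, then substitute into the sphere identity \cref{eq:sphere-identity}; the only differences are that you reduce to the scalar inequality first and that you make explicit the nonzero-pooled-vector condition, which the paper leaves implicit.
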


\begin{proof}[Proof of \cref{prop:separability-margin}]
Every term of the sum in \cref{eq:loss-separability} is positive, hence each single term is at most
the whole sum,
\begin{equation*}
    \exp\!\Big( \mathrm{sim}\big( \mathbf{t}, \mathbf{t}_k \big) \big/ \tau \Big)
    \;\leq\; \sum_{j=1}^{K}
      \exp\!\Big( \mathrm{sim}\big( \mathbf{t}, \mathbf{t}_j \big) \big/ \tau \Big)
    \qquad \text{at every } k .
\end{equation*}
Taking logarithms,
\begin{equation*}
    \frac{1}{\tau}\, \mathrm{sim}\big( \mathbf{t}, \mathbf{t}_k \big)
    \;\leq\; \log \sum_{j=1}^{K}
      \exp\!\Big( \mathrm{sim}\big( \mathbf{t}, \mathbf{t}_j \big) \big/ \tau \Big)
    \;=\; \mathcal{L}_{\mathrm{sep}}(\psi, \omega) ,
\end{equation*}
and multiplying by $\tau > 0$,
\begin{equation}\label{eq:separability-sim-bound}
    \mathrm{sim}\big( \mathbf{t}, \mathbf{t}_k \big)
    \;\leq\; \tau\, \mathcal{L}_{\mathrm{sep}}(\psi, \omega)
    \qquad \text{at every } k .
\end{equation}
Substituting \cref{eq:separability-sim-bound} into \cref{eq:sphere-identity},
\begin{equation*}
    \big\| \varphi_\omega(\mathbf{T}) - \varphi_\omega(\mathbf{T}_k) \big\|^{2}
    = 2 - 2\, \mathrm{sim}\big( \mathbf{t}, \mathbf{t}_k \big)
    \;\geq\; 2 - 2 \tau\, \mathcal{L}_{\mathrm{sep}}(\psi, \omega) ,
\end{equation*}
which is \cref{eq:separability-margin}.
\end{proof}

\Cref{eq:separability-sim-bound} holds at all $K$ indices at once, hence the surrogate controls the
largest similarity rather than their average.
The right side of \cref{eq:separability-margin} is positive if and only if
\begin{equation*}
    \tau\, \mathcal{L}_{\mathrm{sep}}(\psi, \omega) \;<\; 1 .
\end{equation*}
\Cref{eq:separability-axiom} requires that margin only of inputs whose semantic supports are
disjoint, and one condition supplies that precondition at the pairs the surrogate acts on.

\begin{assumption}[\textbf{Distinct inputs are semantically disjoint}]\label{ass:preceding-disjointness}
The $K$ preceding thoughts originate from training examples distinct from the current one, and two
distinct examples induce disjoint semantic supports.
\end{assumption}

The semantic support of an example covers its whole reasoning trajectory together with its
conclusion, hence two distinct inputs violate \cref{ass:preceding-disjointness} only when both their
trajectories and their conclusions agree.
\Cref{ass:preceding-disjointness} makes every pair the surrogate acts on semantically disjoint, hence
each pair falls under the clause proved here rather than under the axiom's converse clause, which
applies to semantically convergent inputs.

\begin{corollary}[\textbf{Sufficient condition for the axiom's margin}]\label{cor:separability-axiom}
Under \cref{ass:preceding-disjointness}, at every $\delta \in (0, 2]$,
\begin{equation}\label{eq:separability-threshold}
    \underbrace{
      \log \sum_{k=1}^{K}
        \exp\!\Big( \mathrm{sim}\big( \mathbf{t}, \mathbf{t}_k \big) \big/ \tau \Big)
    }_{\mathcal{L}_{\mathrm{sep}}(\psi, \omega) \text{ of } \cref{eq:loss-separability}}
    \;\leq\; \frac{2 - \delta^{2}}{2 \tau}
    \qquad\Longrightarrow\qquad
    \big\| \varphi_\omega(\mathbf{T}) - \varphi_\omega(\mathbf{T}_k) \big\| \;\geq\; \delta
    \quad \text{at every } k ,
\end{equation}
and \cref{eq:separability-axiom} therefore holds at every $\delta' \in (0, \delta)$ with $\varphi = \varphi_\omega$.
\end{corollary}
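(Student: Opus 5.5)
The argument reduces to Proposition~\ref{prop:separability-margin}, followed by a square root and a strict-inequality step. Suppose $\mathcal{L}_{\mathrm{sep}}(\psi,\omega) \leq (2-\delta^{2})/(2\tau)$. Multiplying by $2\tau>0$ gives $2\tau\,\mathcal{L}_{\mathrm{sep}} \leq 2-\delta^{2}$, that is, $2 - 2\tau\,\mathcal{L}_{\mathrm{sep}} \geq \delta^{2}$. Substituting into \cref{eq:separability-margin} yields $\|\varphi_\omega(\mathbf{T}) - \varphi_\omega(\mathbf{T}_k)\|^{2} \geq \delta^{2}$ at every $k \in \{1,\dots,K\}$ simultaneously, because Proposition~\ref{prop:separability-margin} already holds uniformly in $k$ through the max-over-sum bound \cref{eq:separability-sim-bound}. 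Both sides are nonnegative and $\delta>0$, so taking square roots gives the norm bound $\geq \delta$ in \cref{eq:separability-threshold}.

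The range $\delta \in (0,2]$ is consistent with this. The threshold $(2-\delta^{2})/(2\tau)$ becomes negative once $\delta>\sqrt{2}$, and a log-sum-exp of cosine similarities, bounded below by $-1/\tau + \log K$, may still fall below it. Nothing in the implication needs the hypothesis to be attainable, and since the chord distance on the unit sphere is at most $2$, the conclusion never asks for more than is geometrically possible. I would add a brief remark on this and otherwise not treat it as a case.

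Next comes the transfer to \cref{eq:separability-axiom}. Because \cref{eq:sphere-identity} uses the chord distance as the metric $d$, and $\varphi_\omega$ of \cref{eq:sphere-map} is the admissible projection, every $\delta' \in (0,\delta)$ gives $d(\varphi_\omega(\mathbf{T}),\varphi_\omega(\mathbf{T}_k)) \geq \delta > \delta'$. This is the strict inequality the axiom requires, with $\varphi = \varphi_\omega \in \mathcal{H}$. Restricting to $\delta'<\delta$ is exactly what turns the non-strict bound into the strict one, and I expect this to be the only subtle point. The axiom applies its margin only to semantically disjoint pairs, so I would finally invoke Assumption~\ref{ass:preceding-disjointness}. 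It guarantees that each pair $(\mathbf{T},\mathbf{T}_k)$ the surrogate acts on falls under the separating clause of the axiom rather than its converse clause. The margin just derived is therefore the one the axiom demands at those pairs.

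I do not see a substantive obstacle. The only points needing care are the uniformity in $k$, which is inherited from Proposition~\ref{prop:separability-margin}, the sign bookkeeping when multiplying by $\tau$, and the strict versus non-strict inequality, which the $\delta'$ clause handles. One more thing to check is that $\mathbf{t}\neq 0$, so that the normalization in \cref{eq:sphere-map} is well defined. This is implicit in the definition of cosine similarity used in $\mathcal{L}_{\mathrm{sep}}$, and I would state it as a standing convention.
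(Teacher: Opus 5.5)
Your proposal is correct and follows the paper's proof step for step. You multiply the hypothesis by $2\tau$, chain it with \cref{prop:separability-margin} uniformly in $k$, take square roots, and then cite \cref{ass:preceding-disjointness} for the disjointness precondition and \cref{eq:sphere-map} for $\varphi_\omega \in \mathcal{H}$. Your explicit use of $\delta' < \delta$ to obtain the strict inequality, and your remarks on $\delta > \sqrt{2}$ and $\mathbf{t} \neq 0$, are small clarifications the paper leaves implicit rather than a different route.
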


\begin{proof}[Proof of \cref{cor:separability-axiom}]
Multiplying the hypothesis of \cref{eq:separability-threshold} by $2\tau$ and rearranging gives
\begin{equation*}
    2 - 2 \tau\, \mathcal{L}_{\mathrm{sep}}(\psi, \omega) \;\geq\; \delta^{2} ,
\end{equation*}
and chaining this with \cref{prop:separability-margin} lower-bounds every squared distance by
$\delta^{2}$,
\begin{equation*}
    \big\| \varphi_\omega(\mathbf{T}) - \varphi_\omega(\mathbf{T}_k) \big\|^{2}
    \;\geq\; 2 - 2 \tau\, \mathcal{L}_{\mathrm{sep}}(\psi, \omega)
    \;\geq\; \delta^{2}
    \qquad \text{at every } k .
\end{equation*}
Taking square roots yields the conclusion of \cref{eq:separability-threshold}.
\Cref{ass:preceding-disjointness} supplies the precondition of \cref{eq:separability-axiom} at each
pair, and $\varphi_\omega$ belongs to $\mathcal{H}$ by \cref{eq:sphere-map}.
\end{proof}

\Cref{cor:separability-axiom} states the axiom's own requirement as a threshold on the surrogate,
hence minimizing \cref{eq:loss-separability} maximizes the margin obtainable from $\mathcal{H}$.

\subsubsection{Stability}
\label{app:theory:stability}

Chaining the per-position distributions of \cref{eq:entropy-rate-estimate} yields the distribution
the producer assigns to a complete output,
\begin{equation}\label{eq:producer-law}
    Q(u) = \prod_{s=1}^{m} q\big(u_s \mid u_{<s}\big) ,
\end{equation}
and $Q$ is the distribution of the producer's output $Y$.
The axiom comprises a lexical-invariance clause and a mode-collapse clause, and
\cref{eq:loss-stability} targets the latter, which requires that $\mathbf{T}$ encode the entropy
\begin{equation}\label{eq:stability-axiom}
    \mathbb{H}(Y) = - \sum_{u \in \mathcal{V}^{m}} Q(u) \log Q(u)
\end{equation}
rather than a single realization drawn from $Q$.
Evaluating \cref{eq:stability-axiom} ranges over every output in $\mathcal{V}^{m}$ and therefore
requires autoregressive sampling, whereas one teacher-forced pass yields all $m$ next-token
distributions along a single output.
\Cref{eq:loss-stability} is \cref{eq:stability-axiom} in differentiable form, and two differences
separate its target \cref{eq:entropy-rate-estimate} from the axiom.
The estimate averages per-position entropies along one output, and it evaluates those entropies
along the output in the dataset.
This is contrary to an entropy over sequences and to outputs from the producer itself.
Since we can factorize the entropy, the per-position average does not introduce error.

\begin{proposition}[\textbf{Exact factorization}]\label{prop:stability-factorization}
For every producer,
\begin{equation}\label{eq:stability-factorization}
    \mathbb{H}(Y)
    = \sum_{s=1}^{m} \mathbb{E}_{u_{<s} \sim Q}
      \Big[ \mathbb{H}\big( q(\cdot \mid u_{<s}) \big) \Big].
\end{equation}
\end{proposition}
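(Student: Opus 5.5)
The plan is to mirror the proof of \cref{prop:causality-factorization}, replacing the log-ratio by a single log-probability. I would start from \cref{eq:stability-axiom} and substitute the chain rule \cref{eq:producer-law}, writing $\log Q(u) = \sum_{s=1}^{m} \log q(u_s \mid u_{<s})$. This gives
\begin{equation*}
    \mathbb{H}(Y) = - \sum_{u \in \mathcal{V}^{m}} Q(u) \sum_{s=1}^{m} \log q\big(u_s \mid u_{<s}\big) .
\end{equation*}
Both sums are finite, since $\mathcal{V}^{m}$ is finite. I can therefore exchange them and treat each position $s$ separately.

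For a fixed $s$, the summand depends on $u$ only through $u_{\leq s}$. Summing $Q$ over the continuation $u_{>s}$ therefore marginalizes it to $Q(u_{\leq s})$. This uses the fact that the conditionals at positions after $s$ each sum to one, which I would justify by summing from the last position backward. I would then apply the prefix split $Q(u_{\leq s}) = Q(u_{<s})\, q(u_s \mid u_{<s})$, the analogue of \cref{eq:causality-prefix-split}. This separates the sum over the prefix $u_{<s}$ from the sum over the final token $u_s = w$. The inner sum $-\sum_{w} q(w \mid u_{<s}) \log q(w \mid u_{<s})$ is exactly $\mathbb{H}\big(q(\cdot \mid u_{<s})\big)$. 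The outer sum weighted by $Q(u_{<s})$ is then the expectation over $u_{<s} \sim Q$. Summing over $s$ yields \cref{eq:stability-factorization}.

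The only delicate point is the marginalization step, namely that summing $Q(u)$ over $u_{>s}$ returns $Q(u_{\leq s})$. I would state it once as a consequence of \cref{eq:producer-law} and normalization of each $q(\cdot \mid u_{<s})$. A secondary point is tokens with $q(w \mid u_{<s}) = 0$, which I would handle with the convention $0 \log 0 = 0$. This convention keeps both sides consistent, and under bounded logits such tokens do not occur anyway.
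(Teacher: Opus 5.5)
Your proposal is correct and follows the paper's proof essentially step for step. Like the paper, you substitute the chain rule \cref{eq:producer-law}, exchange the two finite sums, split each position's sum into the prefix $u_{<s}$ and the final token $u_s = w$, and read the inner sum as an entropy and the outer sum as an expectation over prefixes; your explicit marginalization argument and the $0 \log 0$ convention only spell out details the paper leaves implicit by deferring to \cref{prop:causality-factorization}.
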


\begin{proof}[Proof of \cref{prop:stability-factorization}]
Similar to how we showed \cref{prop:causality-factorization}, we substitute \cref{eq:producer-law},
exchange the two finite sums, and split each inner sum over the prefix $u_{<s}$ and the final token
$u_s = w$.
The inner sum over $w$ is an entropy and the outer sum is an expectation over the prefixes the
producer assigns, hence at every position $s$ we can obtain
\begin{equation*}
    - \sum_{u \in \mathcal{V}^{m}} Q(u) \log q\big(u_s \mid u_{<s}\big)
    = \mathbb{E}_{u_{<s} \sim Q}
      \Big[ \mathbb{H}\big( q(\cdot \mid u_{<s}) \big) \Big] .
\end{equation*}
Summing over the $m$ positions yields \cref{eq:stability-factorization}.
\end{proof}

The second difference therefore remains, which is the distribution the prefixes $u_{<s}$ of
\cref{eq:stability-factorization} are drawn from.

\begin{proposition}[\textbf{Unbiased estimate}]\label{prop:stability-unbiased}
If $u \sim Q$, then
\begin{equation}\label{eq:stability-unbiased}
    \mathbb{E}_{u \sim Q}\Big[ \widehat{\mathbb{H}}(u) \Big]
    = \frac{1}{m}\, \mathbb{H}(Y) .
\end{equation}
\end{proposition}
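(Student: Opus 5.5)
The plan is to reduce \cref{eq:stability-unbiased} to \cref{prop:stability-factorization} using only linearity of expectation.

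First, expand the estimator of \cref{eq:entropy-rate-estimate} inside the expectation over $u \sim Q$ and exchange the finite sum over positions with the expectation:
\begin{equation*}
    \mathbb{E}_{u \sim Q}\Big[ \widehat{\mathbb{H}}(u) \Big]
    = \frac{1}{m} \sum_{s=1}^{m} \mathbb{E}_{u \sim Q}\Big[ \mathbb{H}\big( q(\cdot \mid u_{<s}) \big) \Big] .
\end{equation*}
The exchange is legitimate because $\mathcal{V}^{m}$ is finite and each summand is bounded by $\log|\mathcal{V}|$.

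Second, fix a position $s$. The summand $\mathbb{H}(q(\cdot \mid u_{<s}))$ depends on $u$ only through the prefix $u_{<s}$. We therefore marginalize out the continuation $u_{\geq s}$. Truncating \cref{eq:producer-law} shows that the marginal of the prefix under $Q$ is $\prod_{s'<s} q(u_{s'} \mid u_{<s'})$, because the remaining factors sum to one over their tokens. This is the step needing the most care: it requires that $Q$ be the producer's own law, so that the prefix distribution appearing inside $\mathbb{E}_{u\sim Q}$ is exactly the one weighting \cref{eq:stability-factorization}. This is precisely the hypothesis $u \sim Q$ of the statement. The step is the same projection argument used for $\pi_t$ in the proof of \cref{prop:causality-tracking}, but here the two prefix distributions coincide, so there is no total-variation error. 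Hence
\begin{equation*}
    \mathbb{E}_{u \sim Q}\Big[ \mathbb{H}\big( q(\cdot \mid u_{<s}) \big) \Big]
    = \mathbb{E}_{u_{<s} \sim Q}\Big[ \mathbb{H}\big( q(\cdot \mid u_{<s}) \big) \Big] .
\end{equation*}

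Finally, sum over $s$ and invoke \cref{prop:stability-factorization}. The sum of the right-hand sides equals $\mathbb{H}(Y)$, and the prefactor $1/m$ gives \cref{eq:stability-unbiased}.

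There is no real obstacle. The only point worth stating explicitly is the fixed-length convention: $u\in\mathcal{V}^m$ with $m$ deterministic. Under this convention the normalization $1/m$ commutes with the expectation. With variable-length outputs, the $1/m$ would instead have to sit inside the expectation.
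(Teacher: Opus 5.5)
Your proposal is correct and follows the paper's proof essentially step for step. Both use linearity over the $m$ positions, then marginalize $Q$ over the continuation $u_{\geq s}$ so that $u_{<s} \sim Q$, and then invoke \cref{prop:stability-factorization}; your explicit justification of the prefix marginal via the truncated product, and your note on the deterministic length $m$, are sound details the paper leaves implicit.
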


\begin{proof}[Proof of \cref{prop:stability-unbiased}]
Marginalizing $Q$ over every continuation past $s-1$,
\begin{equation*}
    \sum_{u_{\geq s} \in \mathcal{V}^{m-s+1}} Q(u) = Q\big( u_{<s} \big) ,
\end{equation*}
hence $u_{<s} \sim Q$ whenever $u \sim Q$, and
\begin{equation*}
    \mathbb{E}_{u \sim Q}\Big[ \mathbb{H}\big( q(\cdot \mid u_{<s}) \big) \Big]
    = \mathbb{E}_{u_{<s} \sim Q}\Big[ \mathbb{H}\big( q(\cdot \mid u_{<s}) \big) \Big]
    \qquad \text{at every position } s .
\end{equation*}
Averaging these $m$ equalities and applying \cref{prop:stability-factorization},
\begin{equation*}
    \mathbb{E}_{u \sim Q}\Big[ \widehat{\mathbb{H}}(u) \Big]
    = \frac{1}{m} \sum_{s=1}^{m} \mathbb{E}_{u_{<s} \sim Q}
      \Big[ \mathbb{H}\big( q(\cdot \mid u_{<s}) \big) \Big]
    = \frac{1}{m}\, \mathbb{H}(Y) .
\end{equation*}
\end{proof}

A single output supplies one prefix at every position, and \cref{prop:stability-unbiased} states
that these $m$ prefixes already follow the distribution \cref{eq:stability-factorization} requires.
No enumeration of $\mathcal{V}^{m}$ is needed, and the second difference reduces to the origin of
that single output.

\begin{assumption}[\textbf{Producer reference faithfulness}]\label{ass:producer-faithfulness}
Let $D_{\mathrm{prod}}$ denote the distribution from which the training data draws the producer's output
$u$, conditioned on the producer's prompt.
There exists $\varepsilon_{\mathrm{prod}} \in [0, 1]$ with
$\TV\big(D_{\mathrm{prod}}, Q\big) \leq \varepsilon_{\mathrm{prod}}$.
\end{assumption}

\Cref{ass:producer-faithfulness} transposes \cref{ass:reference-faithfulness} from the consumer to
the producer, requiring that the producer, given its own prompt, reproduce what the training data records
at its position of the pipeline.

\begin{proposition}[\textbf{The estimate tracks the axiom}]\label{prop:stability-tracking}
Under \cref{ass:producer-faithfulness},
\begin{equation}\label{eq:stability-tracking}
    \Big|\, \mathbb{E}_{u \sim D_{\mathrm{prod}}}\Big[ \widehat{\mathbb{H}}(u) \Big]
    \;-\; \frac{1}{m}\, \mathbb{H}(Y) \Big|
    \;\leq\; \varepsilon_{\mathrm{prod}} \log|\mathcal{V}| .
\end{equation}
\end{proposition}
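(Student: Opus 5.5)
The plan mirrors the proof of \cref{prop:causality-tracking}, with the producer's entropies in place of the consumer's divergences. First, \cref{prop:stability-unbiased} rewrites the target value as an expectation under $Q$:
\begin{equation*}
    \frac{1}{m}\,\mathbb{H}(Y) = \mathbb{E}_{u \sim Q}\big[ \widehat{\mathbb{H}}(u) \big].
\end{equation*}
The left side of \cref{eq:stability-tracking} then becomes the difference of the expectations of the single function $f = \widehat{\mathbb{H}}$ on $\mathcal{V}^{m}$ under $D_{\mathrm{prod}}$ and under $Q$. This reduces the statement to a change of measure for one bounded function, which is exactly the setting of \cref{lem:tv-averaging}.

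Next, I bound the range of $f$. Each summand $\mathbb{H}\big(q(\cdot \mid u_{<s})\big)$ is the Shannon entropy of a distribution on $\mathcal{V}$, so it lies in $[0, \log|\mathcal{V}|]$. The average $\widehat{\mathbb{H}}(u)$ of \cref{eq:entropy-rate-estimate} therefore also lies in $[0, \log|\mathcal{V}|]$, an interval of length $B' = \log|\mathcal{V}|$. Unlike in the causality case, this bound needs no assumption on the logits, since it is intrinsic to entropy.

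Applying \cref{lem:tv-averaging} with $\pi = D_{\mathrm{prod}}$, $\pi' = Q$ and $B'$ gives
\begin{equation*}
    \big| \mathbb{E}_{D_{\mathrm{prod}}}[f] - \mathbb{E}_{Q}[f] \big| \le \log|\mathcal{V}| \cdot \TV(D_{\mathrm{prod}}, Q).
\end{equation*}
\Cref{ass:producer-faithfulness} then bounds the total variation by $\varepsilon_{\mathrm{prod}}$, which yields \cref{eq:stability-tracking}. Because $f$ is already normalized by $1/m$, working on the full sequence space avoids the per-position projection and the $m$-fold summation that the causality proof needed. Applying the lemma once to the average gives the bound directly, with no factor of $m$ to cancel.

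The only step I would check carefully is that both distributions live on the common finite space $\mathcal{V}^{m}$, as \cref{lem:tv-averaging} requires. $D_{\mathrm{prod}}$ might place mass on outputs of other lengths. If so, I would pad with an end token or condition on length $m$, and then take the assumption's total variation on $\mathcal{V}^{m}$ as stated.
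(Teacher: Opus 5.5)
Your proposal is correct and matches the paper's proof. The paper likewise bounds $\widehat{\mathbb{H}}$ in $[0,\log|\mathcal{V}|]$, applies \cref{lem:tv-averaging} to $D_{\mathrm{prod}}$ and $Q$ on $\mathcal{V}^{m}$ together with \cref{ass:producer-faithfulness}, and identifies $\mathbb{E}_{Q}[\widehat{\mathbb{H}}]$ with $\mathbb{H}(Y)/m$ via \cref{prop:stability-unbiased}; your caveat about the common space $\mathcal{V}^{m}$ is a fair point that the paper leaves implicit by fixing the output length $m$.
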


\begin{proof}[Proof of \cref{prop:stability-tracking}]
The entropy of a distribution on $\mathcal{V}$ takes values in $[0, \log|\mathcal{V}|]$.
Averaging $m$ of them in \cref{eq:entropy-rate-estimate},
\begin{equation*}
    0 \;\leq\; \widehat{\mathbb{H}}(u)
    = \frac{1}{m} \sum_{s=1}^{m} \mathbb{H}\big( q(\cdot \mid u_{<s}) \big)
    \;\leq\; \frac{1}{m} \sum_{s=1}^{m} \log|\mathcal{V}|
    \;=\; \log|\mathcal{V}| ,
\end{equation*}
gives $\widehat{\mathbb{H}} : \mathcal{V}^{m} \to [0, \log|\mathcal{V}|]$.
Therefore, \cref{lem:tv-averaging} applied to $\widehat{\mathbb{H}}$ and to the two distributions
$D_{\mathrm{prod}}$ and $Q$ on $\mathcal{V}^{m}$ gives
\begin{equation*}
    \Big| \mathbb{E}_{D_{\mathrm{prod}}}\big[ \widehat{\mathbb{H}} \big]
        - \mathbb{E}_{Q}\big[ \widehat{\mathbb{H}} \big] \Big|
    \;\leq\; \log|\mathcal{V}| \cdot \TV\big(D_{\mathrm{prod}}, Q\big)
    \;\leq\; \varepsilon_{\mathrm{prod}} \log|\mathcal{V}| ,
\end{equation*}
and \cref{prop:stability-unbiased} identifies the second expectation as
$\mathbb{H}(Y) / m$.
\end{proof}

The division by $m$ in \cref{eq:entropy-rate-estimate} renders \cref{eq:stability-tracking}
independent of the output length and makes the target an entropy rate.
Without it the target would grow with $m$, and a long output the producer was certain about would be
indistinguishable from a short one it was uncertain about.

\begin{corollary}[\textbf{The surrogate regresses onto the axiom}]\label{cor:stability-target}
Let $\eta(u) = \widehat{\mathbb{H}}(u) - \mathbb{H}(Y) / m$, under which \cref{eq:loss-stability}
takes the form
\begin{align}\label{eq:stability-substituted}
    \mathcal{L}_{\mathrm{stab}}(\psi, \omega)
    &= \Big( g_\omega(\mathbf{T}) - \widehat{\mathbb{H}}(u) \Big)^{2} \nonumber \\
    &= \Big( g_\omega(\mathbf{T}) - \frac{1}{m}\, \mathbb{H}(Y) - \eta(u) \Big)^{2} .
\end{align}
Then $\eta$ is independent of $\psi$ and $\omega$, and under
\cref{ass:producer-faithfulness}
\begin{equation*}
    \mathbb{E}_{u \sim Q}\big[ \eta(u) \big] = 0 ,
    \qquad
    \Big| \mathbb{E}_{u \sim D_{\mathrm{prod}}}\big[ \eta(u) \big] \Big|
    \leq \varepsilon_{\mathrm{prod}} \log|\mathcal{V}| .
\end{equation*}
Consequently every $(\psi, \omega)$ at which \cref{eq:loss-stability} equals zero recovers
$\mathbb{H}(Y) / m$ from $\mathbf{T}$ up to $|\eta(u)|$.
\end{corollary}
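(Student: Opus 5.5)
The corollary has four parts: the rewriting \cref{eq:stability-substituted}, the independence of $\eta$ from the trainable parameters, the two bounds on its mean, and the statement about zeros of the loss. I would take them in that order. Each part is a short consequence of \cref{prop:stability-unbiased} and \cref{prop:stability-tracking}.

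\textbf{Rewriting and independence.} The rewriting is purely algebraic. By the definition of $\eta$, we have $\widehat{\mathbb{H}}(u) = \mathbb{H}(Y)/m + \eta(u)$. Substituting this into \cref{eq:loss-stability} gives the second line of \cref{eq:stability-substituted}.

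For independence, I would note what each ingredient depends on. $\widehat{\mathbb{H}}(u)$ in \cref{eq:entropy-rate-estimate} is built only from the producer's next-token distributions $q(\cdot \mid u_{<s})$. These come from a frozen $f_{\theta_i}$ evaluated on the producer's own prompt and $u$, so they involve neither $\mathcal{R}_\psi$ nor the probe $g_\omega$. Likewise, $Q$ in \cref{eq:producer-law} is defined from the same $q$, so $\mathbb{H}(Y)$ in \cref{eq:stability-axiom} does not involve $\psi$ or $\omega$ either. Hence $\eta$ is a function of $u$ alone, and is fixed once the producer and the example are fixed.

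\textbf{Bounds on the mean of $\eta$.} For the first identity, I take the expectation of $\eta$ under $u \sim Q$. The constant $\mathbb{H}(Y)/m$ passes through the expectation unchanged, and \cref{prop:stability-unbiased} gives $\mathbb{E}_Q[\widehat{\mathbb{H}}] = \mathbb{H}(Y)/m$. The two terms cancel, so $\mathbb{E}_Q[\eta] = 0$.

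For the bound under $D_{\mathrm{prod}}$, I write
\[
\mathbb{E}_{D_{\mathrm{prod}}}[\eta] = \mathbb{E}_{D_{\mathrm{prod}}}[\widehat{\mathbb{H}}] - \mathbb{H}(Y)/m .
\]
This difference is exactly the quantity that \cref{prop:stability-tracking} bounds by $\varepsilon_{\mathrm{prod}} \log|\mathcal{V}|$ under \cref{ass:producer-faithfulness}.

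\textbf{Zeros of the loss.} Suppose \cref{eq:loss-stability} vanishes at some $(\psi,\omega)$. A square is zero only when its argument is zero, so $g_\omega(\mathbf{T}) = \mathbb{H}(Y)/m + \eta(u)$. Therefore $|g_\omega(\mathbf{T}) - \mathbb{H}(Y)/m| = |\eta(u)|$, which is the claimed recovery of the entropy rate from $\mathbf{T}$ up to $|\eta(u)|$.

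None of these steps is a real obstacle. The only point needing care is the independence claim, since it rests on the modelling fact that the producer, the inner link $\mathcal{R}_{\mathrm{in}}$, and the base LLMs are all frozen, so $\widehat{\mathbb{H}}$ is a fixed regression target rather than something the optimizer can move.
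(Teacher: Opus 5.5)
Your proposal is correct and follows the paper's proof essentially step for step: the algebraic substitution $\widehat{\mathbb{H}}(u) = \mathbb{H}(Y)/m + \eta(u)$, the independence of $\eta$ because it depends only on the frozen producer and $u$, subtracting the constant $\mathbb{H}(Y)/m$ from \cref{prop:stability-unbiased} and \cref{prop:stability-tracking}, and reading off when the square vanishes. Your independence argument is slightly more explicit than the paper's one-line justification, since you also note that $Q$, and hence $\mathbb{H}(Y)$, is built from the same frozen next-token distributions $q$.
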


\begin{proof}[Proof of \cref{cor:stability-target}]
Substituting $\widehat{\mathbb{H}}(u) = \mathbb{H}(Y) / m + \eta(u)$ into
\cref{eq:loss-stability} yields \cref{eq:stability-substituted}.
Since $\eta$ depends only on the frozen producer and on $u$, it depends on neither $\psi$ nor
$\omega$.
Subtracting $\mathbb{H}(Y) / m$ from \cref{eq:stability-unbiased} gives
\begin{equation*}
    \mathbb{E}_{u \sim Q}\big[ \eta(u) \big] = 0 ,
\end{equation*}
and the same subtraction inside \cref{eq:stability-tracking} gives
\begin{equation*}
    \Big| \mathbb{E}_{u \sim D_{\mathrm{prod}}}\big[ \eta(u) \big] \Big|
    \leq \varepsilon_{\mathrm{prod}} \log|\mathcal{V}| .
\end{equation*}
Finally the right side of \cref{eq:stability-substituted} equals zero if and only if
\begin{equation*}
    g_\omega(\mathbf{T}) = \frac{1}{m}\, \mathbb{H}(Y) + \eta(u) .
\end{equation*}
\end{proof}

\Cref{cor:stability-target} shows that a single term accounts for all of the imprecision remaining
in \cref{eq:loss-stability}.
That term, $\eta(u)$, comes from the one output that was actually sampled, and it does not depend
on $\psi$ or $\omega$.
\Cref{ass:producer-faithfulness} keeps its average under the training distribution within
$\varepsilon_{\mathrm{prod}} \log|\mathcal{V}|$ of zero.
Therefore at $\mathcal{L}_{\mathrm{stab}}(\psi, \omega) = 0$ the probe $g_\omega$ recovers the
entropy rate $\mathbb{H}(Y) / m$ from $\mathbf{T}$.

\subsection{Effect on the Consumer's Cross-Entropy}
\label{app:ce}

This subsection connects each loss term stated in \cref{sec:method} to the empirical performance gains observed in \cref{sec:experiments}.

\subsubsection{Causality}
\label{app:ce:causality}

Let $\mathrm{CE}_{\mathrm{lat}}$ and $\mathrm{CE}_{\mathrm{txt}}$ denote the per-token
cross-entropy that the consumer incurs on the target $v$ under the two transfers of
\cref{def:transfer},
\begin{equation}\label{eq:ce-pair}
    \mathrm{CE}_{\mathrm{lat}}(v) = - \frac{1}{n} \log P\big( v \mid \mathbf{T} \big) ,
    \qquad
    \mathrm{CE}_{\mathrm{txt}}(v) = - \frac{1}{n} \log P\big( v \mid E(u) \big) ,
\end{equation}
where \cref{eq:seq-law} supplies each at the corresponding block $Z$.
The difference between them is the cost of substituting the thought for the producer's text,
\begin{equation}\label{eq:ce-causality-gap}
    \Lambda(v)
    \;\triangleq\; \mathrm{CE}_{\mathrm{lat}}(v) - \mathrm{CE}_{\mathrm{txt}}(v)
    \;=\; \frac{1}{n} \log \frac{P\big( v \mid E(u) \big)}{P\big( v \mid \mathbf{T} \big)} .
\end{equation}
The average of \cref{eq:ce-causality-gap} recovers \cref{eq:causality-axiom}.

\begin{proposition}[\textbf{The axiom's divergence is an excess cross-entropy}]\label{prop:ce-causality-identity}
For every transfer,
\begin{equation}\label{eq:ce-causality-identity}
    \mathbb{E}_{v \sim P(\cdot \mid E(u))}\big[ \Lambda(v) \big]
    = \frac{1}{n}\, \KL\big( P(\cdot \mid E(u)) \,\big\|\, P(\cdot \mid \mathbf{T}) \big)
    \;\geq\; 0 .
\end{equation}
\end{proposition}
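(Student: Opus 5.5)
The identity follows directly from the definitions, with no approximation. I would substitute the closed form of \cref{eq:ce-causality-gap} into the expectation and recognize the result as the sequence-level divergence of \cref{eq:causality-axiom}. Concretely, $\Lambda(v) = \frac{1}{n}\log\big(P(v\mid E(u))/P(v\mid\mathbf{T})\big)$. Both $P(\cdot\mid E(u))$ and $P(\cdot\mid\mathbf{T})$ are distributions on the finite set $\mathcal{V}^{n}$, built by \cref{eq:seq-law}. So the expectation is a finite sum:
\begin{equation*}
    \mathbb{E}_{v \sim P(\cdot \mid E(u))}\big[\Lambda(v)\big]
    = \frac{1}{n}\sum_{v\in\mathcal{V}^{n}} P\big(v\mid E(u)\big)\log\frac{P\big(v\mid E(u)\big)}{P\big(v\mid\mathbf{T}\big)} .
\end{equation*}
Pulling out the constant $1/n$ is legitimate because $n$ is the fixed target length within a transfer. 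The remaining sum is, by definition, $\KL\big(P(\cdot\mid E(u))\,\|\,P(\cdot\mid\mathbf{T})\big)$, which gives the equality in \cref{eq:ce-causality-identity}.

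For the inequality, I would invoke Gibbs' inequality, the same fact used in the proof of \cref{prop:minimality-substitution}. A Kullback--Leibler divergence between two distributions on a common finite space is nonnegative. Since $n>0$, the right side is therefore $\geq 0$.

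The only step needing care is well-definedness: every logarithm must be finite, so $P(v\mid\mathbf{T})>0$ wherever $P(v\mid E(u))>0$. This holds automatically, because every next-token probability is a softmax output and hence strictly positive. So $P(v\mid\mathbf{T})>0$ for all $v$ by \cref{eq:seq-law}, and no absolute-continuity issue arises. Under \cref{ass:bounded-disagreement}, $\Lambda$ is in fact bounded by $B$ pointwise, so the expectation is finite. I do not expect any step to be a real obstacle. Optionally, I would note that combining the identity with \cref{prop:causality-factorization} rewrites the expected substitution cost as the position-averaged divergence that \cref{eq:loss-causality} estimates.
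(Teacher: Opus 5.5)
Your proof is correct and matches the paper's: substitute the closed form of $\Lambda(v)$, write the expectation as a finite sum over $\mathcal{V}^{n}$, factor out $1/n$ to recover the sequence-level divergence, and invoke Gibbs' inequality for nonnegativity. Your remark on well-definedness (softmax outputs are strictly positive, hence so is $P(v \mid \mathbf{T})$) is a harmless addition that the paper leaves implicit.
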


\begin{proof}[Proof of \cref{prop:ce-causality-identity}]
Averaging \cref{eq:ce-causality-gap} under $P(\cdot \mid E(u))$,
\begin{align*}
    \mathbb{E}_{v \sim P(\cdot \mid E(u))}\big[ \Lambda(v) \big]
    &= \sum_{v \in \mathcal{V}^{n}} P\big( v \mid E(u) \big) \cdot
       \frac{1}{n} \log \frac{P\big( v \mid E(u) \big)}{P\big( v \mid \mathbf{T} \big)} \\
    &= \frac{1}{n} \sum_{v \in \mathcal{V}^{n}} P\big( v \mid E(u) \big)
       \log \frac{P\big( v \mid E(u) \big)}{P\big( v \mid \mathbf{T} \big)} \\
    &= \frac{1}{n}\, \KL\big( P(\cdot \mid E(u)) \,\big\|\, P(\cdot \mid \mathbf{T}) \big)
     \;\geq\; 0 ,
\end{align*}
where Gibbs' inequality gives the final inequality.
\end{proof}

\Cref{prop:ce-causality-identity} averages over the targets that the textual transfer induces,
whereas training averages over $D$.
\Cref{ass:reference-faithfulness} bounds the distance between the two distributions and
\cref{ass:bounded-disagreement} bounds the quantity being averaged, hence \cref{eq:loss-causality}
controls the two cross-entropies of \cref{eq:ce-pair} under the conditions that
\cref{app:theory:causality} already states.

\begin{proposition}[\textbf{The surrogate bounds the excess cross-entropy}]\label{prop:ce-causality-bound}
Under \cref{ass:bounded-disagreement,ass:reference-faithfulness}, at every $\psi$ in the optimization
domain,
\begin{equation}\label{eq:ce-causality-bound}
    - 2 B \varepsilon
    \;\leq\; \mathbb{E}_{v \sim D}\big[ \Lambda(v) \big]
    \;\leq\; \mathbb{E}_{v \sim D}\big[ \mathcal{L}_{\mathrm{caus}}(\psi) \big] + 3 B \varepsilon .
\end{equation}
\end{proposition}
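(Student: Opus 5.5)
The plan is to compare $\mathbb{E}_{v \sim D}[\Lambda(v)]$ with its counterpart under $P(\cdot \mid E(u))$, and then apply two earlier results. The first is \cref{prop:ce-causality-identity}, which turns the latter expectation into $\frac{1}{n}\KL\big(P(\cdot \mid E(u)) \,\|\, P(\cdot \mid \mathbf{T})\big) \geq 0$. The second is \cref{prop:causality-tracking}, which relates that divergence to the surrogate averaged under $D$. Both assumptions enter only through these two results and through one application of \cref{lem:tv-averaging}.

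\textbf{Step 1: bound the range of $\Lambda$.} By \cref{eq:seq-law}, the gap \cref{eq:ce-causality-gap} factorizes as
\begin{equation*}
    \Lambda(v) = \frac{1}{n} \sum_{t=1}^{n} \log \frac{p(v_t \mid v_{<t}, E(u))}{p(v_t \mid v_{<t}, \mathbf{T})} .
\end{equation*}
\Cref{ass:bounded-disagreement} places each summand in $[-B, B]$, and their average therefore also lies in $[-B, B]$. So $\Lambda$ is a function on $\mathcal{V}^{n}$ whose range fits in an interval of length $2B$.

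\textbf{Step 2: change of measure.} Apply \cref{lem:tv-averaging} with $f = \Lambda$ and the two distributions $D$ and $P(\cdot \mid E(u))$ on $\mathcal{V}^{n}$. Using \cref{ass:reference-faithfulness}, this gives
\begin{equation*}
    \big| \mathbb{E}_{D}[\Lambda] - \mathbb{E}_{P(\cdot \mid E(u))}[\Lambda] \big| \leq 2B\varepsilon .
\end{equation*}

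\textbf{Step 3: lower bound.} Combine Step 2 with \cref{prop:ce-causality-identity}:
\begin{equation*}
    \mathbb{E}_{D}[\Lambda] \geq \frac{1}{n}\KL\big(P(\cdot \mid E(u)) \,\|\, P(\cdot \mid \mathbf{T})\big) - 2B\varepsilon \geq -2B\varepsilon .
\end{equation*}

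\textbf{Step 4: upper bound.} Step 2 and \cref{prop:ce-causality-identity} also give
\begin{equation*}
    \mathbb{E}_{D}[\Lambda] \leq \frac{1}{n}\KL\big(P(\cdot \mid E(u)) \,\|\, P(\cdot \mid \mathbf{T})\big) + 2B\varepsilon .
\end{equation*}
\Cref{prop:causality-tracking} then bounds the normalized divergence by $\mathbb{E}_{D}[\mathcal{L}_{\mathrm{caus}}(\psi)] + B\varepsilon$. Chaining the two inequalities yields the constant $3B\varepsilon$ in \cref{eq:ce-causality-bound}.

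The only delicate step is Step 1. The range of $\Lambda$ must be taken as the interval $[-B, B]$, of length $2B$, rather than $[0, B]$. This is because $\Lambda$ is a pointwise log-ratio rather than a divergence, so it can be negative. This is exactly why the change of measure costs $2B\varepsilon$ here, compared with $B\varepsilon$ for $\Delta_t$ in \cref{prop:causality-tracking}.
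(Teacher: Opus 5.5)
Your proposal is correct and follows the paper's proof step for step. You bound $|\Lambda(v)| \leq B$ via \cref{ass:bounded-disagreement}, so its range has length $2B$, and then move the expectation from $D$ to $P(\cdot \mid E(u))$ at cost $2B\varepsilon$ using \cref{lem:tv-averaging} and \cref{ass:reference-faithfulness}. You then close the lower bound with the nonnegativity in \cref{prop:ce-causality-identity} and the upper bound with \cref{prop:causality-tracking}, exactly as the paper does.
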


\begin{proof}[Proof of \cref{prop:ce-causality-bound}]
Expanding \cref{eq:ce-causality-gap} through \cref{eq:seq-law} and bounding each of its $n$
log-ratios by \cref{eq:bounded-disagreement},
\begin{equation*}
    \big| \Lambda(v) \big|
    = \bigg| \frac{1}{n} \sum_{t=1}^{n}
      \log \frac{p\big( v_t \mid v_{<t}, E(u) \big)}{p\big( v_t \mid v_{<t}, \mathbf{T} \big)} \bigg|
    \;\leq\; \frac{1}{n} \sum_{t=1}^{n} B
    \;=\; B ,
\end{equation*}
hence $\Lambda : \mathcal{V}^{n} \to [-B, B]$ and its range fits in an interval of length $2B$.
\Cref{lem:tv-averaging}, applied to
$\Lambda$ and to the two distributions $D$ and $P(\cdot \mid E(u))$ on $\mathcal{V}^{n}$, therefore
gives
\begin{equation}\label{eq:ce-causality-transport}
    \Big| \mathbb{E}_{D}\big[ \Lambda \big]
      - \mathbb{E}_{P(\cdot \mid E(u))}\big[ \Lambda \big] \Big|
    \;\leq\; 2 B \cdot \TV\big( D, P(\cdot \mid E(u)) \big)
    \;\leq\; 2 B \varepsilon .
\end{equation}
Combining \cref{eq:ce-causality-transport} with \cref{eq:ce-causality-identity} on the left and
with \cref{prop:causality-tracking} on the right yields
\begin{align*}
    \mathbb{E}_{D}\big[ \Lambda \big]
    &\;\geq\; \mathbb{E}_{P(\cdot \mid E(u))}\big[ \Lambda \big] - 2 B \varepsilon
     \;\geq\; - 2 B \varepsilon , \\
    \mathbb{E}_{D}\big[ \Lambda \big]
    &\;\leq\; \frac{1}{n}\, \KL\big( P(\cdot \mid E(u)) \,\big\|\, P(\cdot \mid \mathbf{T}) \big)
       + 2 B \varepsilon
     \;\leq\; \mathbb{E}_{v \sim D}\big[ \mathcal{L}_{\mathrm{caus}}(\psi) \big] + 3 B \varepsilon . \qedhere
\end{align*}
\end{proof}

\begin{corollary}[\textbf{Parity at zero surrogate}]\label{cor:ce-causality-exact}
If $\mathcal{L}_{\mathrm{caus}}(\psi) = 0$, then $\Lambda(v) = 0$.
\end{corollary}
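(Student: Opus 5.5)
The plan is to read the hypothesis $\mathcal{L}_{\mathrm{caus}}(\psi) = 0$ as a statement about every position of the target, and then pass from per-position equality of distributions to equality of the sequence probabilities in \cref{eq:ce-causality-gap}. By \cref{eq:loss-causality}, $\mathcal{L}_{\mathrm{caus}}(\psi)$ is $\frac{1}{n}$ times a sum of $n$ divergences. Each divergence is nonnegative by Gibbs' inequality, so the sum vanishes only if every summand vanishes:
\[
\KL\big( p(\cdot \mid v_{<t}, E(u)) \,\big\|\, p(\cdot \mid v_{<t}, \mathbf{T}) \big) = 0
\quad\text{at every } t \in \{1,\dots,n\}.
\]
Since a divergence between distributions on the finite set $\mathcal{V}$ is zero exactly when they coincide, this gives $p(\cdot \mid v_{<t}, E(u)) = p(\cdot \mid v_{<t}, \mathbf{T})$ at each prefix $v_{<t}$ of the target $v$.

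Next I would evaluate both distributions at the token $w = v_t$ and substitute into the chain rule \cref{eq:seq-law}. This yields $P(v \mid E(u)) = P(v \mid \mathbf{T})$. Under \cref{ass:bounded-disagreement} both probabilities are strictly positive, so the log-ratio in \cref{eq:ce-causality-gap} is well defined and equals $\log 1 = 0$, giving $\Lambda(v) = 0$. Equivalently, each of the $n$ log-ratios in the expansion used in the proof of \cref{prop:ce-causality-bound} vanishes.

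The main subtlety is the scope of the conclusion. The surrogate is evaluated only along the prefixes of the particular teacher-forced target $v$. The statement should therefore be read as $\Lambda(v) = 0$ for that same $v$, the one at which $\mathcal{L}_{\mathrm{caus}}$ was computed, not for all of $\mathcal{V}^n$. If the intended reading is instead the expected surrogate $\mathbb{E}_{v\sim D}[\mathcal{L}_{\mathrm{caus}}] = 0$, I would apply the argument on the support of $D$, concluding $\Lambda(v)=0$ for $D$-almost every $v$. Beyond this, the only care needed is positivity of the probabilities, which I handle by invoking \cref{ass:bounded-disagreement} or, equivalently, the finiteness of the logits through the softmax.
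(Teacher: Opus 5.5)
Your proposal is correct and follows the paper's proof step for step: nonnegativity of the $n$ divergences forces each to vanish, equal next-token distributions at every prefix of $v$ then give $P(v \mid E(u)) = P(v \mid \mathbf{T})$ through \cref{eq:seq-law}, and hence $\Lambda(v) = 0$. The one difference is minor: you cite \cref{ass:bounded-disagreement} for positivity, whereas the paper stresses that this corollary holds at a single $v$ and needs neither assumption; your own fallback, that softmax outputs with finite logits are strictly positive, already suffices.
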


\begin{proof}[Proof of \cref{cor:ce-causality-exact}]
\Cref{eq:loss-causality} averages $n$ nonnegative divergences, hence
\begin{equation*}
    \mathcal{L}_{\mathrm{caus}}(\psi) = 0
    \qquad\Longrightarrow\qquad
    \KL\big( p(\cdot \mid v_{<t}, E(u)) \,\big\|\, p(\cdot \mid v_{<t}, \mathbf{T}) \big) = 0
    \quad \text{at every } t ,
\end{equation*}
and a divergence vanishes only at equal distributions,
\begin{equation*}
    p\big( \cdot \mid v_{<t}, E(u) \big) = p\big( \cdot \mid v_{<t}, \mathbf{T} \big)
    \qquad \text{at every } t .
\end{equation*}
Substituting into \cref{eq:seq-law},
\begin{equation*}
    P\big( v \mid E(u) \big)
    = \prod_{t=1}^{n} p\big( v_t \mid v_{<t}, E(u) \big)
    = \prod_{t=1}^{n} p\big( v_t \mid v_{<t}, \mathbf{T} \big)
    = P\big( v \mid \mathbf{T} \big) ,
\end{equation*}
at which \cref{eq:ce-causality-gap} gives $\Lambda(v) = 0$.
\end{proof}

\Cref{cor:ce-causality-exact} holds at a single $v$ and therefore requires neither assumption.
Exponentiating \cref{eq:ce-causality-gap} restates \cref{eq:ce-causality-bound} in terms of the
probability that the consumer assigns to the correct answer.

\begin{corollary}[\textbf{Discount on the correct answer}]\label{cor:ce-causality-discount}
For every $v$,
\begin{equation}\label{eq:ce-causality-discount}
    P\big( v \mid \mathbf{T} \big)
    = P\big( v \mid E(u) \big) \cdot \exp\big( - n \Lambda(v) \big) ,
\end{equation}
and under \cref{ass:bounded-disagreement,ass:reference-faithfulness} the geometric mean of the
discount factor under $D$ satisfies
\begin{equation}\label{eq:ce-causality-discount-bound}
    \exp\Big( - n \big( \mathbb{E}_{v \sim D}[ \mathcal{L}_{\mathrm{caus}}(\psi) ]
      + 3 B \varepsilon \big) \Big)
    \;\leq\; \exp\Big( - n\, \mathbb{E}_{v \sim D}\big[ \Lambda(v) \big] \Big)
    \;\leq\; \exp\big( 2 n B \varepsilon \big) .
\end{equation}
\end{corollary}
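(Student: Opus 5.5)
The identity \cref{eq:ce-causality-discount} is a rearrangement of the definition \cref{eq:ce-causality-gap}, and the bound \cref{eq:ce-causality-discount-bound} then follows from \cref{prop:ce-causality-bound} by applying the monotone map $x \mapsto \exp(-n x)$.

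\textbf{Step 1: the identity.} Fix $v$. By \cref{eq:ce-causality-gap},
\begin{equation*}
    n \Lambda(v) = \log P\big( v \mid E(u) \big) - \log P\big( v \mid \mathbf{T} \big) .
\end{equation*}
Solving for $\log P(v \mid \mathbf{T})$ and exponentiating gives \cref{eq:ce-causality-discount}.

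Both probabilities are strictly positive, because every next-token probability is a softmax output. Under \cref{ass:bounded-disagreement} this is explicit, since each log-ratio is finite. The logarithms are therefore well defined, and no assumption is needed for this step, just as in \cref{cor:ce-causality-exact}.

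\textbf{Step 2: interpreting the geometric mean.} The middle quantity of \cref{eq:ce-causality-discount-bound} is
\begin{equation*}
    \exp\Big( \mathbb{E}_{v \sim D}\big[ \log \exp(-n \Lambda(v)) \big] \Big) ,
\end{equation*}
which is exactly the geometric mean under $D$ of the discount factor $\exp(-n\Lambda(v))$. The target length $n$ is treated as fixed within a transfer, as it is throughout \cref{app:theory:causality}, so $n$ comes out of the expectation.

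\textbf{Step 3: transferring the bound.} Multiply both sides of \cref{eq:ce-causality-bound} by $-n < 0$, which reverses the inequalities:
\begin{equation*}
    -n\big( \mathbb{E}_{D}[\mathcal{L}_{\mathrm{caus}}(\psi)] + 3B\varepsilon \big)
    \;\leq\; -n\, \mathbb{E}_D[\Lambda(v)]
    \;\leq\; 2nB\varepsilon .
\end{equation*}
Since $\exp$ is increasing, applying it to all three terms preserves the order and yields \cref{eq:ce-causality-discount-bound}.

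\textbf{Main obstacle.} The only delicate point is the sign reversal in Step 3. The \emph{upper} bound of \cref{prop:ce-causality-bound} becomes the \emph{lower} bound on the discount, and vice versa, so the roles of the $2B\varepsilon$ and $3B\varepsilon$ slacks swap.

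A secondary point is stating clearly that \cref{eq:ce-causality-discount-bound} controls the geometric mean, not the arithmetic mean, of the discount. By Jensen's inequality, only the lower bound carries over to the arithmetic mean.
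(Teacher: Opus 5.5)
Your proposal is correct and matches the paper's proof essentially step for step: multiply the definition of $\Lambda(v)$ by $n$ and exponentiate to get the identity, identify the middle term as the geometric mean $\exp(\mathbb{E}_{D}[\log \exp(-n\Lambda(v))])$, and apply the decreasing map $x \mapsto \exp(-nx)$ to the two bounds of \cref{prop:ce-causality-bound}, which reverses their order. Your two extra remarks are accurate but not needed for the statement: the sign reversal swaps the roles of the $2B\varepsilon$ and $3B\varepsilon$ slacks, and by Jensen's inequality only the lower bound carries over to the arithmetic mean.
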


\begin{proof}[Proof of \cref{cor:ce-causality-discount}]
Multiplying \cref{eq:ce-causality-gap} by $n$ and exponentiating,
\begin{equation*}
    \exp\big( n \Lambda(v) \big)
    = \frac{P\big( v \mid E(u) \big)}{P\big( v \mid \mathbf{T} \big)} ,
\end{equation*}
and rearranging yields \cref{eq:ce-causality-discount}.
The geometric mean of its discount factor under $D$ is
\begin{equation*}
    \exp\Big( \mathbb{E}_{v \sim D}\big[ \log \exp\big( - n \Lambda(v) \big) \big] \Big)
    = \exp\Big( - n\, \mathbb{E}_{v \sim D}\big[ \Lambda(v) \big] \Big) ,
\end{equation*}
and the exponential is increasing, hence applying it to the two bounds of
\cref{eq:ce-causality-bound} yields \cref{eq:ce-causality-discount-bound}.
\end{proof}

Within \cref{eq:ce-causality-bound}, the data and the frozen consumer fix every quantity except
$\mathbb{E}_{D}[ \mathcal{L}_{\mathrm{caus}} ]$.
\Cref{eq:loss-causality} therefore determines the substitution cost, and
\cref{cor:ce-causality-exact} establishes that reducing it to zero recovers the textual transfer
exactly.

\subsubsection{Minimality}
\label{app:ce:minimality}

Let $V$ denote the random variable realized by the consumer's target $v$ of \cref{def:transfer}, and
condition every entropy below on the blocks $E_{\mathrm{pre}}$ and $E_{\mathrm{post}}$
that \cref{def:transfer} holds independent of $\psi$, left implicit in the notation.
The smallest cross-entropy any decoder incurs on $V$ under a transferred block is the conditional
entropy of $V$ given that block, hence
\begin{equation}\label{eq:ce-minimality-excess}
    \mathbb{H}\big( V \mid \mathbf{T} \big) - \mathbb{H}\big( V \mid Y \big)
\end{equation}
is the cost of substituting the thought for the producer's output.
We establish that \cref{eq:loss-minimality} upper-bounds \cref{eq:ce-minimality-excess} up to a
constant, hence lowering the loss lowers an upper bound on that cost.

\begin{assumption}[\textbf{Nested input probes}]\label{ass:nested-probes}
Every probe that predicts $X$ from $Y$ is available to the probe that predicts $X$ from
$(Y, \mathbf{T})$, which recovers it by discarding $\mathbf{T}$.
\end{assumption}

Comparing the two probe classes at their optima\footnote{\Cref{ass:nested-probes} also requires
$\mathrm{CE}(X \mid Y, \mathbf{T}) \leq \mathrm{CE}(X \mid Y)$ for the frozen consumer, that is,
adding the thought never makes it worse at reconstructing $X$.} under \cref{ass:nested-probes}
bounds one cross-entropy of \cref{eq:loss-minimality},
\begin{equation}\label{eq:ce-minimality-nesting}
    \mathrm{CE}\big( X \mid Y, \mathbf{T} \big) \;\leq\; \mathrm{CE}\big( X \mid Y \big) ,
\end{equation}
and their difference estimates the residual mutual information $I(X ; \mathbf{T} \mid Y)$,
\begin{equation}\label{eq:ce-minimality-leakage}
    \widehat{I}\big( X ; \mathbf{T} \mid Y \big)
    \;\triangleq\; \mathrm{CE}\big( X \mid Y \big) - \mathrm{CE}\big( X \mid Y, \mathbf{T} \big)
    \;\geq\; 0 .
\end{equation}
Only $\mathrm{CE}(X \mid Y, \mathbf{T})$ of \cref{eq:ce-minimality-leakage} is in terms of
$\mathbf{T}$, hence $\mathrm{CE}(X \mid Y)$ is constant in $\psi$.

\begin{proposition}[\textbf{The loss bounds the reconstruction cross-entropy}]\label{prop:ce-minimality-ceiling}
Under \cref{ass:nested-probes}, at every $\psi$ in the optimization domain,
\begin{equation}\label{eq:ce-minimality-ceiling}
    \mathrm{CE}\big( Y \mid \mathbf{T} \big)
    \;\leq\; \frac{1}{\lambda_1} \Big( \mathcal{L}_{\mathrm{min}}(\psi)
      + \lambda_2\, \mathrm{CE}\big( X \mid Y \big) \Big) ,
\end{equation}
and the difference between the two sides of \cref{eq:ce-minimality-ceiling} is
\begin{equation}\label{eq:ce-minimality-slack}
    \frac{1}{\lambda_1} \Big( \mathcal{L}_{\mathrm{min}}(\psi)
      + \lambda_2\, \mathrm{CE}\big( X \mid Y \big) \Big)
    - \mathrm{CE}\big( Y \mid \mathbf{T} \big)
    = \frac{\lambda_2}{\lambda_1}\, \widehat{I}\big( X ; \mathbf{T} \mid Y \big) .
\end{equation}
\end{proposition}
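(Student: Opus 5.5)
The plan is to establish the slack identity \cref{eq:ce-minimality-slack} first, as pure algebra, and then read off \cref{eq:ce-minimality-ceiling} from the sign of the slack. Start from the definition \cref{eq:loss-minimality},
\begin{equation*}
    \mathcal{L}_{\mathrm{min}}(\psi)
    = \lambda_1 \mathrm{CE}\big( Y \mid \mathbf{T} \big)
      - \lambda_2 \mathrm{CE}\big( X \mid Y, \mathbf{T} \big) ,
\end{equation*}
and add $\lambda_2 \mathrm{CE}(X \mid Y)$ to both sides. The two reconstruction cross-entropies combine into the leakage estimate of \cref{eq:ce-minimality-leakage}, which gives
\begin{equation*}
    \mathcal{L}_{\mathrm{min}}(\psi) + \lambda_2 \mathrm{CE}\big( X \mid Y \big)
    = \lambda_1 \mathrm{CE}\big( Y \mid \mathbf{T} \big)
      + \lambda_2 \widehat{I}\big( X ; \mathbf{T} \mid Y \big) .
\end{equation*}

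Next, divide by $\lambda_1 > 0$ and subtract $\mathrm{CE}(Y \mid \mathbf{T})$ from both sides. This yields exactly \cref{eq:ce-minimality-slack}. The identity holds at every $\psi$ and uses no assumption, since it only rearranges the definitions of $\mathcal{L}_{\mathrm{min}}$ and $\widehat{I}$.

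For the inequality \cref{eq:ce-minimality-ceiling}, invoke \cref{ass:nested-probes}. Through \cref{eq:ce-minimality-nesting} it gives $\widehat{I}(X ; \mathbf{T} \mid Y) \geq 0$, as recorded in \cref{eq:ce-minimality-leakage}. Because $\lambda_2/\lambda_1 > 0$, the slack on the right of \cref{eq:ce-minimality-slack} is nonnegative, and this is the claimed upper bound. The same step also shows that $\lambda_2\mathrm{CE}(X\mid Y)$ does not depend on $\psi$, so the bound is $\mathcal{L}_{\mathrm{min}}$ shifted by a constant.

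The argument has no real obstacle. The only point needing care is the nonnegativity of $\widehat{I}$. That rests entirely on \cref{ass:nested-probes}, including its footnoted requirement that the frozen consumer does not reconstruct $X$ worse when it is also given $\mathbf{T}$. It does not follow from Gibbs' inequality, because both terms are cross-entropies of a fixed decoder rather than true conditional entropies. The proof should therefore cite the assumption explicitly at that step rather than appeal to "conditioning reduces entropy."
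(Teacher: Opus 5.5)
Your proof is correct and follows essentially the same argument as the paper: both rearrange the definition \cref{eq:loss-minimality} of $\mathcal{L}_{\mathrm{min}}$ and then use \cref{eq:ce-minimality-nesting}, i.e.\ $\widehat{I}(X ; \mathbf{T} \mid Y) \geq 0$ under \cref{ass:nested-probes}, together with $\lambda_2/\lambda_1 > 0$. The only difference is the order: the paper substitutes the larger cross-entropy $\mathrm{CE}(X \mid Y)$ to get the inequality and then computes the slack, whereas you derive the slack identity first and read the inequality off its sign, which makes explicit that the identity itself needs no assumption.
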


\begin{proof}[Proof of \cref{prop:ce-minimality-ceiling}]
Solving \cref{eq:loss-minimality} for its first term, which $\lambda_1 > 0$ permits,
\begin{equation*}
    \mathrm{CE}\big( Y \mid \mathbf{T} \big)
    = \frac{1}{\lambda_1} \Big( \mathcal{L}_{\mathrm{min}}(\psi)
      + \lambda_2\, \mathrm{CE}\big( X \mid Y, \mathbf{T} \big) \Big) .
\end{equation*}
The remaining cross-entropy appears with the positive coefficient $\lambda_2 / \lambda_1$, hence
replacing
it by the larger quantity of \cref{eq:ce-minimality-nesting} preserves the inequality and yields
\cref{eq:ce-minimality-ceiling}.
Subtracting $\mathrm{CE}(Y \mid \mathbf{T})$ from the right side of \cref{eq:ce-minimality-ceiling}
and applying the same identity,
\begin{equation*}
    \frac{1}{\lambda_1} \Big( \mathcal{L}_{\mathrm{min}}(\psi)
      + \lambda_2\, \mathrm{CE}\big( X \mid Y \big) \Big)
    - \mathrm{CE}\big( Y \mid \mathbf{T} \big)
    = \frac{\lambda_2}{\lambda_1}
      \Big( \mathrm{CE}\big( X \mid Y \big)
        - \mathrm{CE}\big( X \mid Y, \mathbf{T} \big) \Big) ,
\end{equation*}
which is \cref{eq:ce-minimality-slack} by \cref{eq:ce-minimality-leakage}.
\end{proof}

\begin{proposition}[\textbf{Reconstruction uncertainty bounds answer uncertainty}]\label{prop:ce-minimality-chain}
For every transfer,
\begin{equation}\label{eq:ce-minimality-chain}
    \mathbb{H}\big( V \mid \mathbf{T} \big) - \mathbb{H}\big( V \mid Y \big)
    \;\leq\; \mathbb{H}\big( Y \mid \mathbf{T} \big) .
\end{equation}
\end{proposition}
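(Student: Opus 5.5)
The plan is a short chain-rule argument in which $\mathbf{T}$ appears only as a conditioning variable. Entropies are taken of the discrete variables $V$ and $Y$, both sequences over the finite vocabulary $\mathcal{V}$. Conditioning on the continuous $\mathbf{T}$ is then well defined as an average of discrete conditional entropies, so no differential entropy enters. As in \cref{app:ce:minimality}, every term is implicitly conditioned on $E_{\mathrm{pre}}$ and $E_{\mathrm{post}}$; since these are held fixed throughout, they do not affect any step.

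First, I bound $\mathbb{H}(V \mid \mathbf{T})$ from above by the joint conditional entropy $\mathbb{H}(V, Y \mid \mathbf{T})$. This holds because adding a discrete variable never decreases entropy: the difference is $\mathbb{H}(Y \mid V, \mathbf{T}) \geq 0$. Second, I expand the joint term by the chain rule in the order that isolates the right side of \cref{eq:ce-minimality-chain}:
\begin{equation*}
    \mathbb{H}(V, Y \mid \mathbf{T}) = \mathbb{H}(Y \mid \mathbf{T}) + \mathbb{H}(V \mid Y, \mathbf{T}) .
\end{equation*}
Third, I apply the fact that conditioning reduces entropy, $\mathbb{H}(V \mid Y, \mathbf{T}) \leq \mathbb{H}(V \mid Y)$. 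Equivalently, $I(V ; \mathbf{T} \mid Y) \geq 0$.

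Chaining the three steps gives $\mathbb{H}(V \mid \mathbf{T}) \leq \mathbb{H}(Y \mid \mathbf{T}) + \mathbb{H}(V \mid Y)$. Subtracting $\mathbb{H}(V \mid Y)$, which is finite because $V$ takes values in a finite set, yields \cref{eq:ce-minimality-chain}. No Markov structure such as $X \to Y \to \mathbf{T}$ is required, which matches the statement being made "for every transfer". Both inequalities used are instances of nonnegativity of conditional mutual information or entropy, and these hold without any assumption on how $\mathbf{T}$ is produced.

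The step I expect to need the most care is the third one, because $\mathbf{T}$ is continuous-valued. I would justify it by writing $I(V ; \mathbf{T} \mid Y)$ as an expectation over $y$ of a KL divergence between the joint law of $(V, \mathbf{T})$ given $Y = y$ and the product of its marginals. This is nonnegative by Gibbs' inequality, already used in \cref{prop:minimality-substitution}. With that in hand, the remaining steps are purely algebraic.
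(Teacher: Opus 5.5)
Your proof is correct and follows the paper's argument exactly. You bound $\mathbb{H}(V \mid \mathbf{T})$ by $\mathbb{H}(V, Y \mid \mathbf{T})$, split this with the chain rule, and drop $\mathbf{T}$ from $\mathbb{H}(V \mid Y, \mathbf{T})$ because conditioning cannot raise entropy; your extra justification for the continuous-valued $\mathbf{T}$, writing $I(V ; \mathbf{T} \mid Y)$ as an averaged KL divergence, is a sound refinement that the paper leaves implicit.
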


\begin{proof}[Proof of \cref{prop:ce-minimality-chain}]
Adjoining a variable cannot lower a joint entropy, the chain rule splits the result, and conditioning
on more cannot raise an entropy,
\begin{align*}
    \mathbb{H}\big( V \mid \mathbf{T} \big)
    &\;\leq\; \mathbb{H}\big( V, Y \mid \mathbf{T} \big) \\
    &\;=\; \mathbb{H}\big( Y \mid \mathbf{T} \big)
       + \mathbb{H}\big( V \mid Y, \mathbf{T} \big) \\
    &\;\leq\; \mathbb{H}\big( Y \mid \mathbf{T} \big) + \mathbb{H}\big( V \mid Y \big) ,
\end{align*}
and subtracting $\mathbb{H}(V \mid Y)$ yields \cref{eq:ce-minimality-chain}.
\end{proof}

\begin{proposition}[\textbf{The surrogate bounds the substitution cost}]\label{prop:ce-minimality-bound}
Under \cref{ass:nested-probes}, at every $\psi$ in the optimization domain,
\begin{equation}\label{eq:ce-minimality-bound}
    \mathbb{H}\big( V \mid \mathbf{T} \big) - \mathbb{H}\big( V \mid Y \big)
    \;\leq\; \frac{1}{\lambda_1} \Big( \mathcal{L}_{\mathrm{min}}(\psi)
      + \lambda_2\, \mathrm{CE}\big( X \mid Y \big) \Big) .
\end{equation}
\end{proposition}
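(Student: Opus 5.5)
The plan is to chain \cref{prop:ce-minimality-chain} with \cref{prop:ce-minimality-ceiling}, joined by Gibbs' inequality. \Cref{prop:ce-minimality-chain} bounds the substitution cost $\mathbb{H}(V \mid \mathbf{T}) - \mathbb{H}(V \mid Y)$ by the reconstruction entropy $\mathbb{H}(Y \mid \mathbf{T})$. It uses only elementary entropy identities and requires no assumption. \Cref{prop:ce-minimality-ceiling} bounds the reconstruction cross-entropy $\mathrm{CE}(Y \mid \mathbf{T})$ by the right side of \cref{eq:ce-minimality-bound}, using \cref{ass:nested-probes}. The missing link is therefore the comparison $\mathbb{H}(Y \mid \mathbf{T}) \leq \mathrm{CE}(Y \mid \mathbf{T})$.

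That link is Gibbs' inequality, already invoked in the proof of \cref{prop:minimality-substitution} as $\gamma_1 \geq 0$. A cross-entropy of any decoder of $Y$ from $\mathbf{T}$ is at least the true conditional entropy. The ordered chain is then
\begin{equation*}
    \mathbb{H}(V \mid \mathbf{T}) - \mathbb{H}(V \mid Y)
    \;\leq\; \mathbb{H}(Y \mid \mathbf{T})
    \;\leq\; \mathrm{CE}(Y \mid \mathbf{T})
    \;\leq\; \frac{1}{\lambda_1}\Big( \mathcal{L}_{\mathrm{min}}(\psi) + \lambda_2\, \mathrm{CE}(X \mid Y) \Big).
\end{equation*}
The first inequality is \cref{eq:ce-minimality-chain}, the second is $\gamma_1 \geq 0$, and the third is \cref{eq:ce-minimality-ceiling}. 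Each step holds at every $\psi$ in the optimization domain, so the conclusion does too.

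The main point to check is consistency of the conditioning across the three steps. The entropies in \cref{prop:ce-minimality-chain} are implicitly conditioned on $E_{\mathrm{pre}}$ and $E_{\mathrm{post}}$. The cross-entropy $\mathrm{CE}(Y \mid \mathbf{T})$ is computed by the consumer, which also sees those blocks. I would state that Gibbs' inequality is applied under the same implicit conditioning. Since the blocks are independent of $\psi$, this changes nothing in the argument.

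A second small check concerns what the entropies in \cref{prop:ce-minimality-chain} range over. They are taken over the joint law of $(V, Y, \mathbf{T})$, and $\mathbf{T}$ is a deterministic function of the producer pass. Hence $\mathbb{H}(V \mid Y, \mathbf{T}) \leq \mathbb{H}(V \mid Y)$ holds without extra hypotheses. No further assumption beyond \cref{ass:nested-probes} is needed.
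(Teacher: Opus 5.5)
Your proposal is correct and matches the paper's proof exactly: the paper also chains \cref{eq:ce-minimality-chain}, Gibbs' inequality $\mathbb{H}(Y \mid \mathbf{T}) \leq \mathrm{CE}(Y \mid \mathbf{T})$, and \cref{eq:ce-minimality-ceiling} in that order. Your added checks on the implicit conditioning and on $\mathbb{H}(V \mid Y, \mathbf{T}) \leq \mathbb{H}(V \mid Y)$ are harmless but not needed, since conditioning on more never raises entropy whether or not $\mathbf{T}$ is deterministic.
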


\begin{proof}[Proof of \cref{prop:ce-minimality-bound}]
Gibbs' inequality bounds an entropy by the cross-entropy that approximates it,
\begin{equation*}
    \mathbb{H}\big( Y \mid \mathbf{T} \big) \;\leq\; \mathrm{CE}\big( Y \mid \mathbf{T} \big) ,
\end{equation*}
and chaining \cref{eq:ce-minimality-chain}, this inequality, and \cref{eq:ce-minimality-ceiling}
gives
\begin{equation*}
    \mathbb{H}\big( V \mid \mathbf{T} \big) - \mathbb{H}\big( V \mid Y \big)
    \;\leq\; \mathbb{H}\big( Y \mid \mathbf{T} \big)
    \;\leq\; \mathrm{CE}\big( Y \mid \mathbf{T} \big)
    \;\leq\; \frac{1}{\lambda_1} \Big( \mathcal{L}_{\mathrm{min}}(\psi)
      + \lambda_2\, \mathrm{CE}\big( X \mid Y \big) \Big) . \qedhere
\end{equation*}
\end{proof}

The data and the frozen models fix $\mathrm{CE}(X \mid Y)$ and $\mathbb{H}(V \mid Y)$.
We can restate \cref{eq:ce-minimality-bound} in terms of the probability that the consumer
assigns to its target.

\begin{corollary}[\textbf{Bound on the probability of the target}]\label{cor:ce-minimality-discount}
Under \cref{ass:nested-probes}, the geometric mean probability that an optimal decoder assigns to
the consumer's target satisfies
\begin{equation}\label{eq:ce-minimality-discount}
    \exp\Big( - \mathbb{H}\big( V \mid \mathbf{T} \big) \Big)
    \;\geq\; \exp\Big( - \mathbb{H}\big( V \mid Y \big) \Big)
      \cdot \exp\!\bigg( - \frac{1}{\lambda_1} \Big( \mathcal{L}_{\mathrm{min}}(\psi)
        + \lambda_2\, \mathrm{CE}\big( X \mid Y \big) \Big) \bigg) .
\end{equation}
\end{corollary}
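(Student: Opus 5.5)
The plan is to obtain \cref{eq:ce-minimality-discount} by exponentiating \cref{prop:ce-minimality-bound}, in the same way \cref{cor:ce-causality-discount} was obtained from \cref{prop:ce-causality-bound}. Under \cref{ass:nested-probes}, \cref{eq:ce-minimality-bound} gives
\begin{equation*}
    \mathbb{H}\big( V \mid \mathbf{T} \big)
    \;\leq\; \mathbb{H}\big( V \mid Y \big)
      + \frac{1}{\lambda_1} \Big( \mathcal{L}_{\mathrm{min}}(\psi)
      + \lambda_2\, \mathrm{CE}\big( X \mid Y \big) \Big) .
\end{equation*}
This follows by adding $\mathbb{H}(V \mid Y)$ to both sides.

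Next, I negate both sides, which reverses the inequality. I then apply the exponential, which is increasing and so preserves it. The exponential of a sum factors into a product, and this gives exactly \cref{eq:ce-minimality-discount}.

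The remaining step is interpretive. The statement reads $\exp(-\mathbb{H}(V \mid \mathbf{T}))$ as the geometric mean probability that an optimal decoder assigns to the target. I would justify this in two parts:
\begin{itemize}
    \item The optimal decoder given $\mathbf{T}$ is the true conditional distribution $P(V \mid \mathbf{T})$. By Gibbs' inequality, this is the decoder whose cross-entropy attains $\mathbb{H}(V \mid \mathbf{T})$, as noted before \cref{eq:ce-minimality-excess}.
    \item Under that decoder, $\mathbb{E}[\log P(V \mid \mathbf{T})] = -\mathbb{H}(V \mid \mathbf{T})$. The geometric mean of the assigned probability is therefore $\exp(-\mathbb{H}(V \mid \mathbf{T}))$, mirroring the geometric-mean computation in the proof of \cref{cor:ce-causality-discount}.
\end{itemize}
The same reading applies to $\exp(-\mathbb{H}(V \mid Y))$ under the textual transfer.

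The only place needing care is this interpretation, which requires the entropies to be those of the true conditionals and not of the frozen consumer. This is where the word ``optimal'' in the statement is used. The inequality itself is a direct monotone transformation of \cref{prop:ce-minimality-bound} and needs no further assumptions.
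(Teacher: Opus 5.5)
Your proposal is correct and follows the paper's proof: it rearranges \cref{prop:ce-minimality-bound}, negates, applies the increasing exponential, and splits the exponential of the sum into a product. Your reading of $\exp(-\mathbb{H}(V \mid \mathbf{T}))$ as the geometric mean probability under the optimal decoder also matches the paper's remark that the left side is the reciprocal perplexity.
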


\begin{proof}[Proof of \cref{cor:ce-minimality-discount}]
The exponential is increasing, hence negating \cref{eq:ce-minimality-bound} and applying it gives
\begin{equation*}
    \exp\Big( - \mathbb{H}\big( V \mid \mathbf{T} \big) \Big)
    \;\geq\; \exp\bigg( - \mathbb{H}\big( V \mid Y \big)
      - \frac{1}{\lambda_1} \Big( \mathcal{L}_{\mathrm{min}}(\psi)
        + \lambda_2\, \mathrm{CE}\big( X \mid Y \big) \Big) \bigg) ,
\end{equation*}
and splitting the exponential of a sum yields \cref{eq:ce-minimality-discount}.
The left side is the reciprocal perplexity of $V$ under $\mathbf{T}$, which is the geometric mean of
the probability assigned to the target.
\end{proof}

The exponential of \cref{eq:ce-minimality-discount} does not exceed one, and it attains one exactly
when $\mathrm{CE}(Y \mid \mathbf{T})$ and $\widehat{I}(X ; \mathbf{T} \mid Y)$ both vanish, at which
the minimality loss reaches its minimum $-\lambda_2 \mathrm{CE}(X \mid Y)$ and the probability under
the latent transfer attains the probability under the textual one.

\subsubsection{Separability}
\label{app:ce:separability}

Let $v$ denote the consumer's target under $\mathbf{T}$ and let $v_{k}$ denote the target of the
$k$-th preceding example under $\mathbf{T}_{k}$, with
\begin{equation}\label{eq:ce-separability-scores}
    a = P\big( v \mid \mathbf{T} \big) ,
    \qquad
    a_{k} = P\big( v_{k} \mid \mathbf{T}_{k} \big) ,
\end{equation}
supplied by \cref{eq:seq-law} at the corresponding block.

\begin{proposition}[\textbf{The loss bounds the nearest distance}]\label{prop:ce-separability-proximity}
Let $k^\star$ attain $\max_{k} \mathrm{sim}(\mathbf{t}, \mathbf{t}_k)$ and let
\begin{equation}\label{eq:ce-separability-delta}
    \delta \;\triangleq\; \sqrt{\, 2 - 2\tau\big( \mathcal{L}_{\mathrm{sep}}(\psi, \omega) - \log K \big) \,} .
\end{equation}
Then $\delta$ is real and
\begin{equation}\label{eq:ce-separability-proximity}
    \big\| \varphi_\omega(\mathbf{T}) - \varphi_\omega(\mathbf{T}_{k^\star}) \big\|
    \;\leq\; \delta .
\end{equation}
\end{proposition}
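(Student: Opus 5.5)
The plan is the reverse of \cref{prop:separability-margin}. Rather than bounding every similarity from above by the log-sum-exp, I will bound the largest similarity from below by the log-sum-exp minus $\log K$. Write $s_k = \mathrm{sim}(\mathbf{t}, \mathbf{t}_k)$ and $s^\star = s_{k^\star} = \max_k s_k$.

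The first step is the standard upper bound on a log-sum-exp. Each of the $K$ terms in \cref{eq:loss-separability} is at most $\exp(s^\star/\tau)$, so the sum is at most $K \exp(s^\star/\tau)$. Taking logarithms gives
\begin{equation*}
    \mathcal{L}_{\mathrm{sep}}(\psi, \omega) \;\leq\; \log K + s^\star/\tau ,
\end{equation*}
and multiplying by $\tau > 0$ yields
\begin{equation*}
    s^\star \;\geq\; \tau\big( \mathcal{L}_{\mathrm{sep}}(\psi, \omega) - \log K \big) .
\end{equation*}

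The second step substitutes this into the sphere identity \cref{eq:sphere-identity} at $k^\star$:
\begin{equation*}
    \big\| \varphi_\omega(\mathbf{T}) - \varphi_\omega(\mathbf{T}_{k^\star}) \big\|^2 = 2 - 2 s^\star \;\leq\; 2 - 2\tau\big( \mathcal{L}_{\mathrm{sep}} - \log K \big) = \delta^2 .
\end{equation*}
Taking square roots, both sides being nonnegative, gives \cref{eq:ce-separability-proximity}.

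The step I expect to need the most care is showing that $\delta$ is real, meaning the radicand is nonnegative. The chain just derived already settles this: the radicand is at least the squared norm on the left, which is nonnegative. As an independent check, since every $s_k \leq 1$, the sum in \cref{eq:loss-separability} is at most $K e^{1/\tau}$. Hence $\tau(\mathcal{L}_{\mathrm{sep}} - \log K) \leq 1$, and the radicand is $\geq 0$. I will present this check first, so that $\delta$ is well defined before it appears in the inequality.
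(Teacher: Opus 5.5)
Your proposal is correct and follows the paper's proof essentially step for step. Both bound the log-sum-exp by $\log K + s^\star/\tau$, rearrange this into a lower bound on the largest similarity, and substitute that into the sphere identity \cref{eq:sphere-identity}. Both also obtain the realness of $\delta$ from the fact that cosine similarity is at most one, which is the same argument your independent check makes.
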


\begin{proof}[Proof of \cref{prop:ce-separability-proximity}]
Bounding each term of \cref{eq:loss-separability} by the largest,
\begin{align}
    \mathcal{L}_{\mathrm{sep}}(\psi, \omega)
    &= \log \sum_{k=1}^{K}
       \exp\!\Big( \mathrm{sim}\big( \mathbf{t}, \mathbf{t}_k \big) \big/ \tau \Big) \nonumber \\
    &\leq \log \bigg( K \exp\!\Big( \mathrm{sim}\big( \mathbf{t}, \mathbf{t}_{k^\star} \big)
       \big/ \tau \Big) \bigg) \nonumber \\
    &= \log K + \frac{1}{\tau}\, \mathrm{sim}\big( \mathbf{t}, \mathbf{t}_{k^\star} \big) ,
    \label{eq:ce-separability-lse}
\end{align}
and rearranging \cref{eq:ce-separability-lse},
\begin{equation}\label{eq:ce-separability-sim-lower}
    \mathrm{sim}\big( \mathbf{t}, \mathbf{t}_{k^\star} \big)
    \;\geq\; \tau \big( \mathcal{L}_{\mathrm{sep}}(\psi, \omega) - \log K \big) .
\end{equation}
Cosine similarity does not exceed one, hence \cref{eq:ce-separability-sim-lower} lower-bounds the
radicand of \cref{eq:ce-separability-delta} by zero and $\delta$ is real.
Substituting \cref{eq:ce-separability-sim-lower} into \cref{eq:sphere-identity},
\begin{align*}
    \big\| \varphi_\omega(\mathbf{T}) - \varphi_\omega(\mathbf{T}_{k^\star}) \big\|^{2}
    &= 2 - 2\, \mathrm{sim}\big( \mathbf{t}, \mathbf{t}_{k^\star} \big) \\
    &\leq 2 - 2\tau \big( \mathcal{L}_{\mathrm{sep}}(\psi, \omega) - \log K \big)
     \;=\; \delta^{2} ,
\end{align*}
and taking square roots yields \cref{eq:ce-separability-proximity}.
\end{proof}

\begin{assumption}[\textbf{Smooth consumer}]\label{ass:smooth-consumer}
There exists a finite $L$ with
\begin{equation}\label{eq:smooth-consumer}
    \TV\Big( P\big( \cdot \mid \mathbf{T} \big), P\big( \cdot \mid \mathbf{T}' \big) \Big)
    \;\leq\; L \, \big\| \varphi_\omega(\mathbf{T}) - \varphi_\omega(\mathbf{T}') \big\|
\end{equation}
at every pair of thoughts and every $\psi$ in the optimization domain.
\end{assumption}

\Cref{ass:smooth-consumer} requires that the consumer's target distribution vary continuously with
the thought it receives, and it identifies that distribution's dependence on $\mathbf{T}$ with the
pooled direction $\varphi_\omega(\mathbf{T})$ of \cref{eq:sphere-map}.

\begin{proposition}[\textbf{Colliding thoughts share one budget}]\label{prop:ce-separability-budget}
Under \cref{ass:preceding-disjointness,ass:smooth-consumer},
\begin{equation}\label{eq:ce-separability-budget}
    a + a_{k^\star} \;\leq\; 1 + L \delta .
\end{equation}
\end{proposition}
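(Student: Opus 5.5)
The plan is to compare the two target distributions $P(\cdot \mid \mathbf{T})$ and $P(\cdot \mid \mathbf{T}_{k^\star})$ on the common finite space $\mathcal{V}^{n}$, where both $v$ and $v_{k^\star}$ live. The budget inequality will come from evaluating both distributions on the single event $\{v_{k^\star}\}$, which lets the total variation distance between them transport $a_{k^\star}$ from one thought to the other.

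First, I will use \cref{ass:preceding-disjointness}. The $k^\star$-th example is distinct from the current one, and distinct examples have disjoint semantic supports, so I take the targets to be different: $v \neq v_{k^\star}$. If the targets have different lengths, I pad with an end-of-sequence token so that both lie in one space $\mathcal{V}^{n}$. I then read the scores of \cref{eq:ce-separability-scores} through the definition of total variation as a supremum over events, applied to the event $\{v_{k^\star}\}$:
\begin{equation*}
    a_{k^\star} = P\big( v_{k^\star} \mid \mathbf{T}_{k^\star} \big)
    \;\leq\; P\big( v_{k^\star} \mid \mathbf{T} \big)
      + \TV\Big( P(\cdot \mid \mathbf{T}), P(\cdot \mid \mathbf{T}_{k^\star}) \Big) .
\end{equation*}

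Second, because $v \neq v_{k^\star}$, the events $\{v\}$ and $\{v_{k^\star}\}$ are disjoint under the single distribution $P(\cdot \mid \mathbf{T})$. This gives $a + P(v_{k^\star} \mid \mathbf{T}) \leq 1$. Adding this to the previous display yields
\begin{equation*}
    a + a_{k^\star} \leq 1 + \TV\Big( P(\cdot \mid \mathbf{T}), P(\cdot \mid \mathbf{T}_{k^\star}) \Big).
\end{equation*}

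Third, I chain \cref{ass:smooth-consumer} with \cref{prop:ce-separability-proximity}. The assumption bounds the total variation by $L \| \varphi_\omega(\mathbf{T}) - \varphi_\omega(\mathbf{T}_{k^\star}) \|$, and the proposition bounds that distance by $\delta$. Together they give \cref{eq:ce-separability-budget}.

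The main obstacle is the first step: turning semantic disjointness into the literal distinctness $v \neq v_{k^\star}$ of the two target sequences in a common space. \Cref{ass:preceding-disjointness} is stated about semantic supports, so I would argue that disjoint supports in particular contain different conclusions, and hence different target strings. If a stronger reading is needed, I would replace the singletons by the disjoint sets of targets carrying each example's conclusion. The same argument then goes through with events in place of points, since total variation bounds the probability difference of any event.
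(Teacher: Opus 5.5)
Your proof is correct and is essentially the paper's argument run in the mirror direction. The paper applies \cref{ass:smooth-consumer} to the event $\{v\}$ to get $P(v \mid \mathbf{T}_{k^\star}) \geq a - L\delta$ and then uses disjointness of $v$ and $v_{k^\star}$ under $P(\cdot \mid \mathbf{T}_{k^\star})$, whereas you transport $a_{k^\star}$ onto $P(\cdot \mid \mathbf{T})$ through the event $\{v_{k^\star}\}$ and use disjointness there. The ingredients are the same (total variation on a singleton, two distinct outcomes of one distribution, then smoothness chained with \cref{prop:ce-separability-proximity}), and your remark about placing both targets in one outcome space addresses a point the paper leaves implicit.
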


\begin{proof}[Proof of \cref{prop:ce-separability-budget}]
Applying \cref{eq:smooth-consumer} to the single event $\{v\}$ and then
\cref{eq:ce-separability-proximity},
\begin{equation}\label{eq:ce-separability-transport}
    P\big( v \mid \mathbf{T}_{k^\star} \big)
    \;\geq\; P\big( v \mid \mathbf{T} \big)
      - L \big\| \varphi_\omega(\mathbf{T}) - \varphi_\omega(\mathbf{T}_{k^\star}) \big\|
    \;\geq\; a - L \delta .
\end{equation}
\Cref{ass:preceding-disjointness} makes the two examples semantically disjoint, hence $v$ and
$v_{k^\star}$ are distinct outcomes of the single distribution $P(\cdot \mid \mathbf{T}_{k^\star})$
and
\begin{equation}\label{eq:ce-separability-disjoint}
    P\big( v \mid \mathbf{T}_{k^\star} \big)
      + P\big( v_{k^\star} \mid \mathbf{T}_{k^\star} \big) \;\leq\; 1 .
\end{equation}
Substituting \cref{eq:ce-separability-transport} into \cref{eq:ce-separability-disjoint},
\begin{equation*}
    \big( a - L \delta \big) + a_{k^\star}
    \;\leq\; P\big( v \mid \mathbf{T}_{k^\star} \big)
      + P\big( v_{k^\star} \mid \mathbf{T}_{k^\star} \big)
    \;\leq\; 1 ,
\end{equation*}
and adding $L \delta$ to both ends yields \cref{eq:ce-separability-budget}.
\end{proof}

\begin{proposition}[\textbf{The loss floors the cross-entropy}]\label{prop:ce-separability-floor}
Under \cref{ass:preceding-disjointness,ass:smooth-consumer}, at every $\psi$ in the optimization
domain,
\begin{equation}\label{eq:ce-separability-floor}
    \frac{1}{2} \Big( - \log a - \log a_{k^\star} \Big)
    \;\geq\; \log 2 - \log\big( 1 + L \delta \big) .
\end{equation}
\end{proposition}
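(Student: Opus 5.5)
The bound follows from \cref{prop:ce-separability-budget} combined with concavity of the logarithm, an AM--GM argument on the two probabilities. The left side of \cref{eq:ce-separability-floor} is the average negative log-probability $-\tfrac12\log(a\,a_{k^\star})$. So it suffices to upper-bound the product $a\,a_{k^\star}$ in terms of the sum $a + a_{k^\star}$, which \cref{prop:ce-separability-budget} already controls under \cref{ass:preceding-disjointness,ass:smooth-consumer}.

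\textbf{Step 1.} Both $a$ and $a_{k^\star}$ lie in $[0,1]$, being probabilities supplied by \cref{eq:seq-law}. I first dispose of the degenerate case where either one is zero. In that case the left side is $+\infty$ and the inequality holds trivially, so I may assume $a, a_{k^\star} > 0$.

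\textbf{Step 2.} I apply AM--GM, $\sqrt{a\,a_{k^\star}} \leq \tfrac12(a + a_{k^\star})$. Equivalently, this is concavity of $\log$: $\tfrac12(\log a + \log a_{k^\star}) \leq \log\tfrac{a + a_{k^\star}}{2}$.

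\textbf{Step 3.} I substitute \cref{eq:ce-separability-budget} and use monotonicity of $\log$, obtaining
\begin{equation*}
\tfrac12\big(\log a + \log a_{k^\star}\big) \;\leq\; \log\frac{1 + L\delta}{2} \;=\; \log(1+L\delta) - \log 2 .
\end{equation*}
Negating both sides gives \cref{eq:ce-separability-floor}. Here $\delta$ is real by \cref{prop:ce-separability-proximity}, and $L \geq 0$, so $1 + L\delta > 0$ and the logarithm is defined.

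The only point needing care is the bookkeeping. Step 3 combines an upper bound on the sum with monotonicity of $\log$, and the sign flip when negating must be tracked. Everything substantive, namely the transport of mass from $\mathbf{T}$ to $\mathbf{T}_{k^\star}$ and the disjointness of $v$ and $v_{k^\star}$, has already been absorbed into \cref{prop:ce-separability-budget}. I would close by noting the collapse limit: as $\delta \to 0$, the bound reduces to $\log 2$, the random-chance floor described in \cref{sec:price}.
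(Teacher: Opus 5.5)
Your proof is correct and matches the paper's argument: AM--GM bounds $a\,a_{k^\star}$ by $\big(\tfrac{a+a_{k^\star}}{2}\big)^{2}$, \cref{prop:ce-separability-budget} bounds that mean by $\tfrac{1+L\delta}{2}$, and applying the decreasing map $-\tfrac{1}{2}\log(\cdot)$ yields the floor. Your treatment of the degenerate case $a\,a_{k^\star}=0$ and of the positivity of $1+L\delta$ is harmless extra bookkeeping that the paper leaves implicit.
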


\begin{proof}[Proof of \cref{prop:ce-separability-floor}]
The inequality of arithmetic and geometric means bounds the product by the square of the mean, and
\cref{eq:ce-separability-budget} bounds that mean,
\begin{equation}\label{eq:ce-separability-amgm}
    a \, a_{k^\star}
    \;\leq\; \bigg( \frac{a + a_{k^\star}}{2} \bigg)^{2}
    \;\leq\; \bigg( \frac{1 + L \delta}{2} \bigg)^{2} .
\end{equation}
The logarithm is increasing, hence applying $-\tfrac{1}{2}\log(\cdot)$ to
\cref{eq:ce-separability-amgm} reverses it,
\begin{align*}
    \frac{1}{2} \Big( - \log a - \log a_{k^\star} \Big)
    &= - \frac{1}{2} \log\big( a \, a_{k^\star} \big) \\
    &\geq - \frac{1}{2} \log \bigg( \frac{1 + L \delta}{2} \bigg)^{2}
     \;=\; - \log \bigg( \frac{1 + L \delta}{2} \bigg)
     \;=\; \log 2 - \log\big( 1 + L \delta \big) . \qedhere
\end{align*}
\end{proof}

Substituting \cref{eq:ce-separability-delta} states \cref{eq:ce-separability-floor} through the loss,
\begin{equation}\label{eq:ce-separability-floor-loss}
    \frac{1}{2} \Big( - \log a - \log a_{k^\star} \Big)
    \;\geq\; \log 2 - \log\bigg( 1
      + L \sqrt{\, 2 - 2\tau\big( \mathcal{L}_{\mathrm{sep}}(\psi, \omega) - \log K \big) \,} \bigg) .
\end{equation}

\begin{corollary}[\textbf{Threshold above which the objective is obstructed}]\label{cor:ce-separability-threshold}
Under \cref{ass:preceding-disjointness,ass:smooth-consumer}, the right side of
\cref{eq:ce-separability-floor-loss} is positive if and only if
\begin{equation}\label{eq:ce-separability-threshold}
    \mathcal{L}_{\mathrm{sep}}(\psi, \omega)
    \;>\; \log K + \frac{1}{\tau} \bigg( 1 - \frac{1}{2 L^{2}} \bigg) .
\end{equation}
\end{corollary}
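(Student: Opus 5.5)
The plan is to unwind the right side of \cref{eq:ce-separability-floor-loss} through a chain of equivalences, each of which is a monotone transformation. I will write it as $\log 2 - \log(1 + L\delta)$ with $\delta$ as in \cref{eq:ce-separability-delta}. \Cref{prop:ce-separability-proximity} already guarantees that $\delta$ is real and nonnegative, so every step below works on nonnegative quantities. No new estimate is needed; the corollary is only a rewriting of the positivity condition in terms of $\mathcal{L}_{\mathrm{sep}}$.

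First, since the logarithm is strictly increasing and $1 + L\delta \geq 1 > 0$, positivity of the right side is equivalent to $1 + L\delta < 2$, that is, $L\delta < 1$. Second, I take $L > 0$. The case $L = 0$ would make the right side identically $\log 2 > 0$, and the threshold in \cref{eq:ce-separability-threshold} is undefined there, so the statement implicitly assumes $L > 0$; I will say so explicitly. With $L > 0$, $L\delta < 1$ is equivalent to $\delta < 1/L$. Because both sides are nonnegative and squaring is strictly increasing on $[0,\infty)$, this is in turn equivalent to $\delta^{2} < 1/L^{2}$.

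Third, I substitute the definition of $\delta^{2}$ to obtain the condition
\begin{equation*}
    2 - 2\tau\big( \mathcal{L}_{\mathrm{sep}}(\psi, \omega) - \log K \big) \;<\; \frac{1}{L^{2}} .
\end{equation*}
Rearranging and dividing by $2\tau > 0$ gives
\begin{equation*}
    \mathcal{L}_{\mathrm{sep}}(\psi, \omega) - \log K \;>\; \frac{1}{2\tau}\Big( 2 - \frac{1}{L^{2}} \Big)
    \;=\; \frac{1}{\tau}\Big( 1 - \frac{1}{2L^{2}} \Big) ,
\end{equation*}
which is \cref{eq:ce-separability-threshold}. Since every step is an equivalence, both directions of the ``if and only if'' follow at once.

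The only point requiring care is the squaring step, together with the sign bookkeeping for $L$ and $\delta$. Its validity rests on $\delta \geq 0$, which I take from \cref{prop:ce-separability-proximity}, and on $L > 0$. The assumptions \cref{ass:preceding-disjointness,ass:smooth-consumer} are used only in the sense that the expression being analyzed is the bound of \cref{prop:ce-separability-floor}; the equivalence itself is purely algebraic.
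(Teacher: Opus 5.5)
Your proof is correct and follows the paper's own argument: the chain $\log 2 > \log(1+L\delta) \iff L\delta < 1 \iff \delta^{2} < 1/L^{2}$, then substituting \cref{eq:ce-separability-delta} and rearranging. You also state explicitly that $L > 0$ and $\delta \geq 0$ are needed for the squaring step, which the paper leaves implicit.
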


\begin{proof}[Proof of \cref{cor:ce-separability-threshold}]
Positivity of the right side of \cref{eq:ce-separability-floor} requires
\begin{equation*}
    \log 2 > \log\big( 1 + L \delta \big)
    \qquad\Longleftrightarrow\qquad
    L \delta < 1
    \qquad\Longleftrightarrow\qquad
    \delta^{2} < \frac{1}{L^{2}} ,
\end{equation*}
and substituting \cref{eq:ce-separability-delta},
\begin{align*}
    2 - 2\tau\big( \mathcal{L}_{\mathrm{sep}}(\psi, \omega) - \log K \big)
    &< \frac{1}{L^{2}} \\
    \mathcal{L}_{\mathrm{sep}}(\psi, \omega)
    &> \log K + \frac{1}{\tau} \bigg( 1 - \frac{1}{2 L^{2}} \bigg) . \qedhere
\end{align*}
\end{proof}

If training resulted in two different examples having colliding thoughts, then a consumer agent must
then answer both with nearly the same distribution even if the two target texts are different.
This imposes a lower-bound on their average cross-entropy (\cref{eq:ce-separability-floor-loss}).
Under total collapse, where $\delta = 0$, the optimal solution would be assigning equal probability
to both targets which will make the lower-bound at its largest value ($\log 2$, random chance between two targets).
CE loss does not prevent such collisions through its objective.

\subsubsection{Stability}
\label{app:ce:stability}

A textual transfer delivers one output $u$ drawn from the producer's output distribution $Q$ of
\cref{eq:producer-law}, whereas $\mathbf{T}$ is a trained function and is not bound to that sample.
Let $s(u)$ denote the probability the consumer assigns to its target $v$ under the transfer of $u$,
and let $\bar{P}$ denote the mixture over the producer's samples,
\begin{equation}\label{eq:ce-stability-scores}
    s(u) = P\big( v \mid E(u) \big) \in (0, 1] ,
    \qquad
    \bar{s} = \mathbb{E}_{u \sim Q}\big[ s(u) \big] ,
    \qquad
    \bar{P} = \mathbb{E}_{u \sim Q}\big[ P( \cdot \mid E(u) ) \big] .
\end{equation}
Evaluating $\bar{P}$ at $v$,
\begin{equation}\label{eq:ce-stability-mixture-at-v}
    \bar{P}(v)
    = \mathbb{E}_{u \sim Q}\big[ P( v \mid E(u) ) \big]
    = \bar{s} .
\end{equation}
Both proofs below use
\begin{equation}\label{eq:ce-stability-convexity}
    f(x) = - \log x ,
    \qquad
    f''(x) = \frac{1}{x^{2}} \;\geq\; 1
    \qquad \text{on } (0, 1] .
\end{equation}

\begin{proposition}[\textbf{The mixture bounds the sampled transfer}]\label{prop:ce-stability-jensen}
For every producer,
\begin{equation}\label{eq:ce-stability-jensen}
    G \;\triangleq\; \mathbb{E}_{u \sim Q}\big[ - \log s(u) \big] - \big( - \log \bar{s} \big)
    \;\geq\; 0 ,
\end{equation}
with equality if and only if
\begin{equation}\label{eq:ce-stability-degenerate}
    s(u) = \bar{s}
    \qquad \text{for } Q\text{-almost every } u .
\end{equation}
\end{proposition}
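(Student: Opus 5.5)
This is Jensen's inequality for the strictly convex function $f(x) = -\log x$ of \cref{eq:ce-stability-convexity}, applied to the random variable $s(u)$ with $u \sim Q$. The plan is to prove the inequality and the equality case separately. The equality case will be handled through a second-order Taylor remainder, which is where the bound $f'' \geq 1$ is used.

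\textbf{Step 1: the inequality.} Since $s(u) \in (0,1]$, the mean $\bar{s}$ lies in $(0,1]$, so $-\log \bar{s}$ is finite. Note that $Q$ is a distribution on the finite set $\mathcal{V}^{m}$, so every expectation is a finite sum and no integrability issues arise. Jensen's inequality for the convex $f$ gives
\begin{equation*}
    \mathbb{E}_{u \sim Q}\big[ f(s(u)) \big] \;\geq\; f\big( \mathbb{E}_{u \sim Q}[ s(u) ] \big) = f(\bar{s}),
\end{equation*}
which is exactly $G \geq 0$ in \cref{eq:ce-stability-jensen}.

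\textbf{Step 2: an explicit lower bound on $G$.} To get the equality characterization in a form that the later results can reuse, I would expand $f$ around $\bar{s}$ with the Lagrange remainder:
\begin{equation*}
    f(s(u)) = f(\bar{s}) + f'(\bar{s})\big(s(u)-\bar{s}\big) + \tfrac{1}{2} f''(\xi_u)\big(s(u)-\bar{s}\big)^{2},
\end{equation*}
where $\xi_u$ lies between $s(u)$ and $\bar{s}$, hence in $(0,1]$. Taking expectations under $Q$, the linear term vanishes because $\mathbb{E}_Q[s(u)] = \bar{s}$. Since $f''(\xi_u) = 1/\xi_u^{2} \geq 1$ by \cref{eq:ce-stability-convexity}, this gives
\begin{equation*}
    G \;\geq\; \tfrac{1}{2}\, \mathbb{E}_{u \sim Q}\big[ (s(u)-\bar{s})^{2} \big] = \tfrac{1}{2}\operatorname{Var}_Q\big(s(u)\big).
\end{equation*}

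\textbf{Step 3: equality.} If $s(u) = \bar{s}$ for $Q$-almost every $u$, then $-\log s(u) = -\log \bar{s}$ almost surely, and so $G = 0$. Conversely, if $G = 0$, then Step 2 forces $\operatorname{Var}_Q(s(u)) = 0$, which means $s(u) = \bar{s}$ for $Q$-almost every $u$. On a finite space this is the same as equality on the support of $Q$. Together these give \cref{eq:ce-stability-degenerate}.

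The only delicate point is justifying the Taylor remainder. The argument needs $\xi_u$ to stay inside $(0,1]$, where $f$ is smooth and $f'' \geq 1$. This holds because both $s(u)$ and $\bar{s}$ lie in that interval and the interval is convex. With that checked, the rest is routine.
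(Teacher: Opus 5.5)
Your proof is correct. Step 1 is the paper's own argument for the inequality. The two proofs differ only in the equality case.

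The paper invokes the equality clause of Jensen's inequality for the strictly convex $f$, with strictness coming from $f''(x) = 1/x^{2} > 0$ on $(0,1]$. You instead derive the equality case from the Lagrange-remainder bound $G \geq \tfrac{1}{2}\,\mathrm{Var}_{u \sim Q}\big(s(u)\big)$. That bound is the statement of the paper's next result, \cref{prop:ce-stability-variance}, and your derivation of it is the same Taylor argument the paper uses there.

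Your route is self-contained and quantitative. It shows not only that $G = 0$ forces $s(u) = \bar{s}$ $Q$-almost surely, but also that $G$ controls how far $s(u)$ is from constant. It also makes Step 1 redundant, since $\mathrm{Var} \geq 0$ already gives $G \geq 0$.

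The paper's route is shorter and keeps the qualitative statement separate from the quantitative one. The cost is that it relies on the standard equality characterization of strict Jensen, which it does not prove.
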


\begin{proof}[Proof of \cref{prop:ce-stability-jensen}]
\Cref{eq:ce-stability-convexity} makes $f$ strictly convex on $(0, 1]$, hence Jensen's inequality
gives
\begin{equation*}
    \mathbb{E}_{u \sim Q}\big[ - \log s(u) \big]
    \;\geq\; - \log \mathbb{E}_{u \sim Q}\big[ s(u) \big]
    \;=\; - \log \bar{s} ,
\end{equation*}
and strict convexity attains equality only at \cref{eq:ce-stability-degenerate}.
\end{proof}

\begin{proposition}[\textbf{The advantage is the sampling variance}]\label{prop:ce-stability-variance}
For every producer,
\begin{equation}\label{eq:ce-stability-variance}
    G \;\geq\; \tfrac{1}{2}\, \mathrm{Var}_{u \sim Q}\big( s(u) \big) .
\end{equation}
\end{proposition}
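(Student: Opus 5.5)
The plan is to use the strong convexity of $f(x) = -\log x$ on $(0,1]$ recorded in \cref{eq:ce-stability-convexity}, via a second-order Taylor expansion around the mean $\bar{s}$. By \cref{prop:ce-stability-jensen} we already know $G \geq 0$; the refinement replaces the bare Jensen gap by its quadratic lower bound.

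First, fix $u$ and expand $f$ about $\bar{s}$ with the Lagrange form of the remainder. Since $s(u) \in (0,1]$ and $\bar{s} \in (0,1]$ (a mean of values in $(0,1]$), there is some $\xi(u)$ between $s(u)$ and $\bar{s}$, hence in $(0,1]$, with
\begin{equation*}
    -\log s(u) = -\log \bar{s} - \frac{1}{\bar{s}}\big( s(u) - \bar{s} \big) + \frac{1}{2\,\xi(u)^{2}} \big( s(u) - \bar{s} \big)^{2} .
\end{equation*}
By \cref{eq:ce-stability-convexity}, $1/\xi(u)^2 \geq 1$, so the last term is at least $\tfrac12 (s(u)-\bar{s})^2$. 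Equivalently, $f(x) - \tfrac12 x^2$ is convex on $(0,1]$, and the tangent-line inequality for this function gives the same pointwise bound without invoking $\xi$. I would present the argument in this second form to avoid measurability concerns about $\xi(u)$.

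Next, take expectations under $u \sim Q$. The linear term vanishes because $\mathbb{E}_{u\sim Q}[s(u)] = \bar{s}$ by \cref{eq:ce-stability-scores}. This yields $\mathbb{E}_Q[-\log s(u)] \geq -\log \bar{s} + \tfrac12 \mathbb{E}_Q[(s(u)-\bar{s})^2]$. Rearranging gives $G \geq \tfrac12 \mathrm{Var}_Q(s(u))$ by the definition of $G$ in \cref{eq:ce-stability-jensen}.

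The only delicate point is ensuring that every evaluation point, including the intermediate $\xi(u)$ or the segment between $s(u)$ and $\bar{s}$, lies in $(0,1]$, where the curvature bound $f'' \geq 1$ holds. This follows because $(0,1]$ is an interval containing both endpoints. Expectations are finite sums when $\mathcal{V}^m$ is finite, so the interchanges need no further justification.
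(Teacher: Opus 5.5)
Your proposal is correct and follows the paper's proof essentially step for step. Both arguments expand $-\log$ to second order about $\bar{s}$ with a Lagrange remainder, use $f''(\xi)\geq 1$ on $(0,1]$, and average under $Q$ so that the linear term vanishes; your tangent-line version via convexity of $-\log x - \tfrac{1}{2}x^{2}$ on $(0,1]$ is an equivalent repackaging that avoids the pointwise choice of $\xi(u)$.
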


\begin{proof}[Proof of \cref{prop:ce-stability-variance}]
Taylor's theorem at $\bar{s}$ with Lagrange remainder places $\xi$ between $s(u)$ and $\bar{s}$,
hence in $(0, 1]$, and \cref{eq:ce-stability-convexity} bounds $f''(\xi)$,
\begin{align*}
    - \log s(u)
    &= - \log \bar{s} - \frac{s(u) - \bar{s}}{\bar{s}}
       + \tfrac{1}{2} f''(\xi) \big( s(u) - \bar{s} \big)^{2} \\
    &\geq - \log \bar{s} - \frac{s(u) - \bar{s}}{\bar{s}}
       + \tfrac{1}{2} \big( s(u) - \bar{s} \big)^{2} .
\end{align*}
The linear term vanishes under $Q$,
\begin{equation*}
    \mathbb{E}_{u \sim Q}\big[ s(u) - \bar{s} \big] = 0 ,
\end{equation*}
hence averaging the previous display over $u \sim Q$,
\begin{equation*}
    \mathbb{E}_{u \sim Q}\big[ - \log s(u) \big]
    \;\geq\; - \log \bar{s}
      + \tfrac{1}{2}\, \mathbb{E}_{u \sim Q}\Big[ \big( s(u) - \bar{s} \big)^{2} \Big]
    \;=\; - \log \bar{s} + \tfrac{1}{2}\, \mathrm{Var}_{u \sim Q}\big( s(u) \big) ,
\end{equation*}
and subtracting $- \log \bar{s}$ yields \cref{eq:ce-stability-variance}.
\end{proof}

\begin{assumption}[\textbf{Realizable mixture}]\label{ass:realizable-mixture}
Some $\psi$ in the optimization domain satisfies
\begin{equation}\label{eq:realizable-mixture}
    P\big( \cdot \mid \mathbf{T} \big) = \bar{P} .
\end{equation}
\end{assumption}

\begin{corollary}[\textbf{The latent transfer attains the advantage}]\label{cor:ce-stability-advantage}
Under \cref{ass:realizable-mixture}, at that $\psi$,
\begin{equation}\label{eq:ce-stability-advantage}
    - \log P\big( v \mid \mathbf{T} \big)
    \;=\; \mathbb{E}_{u \sim Q}\big[ - \log P\big( v \mid E(u) \big) \big]
      \;-\; G
    \;\leq\; \mathbb{E}_{u \sim Q}\big[ - \log P\big( v \mid E(u) \big) \big] .
\end{equation}
\end{corollary}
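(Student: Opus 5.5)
The plan is a direct substitution followed by the definition of $G$. Fix the $\psi$ supplied by \cref{ass:realizable-mixture}. At that $\psi$ the consumer's distribution under the latent transfer equals the mixture, $P(\cdot \mid \mathbf{T}) = \bar{P}$. Evaluating this identity at the single target $v$ and invoking \cref{eq:ce-stability-mixture-at-v} gives
\begin{equation*}
    P\big( v \mid \mathbf{T} \big) = \bar{P}(v) = \bar{s} ,
\end{equation*}
so that $-\log P(v \mid \mathbf{T}) = -\log \bar{s}$. The only care needed here is to note that $\bar{s} > 0$, since each $s(u) \in (0,1]$. The logarithm is therefore finite.

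Next I will rearrange the definition of $G$ in \cref{eq:ce-stability-jensen}. It states that
\begin{equation*}
    -\log \bar{s} = \mathbb{E}_{u \sim Q}\big[ -\log s(u) \big] - G .
\end{equation*}
Substituting $s(u) = P(v \mid E(u))$ from \cref{eq:ce-stability-scores} and the identity from the first step yields the equality in \cref{eq:ce-stability-advantage}. The expectation is finite because $Q$ lives on the finite set $\mathcal{V}^{m}$ and each $s(u) > 0$, so the subtraction is legitimate.

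The inequality then follows from $G \geq 0$, which is \cref{prop:ce-stability-jensen}. I may also remark that \cref{prop:ce-stability-variance} sharpens the gap to at least $\tfrac{1}{2}\mathrm{Var}_{u\sim Q}(s(u))$, and that by \cref{eq:ce-stability-degenerate} it is strict unless $s$ is $Q$-almost surely constant.

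No step is a real obstacle. The only point to state carefully is the first step: the realizability assumption is an equality of whole distributions, and I specialize it to the single event $\{v\}$. That is immediate once both sides are read as distributions over complete targets via \cref{eq:seq-law}.
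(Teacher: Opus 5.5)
Your proposal is correct and follows the paper's proof essentially step for step: you specialize \cref{ass:realizable-mixture} to the outcome $v$ to get $-\log P(v \mid \mathbf{T}) = -\log \bar{s}$, rearrange the definition of $G$ in \cref{eq:ce-stability-jensen}, and conclude the inequality from $G \geq 0$ via \cref{prop:ce-stability-jensen}. Your finiteness checks ($\bar{s} > 0$ and the finite support of $Q$) are harmless refinements that the paper leaves implicit.
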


\begin{proof}[Proof of \cref{cor:ce-stability-advantage}]
Evaluating \cref{eq:realizable-mixture} at the single outcome $v$ and applying
\cref{eq:ce-stability-mixture-at-v},
\begin{equation*}
    - \log P\big( v \mid \mathbf{T} \big)
    \;=\; - \log \bar{P}(v)
    \;=\; - \log \bar{s} ,
\end{equation*}
and \cref{eq:ce-stability-jensen} rearranges this to
\begin{equation*}
    - \log P\big( v \mid \mathbf{T} \big)
    \;=\; \mathbb{E}_{u \sim Q}\big[ - \log s(u) \big] - G ,
\end{equation*}
where $G \geq 0$ by \cref{prop:ce-stability-jensen}.
\end{proof}

A textual transfer cannot attain the left side of \cref{eq:ce-stability-advantage}, since $E(u)$ is
the sampled output and its target distribution is fixed at $P(\cdot \mid E(u))$.
\Cref{eq:ce-stability-variance} makes the advantage strictly positive whenever the producer's sample
moves the consumer's target probability.
\Cref{eq:loss-stability} encodes a scalar summary of $Q$, since encoding $Q$ requires
multiple producer samples per training example.

\section{Implementation Details}
\label{app:hyperparams}

\subsection{Benchmarks}
\label{app:datasets}

\begin{itemize}[leftmargin=*]
    \item \textbf{MATH500} \citep{lightman2024stepverify} contains 500 test problems drawn uniformly at random from MATH, and its mix of subjects and difficulty levels is representative of the full test set.
    \item \textbf{AIME2025} \citep{zhang2025aimeexam} is the 2025 American Invitational Mathematics Examination, 30 problems whose answers are single integers.
    \item \textbf{AIME2026} \citep{dekoninck2026matharena} is the 2026 examination in the same format.
    \item \textbf{GPQA-Diamond} \citep{rein2024gpqabench} is the most strictly filtered GPQA subset, 198 graduate-level multiple-choice questions in biology, chemistry and physics that both expert validators answered correctly and most non-experts missed.
    \item \textbf{MedQA} \citep{jin2021disease} draws multiple-choice questions from medical licensing examinations.
    \item \textbf{LiveCodeBench-v6} \citep{jain2025livecodebench} collects competition programming problems with release dates, which lets evaluation be restricted to a window after a model's training cutoff.
    \item \textbf{MBPP+} \citep{liu2023evalplus} re-evaluates the Python programming tasks of MBPP against many more test cases, and a program counts as correct only when it passes every test.
\end{itemize}

We report Pass@10 for AIME2025 and AIME2026, and Pass@1 for every other benchmark.
The \textbf{Code Gen.} column reports MBPP+ on the Light system and LiveCodeBench-v6 on the Scaled system.

\subsection{Compared Baselines}
\label{app:baseline}
We compare REST against the following alternative auxiliary-loss baselines:

\begin{itemize}[leftmargin=*]
    \item \textbf{CODI}~\citep{shen2025selfdistillcot} trains continuous latent thoughts as the student in a self-distillation setup, matching the student's hidden states at one distillation token to those of a teacher pass that reads the explicit chain of thought. We adapt its distillation loss as an auxiliary term on the outer-link cross-entropy objective, taking the consumer's text transfer of \cref{def:transfer} as the teacher.

    \item \textbf{SIM-CoT}~\citep{wei2025simcot} supervises latent reasoning with an auxiliary decoder that reconstructs reasoning-step text from the implicit latents and is discarded at inference. We adapt its reconstruction loss as an auxiliary term on the outer-link cross-entropy objective, training a per-stage decoder to reconstruct the producer's output text from the transferred thought.
\end{itemize}

\subsection{Hyperparameters}
\label{app:hp}

\begin{wraptable}{R}{0.325\linewidth}
\centering
\vspace{-1.0\baselineskip}
\caption{Generation budget.}
\label{tab:hyperparams}
\begin{tabular}{l|cc}
\toprule
\textbf{Benchmark} & \textbf{Light} & \textbf{Scaled} \\
\midrule
MATH500  & 1000 & 2000  \\
AIME2025 & 8192 & 16000 \\
AIME2026 & 8192 & 16000 \\
GPQA-D   & 4000 & 4000  \\
MedQA    & 4000 & 4000  \\
MBPP+    & 4000 & 4000  \\
LCB-v6   & 4096 & 4096  \\
\bottomrule
\end{tabular}
\vspace{-1.0\baselineskip}
\end{wraptable}

\textbf{Optimization.}
We train the outer links for 20{,}000 steps at batch size 4 and a maximum sequence length of 4096, in bfloat16.
AdamW runs at a learning rate of $5 \times 10^{-4}$ under a cosine schedule with 10 warmup steps, no weight decay, and gradients clipped at 1.0.

\textbf{Loss Terms.}
Every property term is swept over $\beta \in \{0.1, 0.3, 1.0, 3.0\}$.
For minimality, we set $\lambda_1 = 1.0$ and $\lambda_2 = 0.3$ and $0$ for composition. For separability, we set $K = 64$ at $\tau = 0.1$.
The CODI baseline is swept over $\{1, 10, 20\}$ and SIM-CoT over $\{0.3, 1, 3\}$.

\textbf{Inference.}
A thought spans at most 80 positions in training and a fixed 32 latent steps at inference.
We sample at temperature 0.6 and top-$p$ 0.95, lowered to 0.2 on MBPP+ and LiveCodeBench-v6.
\Cref{tab:hyperparams} provides max length for generated text.

\section{Extended Related Work}
\label{app:related}

\textbf{Reasoning in Latent Space.}
A parallel family internalizes explicit rationales into the weights or the architecture, leaving the input stream unchanged.
Stepwise removal of thinking tokens moves an explicit chain of thought into the forward pass \citep{deng2025implicitcot}, dense embeddings stand in for a compressed chain \citep{cheng2024compressedcot}, and latent and text tokens are mixed within one sequence \citep{su2025tokenmixing}.
Architectural variants add depth or recurrence at fixed points of the network, through dynamic depth scaling \citep{chen2025innerthinking} and through middle-layer recurrence across decoding steps \citep{cai2026tmlr}.
Recurrence can also carry a persistent latent state across refinement steps, each supervised against a ground-truth intermediate target \citep{li2026persistentlatent}.
\citet{huang2026transformers} prove that a transformer internalizes a chain of thought under a staged curriculum, where the schedule for removing thinking tokens sets the number of training stages.
Latent reasoning has also been applied to chemistry \citep{ye2026latentchem}, built on hybrid state-space backbones \citep{wang2026tiny}, and cast as an exchange between a fast and a slow process \citep{codaforno2025dualsystem}.

\textbf{Supervising Latent Thoughts.}
Two objectives compress reasoning over decoded traces rather than over latent representations.
\citet{massoli2026reasoning} model a reasoning trace under a conditional information bottleneck, retaining only the information about the response that the prompt does not already supply, and optimize it as a reinforcement learning objective with a surprisal prior over traces.
\citet{conklin2026learning} treat training itself as lossy compression and report that pretrained models approach the information bottleneck bound, which links how much a model compresses to downstream performance.
Neither objective constrains a thought transferred between agents.
REST applies its terms at that transfer.
Other work measures internal states without training on them, through uncertainty over the meanings a model could generate \citep{kuhn2023semantic} and through the geometry of reasoning trajectories in representation space \citep{zhou2026geometryreasoning}.
\citet{li2026dynamics} intervene on individual latent steps and find that the steps carry different functions and route information non-locally.
A larger latent budget does not act as uniform extra depth.

REST differs in the object it constrains and in the form of the constraint. Its terms apply to the latent thought at the point where one agent or one round passes it to the next, and each term is a differentiable addition to the cross-entropy objective that leaves the architecture unchanged and adds no parameters at inference.

\section{Additional Results}
\label{app:results}

\subsection{Base Model Accuracy}
\label{app:base-model-accuracy}

As shown in \cref{tab:base-model}, each frozen model answers at a level that varies substantially across domains, and no single model leads on every benchmark within either agent composition. The latent system combines agents of complementary strength, and the values here give the accuracy of each agent before any system is applied.

\begin{table}[h!]
\caption{Frozen base LLMs, no system.}
\label{tab:base-model}
\begin{center}
\small
\setlength{\tabcolsep}{3.5pt}
\begin{tabular}{l|c|ccccccc}
\toprule
Model & Metric & MATH500 & GPQA-D & MedQA & AIME25 & AIME26 & MBPP+ & LCB-v6 \\
\midrule
\multirow{2}{*}{Qwen3-1.7B} & Acc.  & 68.4 & 33.5 & 46.2 & 20.0 & 22.2 & 57.3 & 22.8 \\
                             & Token & 546  & 1009 & 472  & 1962 & 2695 & 80  & 608 \\
\midrule
\multirow{2}{*}{Llama-3.2-1B} & Acc.  & 25.0 & 27.4 & 37.0 & 1.1  & 2.2  & 41.7 & 4.3 \\
                               & Token & 417  & 629  & 369  & 2412 & 2225 & 309 & 288 \\
\midrule
\multirow{2}{*}{Qwen2.5-Math-1.5B} & Acc.  & 76.3 & 29.6 & 28.4 & 27.8 & 22.2 & 33.1 & 2.9 \\
                                    & Token & 530  & 807  & 1061 & 908 & 987 & 693 & 669 \\
\specialrule{1pt}{1pt}{1pt}
\multirow{2}{*}{Gemma-3-4B} & Acc.  & 76.3 & 31.0 & 52.9 & 27.8 & 23.3 & 70.8 & 18.5 \\
                             & Token & 829  & 807  & 341  & 1543 & 1623 & 136 & 278 \\
\midrule
\multirow{2}{*}{Llama-3.2-3B} & Acc.  & 49.6 & 31.0 & 58.4 & 10.0 & 7.8  & 59.5 & 12.0 \\
                               & Token & 509  & 967  & 385  & 3389 & 3033 & 287 & 324 \\
\midrule
\multirow{2}{*}{Qwen3.5-4B} & Acc.  & 79.1 & 62.5 & 83.8 & 81.1 & 90.0 & 70.9 & 41.1 \\
                             & Token & 993  & 2526 & 1086 & 10752 & 10888 & 272 & 1883 \\
\bottomrule
\end{tabular}
\end{center}
\end{table}

\subsection{Per-Round Results}
\label{app:per-round}

\Cref{tab:single-agent-main} and \cref{tab:multi-agent-main} in the main text configure each system at its own recursion round, and compare every method against the CE-only baseline at that same round. The tables below give both rounds in full for each setting. Across training seeds, the average standard error of the Avg.\ Change column is $\pm0.7$ points in accuracy and $\pm2.1\%$ in tokens.

As shown in \cref{tab:single-agent,tab:single-agent-r3,tab:multi-agent,tab:multi-agent-r3} and summarised in \cref{fig:depth-scale}, recursion depth affects the two scales in opposite directions. In the Light setting, every property term improves over the CE-only objective at $r=1$, and the improvement is diminished at $r=3$. In the Scaled setting the ordering is reversed. REST is relatively close to CE-only at $r=1$, while at $r=3$ every term except multi-agent separability improves accuracy. At $r=1$ our CE-only reproduction is close to the accuracy reported by \citet{zou2026recursivemultiagentsystems} on the Scaled system and lower on the Light system, while at $r=3$ it is lower on both, most of all on AIME, where seed variance is largest. Avg.\ Change at $r=3$ is measured against this reproduction of the experiment.

\begin{table}[h!]
\centering \small
\caption{Single-agent (round $r=1$), Light vs Scaled.}
\label{tab:single-agent}
\resizebox{\linewidth}{!}{%
\arrayrulecolor{black!30}
\begin{tabular}{l|c|c>{\columncolor{tolGrey!40}}cc>{\columncolor{tolGrey!40}}cc>{\columncolor{tolGrey!40}}c|c>{\columncolor{tolGrey!40}}c|c>{\columncolor{tolGrey!40}}c|c>{\columncolor{tolGrey!40}}c|cc}
\arrayrulecolor{restcoral}\specialrule{2.0pt}{0pt}{2pt}\arrayrulecolor{black!30}
\textbf{Method} & \textbf{Metric} & \multicolumn{2}{c}{\textbf{Math500}} & \multicolumn{2}{c}{\textbf{AIME2025}} & \multicolumn{2}{c|}{\textbf{AIME2026}} & \multicolumn{2}{c|}{\textbf{GPQA-D}} & \multicolumn{2}{c|}{\textbf{MedQA}} & \multicolumn{2}{c|}{\textbf{Code Gen.}} & \multicolumn{2}{c}{\textbf{Avg.\ Change}} \\
\cmidrule(lr){3-4}\cmidrule(lr){5-6}\cmidrule(lr){7-8}\cmidrule(lr){9-10}\cmidrule(lr){11-12}\cmidrule(lr){13-14}\cmidrule(lr){15-16}
 & & \textbf{Light} & \textbf{Scaled} & \textbf{Light} & \textbf{Scaled} & \textbf{Light} & \textbf{Scaled} & \textbf{Light} & \textbf{Scaled} & \textbf{Light} & \textbf{Scaled} & \textbf{Light} & \textbf{Scaled} & \textbf{Light} & \textbf{Scaled} \\
\midrule
 & Acc. & 70.6 & 81.1 & 27.8 & 80.0 & 12.2 & 80.0 & 27.3 & 61.8 & 27.1 & 79.8 & 27.7 & 37.3 & \basetxt & \basetxt \\
\multirow{-2}{*}{CE only} & Token & 557 & 880 & 905 & 8272 & 1016 & 8468 & 911 & 2081 & 1177 & 767 & 477 & 1353 & \basetxt & \basetxt \\
\arrayrulecolor{restcoral}\specialrule{0.8pt}{0pt}{0pt}\arrayrulecolor{black!30}
\rowcolor{restcoral!40}\multicolumn{16}{c}{\textbf{\textsc{REST (ours), single property}}} \\
\arrayrulecolor{restcoral}\specialrule{0.8pt}{0pt}{0pt}\arrayrulecolor{black!30}
 & Acc. & 72.4 & 83.3 & 23.3 & 68.9 & 16.7 & 76.7 & 23.9 & 62.8 & 29.7 & 79.9 & 32.9 & 38.1 & \gaincell{$\uparrow 1.0$} & \losscell{$\downarrow 1.7$} \\
\multirow{-2}{*}{Causality ($\beta=0.3$)} & Token & 551 & 842 & 907 & 8335 & 1008 & 7292 & 851 & 1999 & 1113 & 739 & 570 & 1369 & \savecell{$-0.9\%$} & \savecell{$-5.7\%$} \\
\midrule
 & Acc. & 72.1 & 79.3 & 30.0 & 77.8 & 20.0 & 83.3 & 27.6 & 59.8 & 29.7 & 80.8 & 31.5 & 40.9 & \gaincell{$\uparrow 3.0$} & \gaincell{$\uparrow 0.3$} \\
\multirow{-2}{*}{Minimality ($\beta=1.0$)} & Token & 546 & 1000 & 906 & 9839 & 936 & 9602 & 872 & 2532 & 1075 & 1108 & 561 & 1797 & \savecell{$-2.9\%$} & $+18.6\%$ \\
\midrule
 & Acc. & 73.1 & 79.9 & 26.7 & 81.1 & 17.8 & 83.3 & 28.8 & 64.8 & 27.7 & 78.8 & 30.2 & 41.3 & \gaincell{$\uparrow 1.9$} & \gaincell{$\uparrow 1.5$} \\
\multirow{-2}{*}{Separability ($\beta=0.1$)} & Token & 551 & 974 & 918 & 9324 & 987 & 8872 & 901 & 2408 & 1083 & 904 & 488 & 1799 & \savecell{$-2.3\%$} & $+11.3\%$ \\
\midrule
 & Acc. & 72.1 & 79.9 & 27.8 & 72.2 & 15.6 & 77.8 & 28.1 & 63.5 & 29.8 & 80.1 & 29.1 & 37.8 & \gaincell{$\uparrow 1.6$} & \losscell{$\downarrow 1.5$} \\
\multirow{-2}{*}{Stability ($\beta=1.0$)} & Token & 538 & 894 & 921 & 8334 & 944 & 7250 & 845 & 2197 & 1051 & 804 & 645 & 1367 & \savecell{$-2.0\%$} & \savecell{$-4.5\%$} \\
\arrayrulecolor{restcoral}\specialrule{0.8pt}{0pt}{0pt}\arrayrulecolor{black!30}
\rowcolor{restcoral!40}\multicolumn{16}{c}{\textbf{\textsc{REST (ours), composition of properties}}} \\
\arrayrulecolor{restcoral}\specialrule{0.8pt}{0pt}{0pt}\arrayrulecolor{black!30}
 & Acc. & 69.7 & 80.9 & 23.3 & 81.1 & 16.7 & 81.1 & 26.1 & 59.9 & 29.6 & 82.0 & 33.5 & 40.9 & \gaincell{$\uparrow 1.0$} & \gaincell{$\uparrow 1.0$} \\
\multirow{-2}{*}{Best pair} & Token & 542 & 951 & 876 & 9160 & 1039 & 8758 & 826 & 2433 & 1083 & 1013 & 553 & 1780 & \savecell{$-2.4\%$} & $+10.4\%$ \\
\midrule
 & Acc. & 72.4 & 82.2 & 33.3 & 73.3 & 16.7 & 83.3 & 23.7 & 59.6 & 31.0 & 81.7 & 32.8 & 39.1 & \gaincell{$\uparrow 2.9$} & \losscell{$\downarrow 0.1$} \\
\multirow{-2}{*}{All properties} & Token & 569 & 923 & 934 & 8749 & 984 & 8642 & 953 & 2210 & 1130 & 833 & 555 & 1628 & $+1.6\%$ & $+5.3\%$ \\
\arrayrulecolor{restcoral}\specialrule{2.0pt}{2pt}{0pt}\arrayrulecolor{black}
\end{tabular}%
}
\end{table}

\begin{table}[h!]
\centering \small
\caption{Single-agent (round $r=3$), Light vs Scaled.}
\label{tab:single-agent-r3}
\resizebox{\linewidth}{!}{%
\arrayrulecolor{black!30}
\begin{tabular}{l|c|c>{\columncolor{tolGrey!40}}cc>{\columncolor{tolGrey!40}}cc>{\columncolor{tolGrey!40}}c|c>{\columncolor{tolGrey!40}}c|c>{\columncolor{tolGrey!40}}c|c>{\columncolor{tolGrey!40}}c|cc}
\arrayrulecolor{restcoral}\specialrule{2.0pt}{0pt}{2pt}\arrayrulecolor{black!30}
\textbf{Method} & \textbf{Metric} & \multicolumn{2}{c}{\textbf{Math500}} & \multicolumn{2}{c}{\textbf{AIME2025}} & \multicolumn{2}{c|}{\textbf{AIME2026}} & \multicolumn{2}{c|}{\textbf{GPQA-D}} & \multicolumn{2}{c|}{\textbf{MedQA}} & \multicolumn{2}{c|}{\textbf{Code Gen.}} & \multicolumn{2}{c}{\textbf{Avg.\ Change}} \\
\cmidrule(lr){3-4}\cmidrule(lr){5-6}\cmidrule(lr){7-8}\cmidrule(lr){9-10}\cmidrule(lr){11-12}\cmidrule(lr){13-14}\cmidrule(lr){15-16}
 & & \textbf{Light} & \textbf{Scaled} & \textbf{Light} & \textbf{Scaled} & \textbf{Light} & \textbf{Scaled} & \textbf{Light} & \textbf{Scaled} & \textbf{Light} & \textbf{Scaled} & \textbf{Light} & \textbf{Scaled} & \textbf{Light} & \textbf{Scaled} \\
\midrule
 & Acc. & 70.9 & 80.9 & 26.7 & 65.6 & 20.0 & 72.2 & 26.9 & 63.3 & 27.8 & 79.2 & 28.1 & 35.5 & \basetxt & \basetxt \\
\multirow{-2}{*}{CE only} & Token & 613 & 862 & 912 & 7226 & 1020 & 6967 & 941 & 1709 & 1312 & 739 & 667 & 1289 & \basetxt & \basetxt \\
\arrayrulecolor{restcoral}\specialrule{0.8pt}{0pt}{0pt}\arrayrulecolor{black!30}
\rowcolor{restcoral!40}\multicolumn{16}{c}{\textbf{\textsc{REST (ours), single property}}} \\
\arrayrulecolor{restcoral}\specialrule{0.8pt}{0pt}{0pt}\arrayrulecolor{black!30}
 & Acc. & 74.9 & 80.4 & 26.7 & 78.9 & 18.9 & 83.3 & 29.1 & 63.5 & 27.6 & 79.4 & 30.0 & 40.2 & \gaincell{$\uparrow 1.1$} & \gaincell{$\uparrow 4.8$} \\
\multirow{-2}{*}{Causality ($\beta=0.3$)} & Token & 619 & 1008 & 910 & 8677 & 952 & 8165 & 899 & 2216 & 1112 & 955 & 654 & 1662 & \savecell{$-5.8\%$} & $+20.7\%$ \\
\midrule
 & Acc. & 72.1 & 81.1 & 24.4 & 78.9 & 16.7 & 86.7 & 26.1 & 63.8 & 29.2 & 83.0 & 32.8 & 42.3 & \gaincell{$\uparrow 0.1$} & \gaincell{$\uparrow 6.5$} \\
\multirow{-2}{*}{Minimality ($\beta=1.0$)} & Token & 627 & 1053 & 939 & 10256 & 976 & 10390 & 888 & 2526 & 1236 & 1162 & 681 & 1850 & \savecell{$-2.1\%$} & $+44.9\%$ \\
\midrule
 & Acc. & 62.0 & 80.2 & 26.7 & 76.7 & 20.0 & 83.3 & 30.8 & 61.1 & 31.0 & 81.7 & 24.9 & 39.4 & \losscell{$\downarrow 0.8$} & \gaincell{$\uparrow 4.3$} \\
\multirow{-2}{*}{Separability ($\beta=0.1$)} & Token & 589 & 1056 & 1160 & 8657 & 1631 & 8033 & 1204 & 2556 & 1362 & 1087 & 1387 & 1764 & $+34.2\%$ & $+23.2\%$ \\
\midrule
 & Acc. & 73.3 & 79.1 & 26.7 & 83.3 & 12.2 & 88.3 & 24.7 & 64.4 & 31.4 & 79.8 & 34.4 & 40.4 & \gaincell{$\uparrow 0.4$} & \gaincell{$\uparrow 6.4$} \\
\multirow{-2}{*}{Stability ($\beta=1.0$)} & Token & 594 & 1049 & 902 & 9626 & 1076 & 9174 & 865 & 2464 & 1043 & 1071 & 616 & 1783 & \savecell{$-6.7\%$} & $+33.9\%$ \\
\arrayrulecolor{restcoral}\specialrule{0.8pt}{0pt}{0pt}\arrayrulecolor{black!30}
\rowcolor{restcoral!40}\multicolumn{16}{c}{\textbf{\textsc{REST (ours), composition of properties}}} \\
\arrayrulecolor{restcoral}\specialrule{0.8pt}{0pt}{0pt}\arrayrulecolor{black!30}
 & Acc. & 72.5 & 79.4 & 26.7 & 80.0 & 18.9 & 86.7 & 25.4 & 57.6 & 30.2 & 79.0 & 31.8 & 38.6 & \gaincell{$\uparrow 0.8$} & \gaincell{$\uparrow 4.1$} \\
\multirow{-2}{*}{Best pair} & Token & 618 & 1065 & 901 & 10503 & 1010 & 10728 & 850 & 2605 & 995 & 1179 & 656 & 1632 & \savecell{$-7.9\%$} & $+47.5\%$ \\
\midrule
 & Acc. & 74.9 & 78.4 & 26.7 & 73.3 & 18.9 & 76.7 & 26.6 & 60.6 & 30.0 & 79.3 & 25.7 & 37.0 & \gaincell{$\uparrow 0.4$} & \gaincell{$\uparrow 1.4$} \\
\multirow{-2}{*}{All properties} & Token & 611 & 981 & 925 & 11053 & 1030 & 10338 & 964 & 1744 & 1114 & 1001 & 671 & 1375 & \savecell{$-2.7\%$} & $+41.0\%$ \\
\arrayrulecolor{restcoral}\specialrule{2.0pt}{2pt}{0pt}\arrayrulecolor{black}
\end{tabular}%
}
\end{table}

\begin{table}[h!]
\centering \small
\caption{Multi-agent (round $r=1$), Light vs Scaled.}
\label{tab:multi-agent}
\resizebox{\linewidth}{!}{%
\arrayrulecolor{black!30}
\begin{tabular}{l|c|c>{\columncolor{tolGrey!40}}cc>{\columncolor{tolGrey!40}}cc>{\columncolor{tolGrey!40}}c|c>{\columncolor{tolGrey!40}}c|c>{\columncolor{tolGrey!40}}c|c>{\columncolor{tolGrey!40}}c|cc}
\arrayrulecolor{restcoral}\specialrule{2.0pt}{0pt}{2pt}\arrayrulecolor{black!30}
\textbf{Method} & \textbf{Metric} & \multicolumn{2}{c}{\textbf{Math500}} & \multicolumn{2}{c}{\textbf{AIME2025}} & \multicolumn{2}{c|}{\textbf{AIME2026}} & \multicolumn{2}{c|}{\textbf{GPQA-D}} & \multicolumn{2}{c|}{\textbf{MedQA}} & \multicolumn{2}{c|}{\textbf{Code Gen.}} & \multicolumn{2}{c}{\textbf{Avg.\ Change}} \\
\cmidrule(lr){3-4}\cmidrule(lr){5-6}\cmidrule(lr){7-8}\cmidrule(lr){9-10}\cmidrule(lr){11-12}\cmidrule(lr){13-14}\cmidrule(lr){15-16}
 & & \textbf{Light} & \textbf{Scaled} & \textbf{Light} & \textbf{Scaled} & \textbf{Light} & \textbf{Scaled} & \textbf{Light} & \textbf{Scaled} & \textbf{Light} & \textbf{Scaled} & \textbf{Light} & \textbf{Scaled} & \textbf{Light} & \textbf{Scaled} \\
\midrule
 & Acc. & 71.1 & 87.7 & 22.2 & 76.7 & 17.8 & 86.7 & 24.9 & 63.1 & 28.9 & 80.2 & 32.5 & 39.0 & \basetxt & \basetxt \\
\multirow{-2}{*}{CE only} & Token & 550 & 1018 & 849 & 9009 & 884 & 8488 & 749 & 2379 & 868 & 1137 & 604 & 1507 & \basetxt & \basetxt \\
\arrayrulecolor{restcoral}\specialrule{0.8pt}{0pt}{0pt}\arrayrulecolor{black!30}
\rowcolor{restcoral!40}\multicolumn{16}{c}{\textbf{\textsc{REST (ours), single property}}} \\
\arrayrulecolor{restcoral}\specialrule{0.8pt}{0pt}{0pt}\arrayrulecolor{black!30}
 & Acc. & 77.8 & 86.4 & 27.8 & 80.0 & 22.2 & 85.6 & 28.6 & 63.1 & 30.0 & 82.7 & 33.9 & 39.5 & \gaincell{$\uparrow 3.8$} & \gaincell{$\uparrow 0.7$} \\
\multirow{-2}{*}{Causality} & Token & 601 & 1041 & 894 & 9538 & 978 & 9430 & 888 & 2285 & 1049 & 1115 & 643 & 1545 & $+12.2\%$ & $+6.0\%$ \\
\midrule
 & Acc. & 77.1 & 88.0 & 30.0 & 83.3 & 22.2 & 90.0 & 27.6 & 67.2 & 26.8 & 81.3 & 32.0 & 42.3 & \gaincell{$\uparrow 3.1$} & \gaincell{$\uparrow 3.1$} \\
\multirow{-2}{*}{Minimality} & Token & 605 & 1044 & 920 & 10309 & 1052 & 9956 & 849 & 2287 & 1158 & 1120 & 575 & 2036 & $+14.6\%$ & $+13.7\%$ \\
\midrule
 & Acc. & 74.7 & 85.6 & 25.6 & 56.7 & 17.8 & 60.0 & 25.6 & 53.5 & 30.9 & 77.7 & 31.1 & 30.0 & \gaincell{$\uparrow 1.4$} & \losscell{$\downarrow 11.6$} \\
\multirow{-2}{*}{Separability} & Token & 564 & 790 & 838 & 6648 & 923 & 6180 & 758 & 1084 & 1116 & 582 & 1650 & 744 & $+29.9\%$ & \savecell{$-31.9\%$} \\
\midrule
 & Acc. & 74.0 & 79.0 & 27.8 & 40.0 & 16.7 & 46.7 & 26.8 & 52.0 & 31.0 & 77.3 & 35.1 & 31.6 & \gaincell{$\uparrow 2.3$} & \losscell{$\downarrow 17.8$} \\
\multirow{-2}{*}{Stability} & Token & 551 & 581 & 881 & 5220 & 976 & 5369 & 785 & 836 & 1205 & 445 & 781 & 882 & $+15.0\%$ & \savecell{$-43.4\%$} \\
\arrayrulecolor{restcoral}\specialrule{0.8pt}{0pt}{0pt}\arrayrulecolor{black!30}
\rowcolor{restcoral!40}\multicolumn{16}{c}{\textbf{\textsc{REST (ours), composition of properties}}} \\
\arrayrulecolor{restcoral}\specialrule{0.8pt}{0pt}{0pt}\arrayrulecolor{black!30}
 & Acc. & 77.6 & 87.2 & 25.6 & 83.3 & 15.6 & 86.7 & 29.0 & 67.2 & 30.1 & 84.3 & 33.9 & 40.0 & \gaincell{$\uparrow 2.4$} & \gaincell{$\uparrow 2.6$} \\
\multirow{-2}{*}{Best pair} & Token & 604 & 1018 & 916 & 10111 & 978 & 7694 & 886 & 2206 & 1100 & 1103 & 613 & 1495 & $+13.2\%$ & $+0.4\%$ \\
\midrule
 & Acc. & 76.2 & 86.6 & 28.9 & 73.3 & 21.1 & 86.7 & 27.6 & 63.6 & 28.7 & 80.0 & 33.3 & 33.7 & \gaincell{$\uparrow 3.1$} & \losscell{$\downarrow 1.6$} \\
\multirow{-2}{*}{All properties} & Token & 608 & 832 & 900 & 7634 & 1022 & 7466 & 879 & 2406 & 1042 & 1024 & 569 & 962 & $+11.5\%$ & \savecell{$-13.7\%$} \\
\arrayrulecolor{restcoral}\specialrule{2.0pt}{2pt}{0pt}\arrayrulecolor{black}
\end{tabular}%
}
\end{table}

\begin{table}[h!]
\centering \small
\caption{Multi-agent (round $r=3$), Light vs Scaled.}
\label{tab:multi-agent-r3}
\resizebox{\linewidth}{!}{%
\arrayrulecolor{black!30}
\begin{tabular}{l|c|c>{\columncolor{tolGrey!40}}cc>{\columncolor{tolGrey!40}}cc>{\columncolor{tolGrey!40}}c|c>{\columncolor{tolGrey!40}}c|c>{\columncolor{tolGrey!40}}c|c>{\columncolor{tolGrey!40}}c|cc}
\arrayrulecolor{restcoral}\specialrule{2.0pt}{0pt}{2pt}\arrayrulecolor{black!30}
\textbf{Method} & \textbf{Metric} & \multicolumn{2}{c}{\textbf{Math500}} & \multicolumn{2}{c}{\textbf{AIME2025}} & \multicolumn{2}{c|}{\textbf{AIME2026}} & \multicolumn{2}{c|}{\textbf{GPQA-D}} & \multicolumn{2}{c|}{\textbf{MedQA}} & \multicolumn{2}{c|}{\textbf{Code Gen.}} & \multicolumn{2}{c}{\textbf{Avg.\ Change}} \\
\cmidrule(lr){3-4}\cmidrule(lr){5-6}\cmidrule(lr){7-8}\cmidrule(lr){9-10}\cmidrule(lr){11-12}\cmidrule(lr){13-14}\cmidrule(lr){15-16}
 & & \textbf{Light} & \textbf{Scaled} & \textbf{Light} & \textbf{Scaled} & \textbf{Light} & \textbf{Scaled} & \textbf{Light} & \textbf{Scaled} & \textbf{Light} & \textbf{Scaled} & \textbf{Light} & \textbf{Scaled} & \textbf{Light} & \textbf{Scaled} \\
\midrule
 & Acc. & 70.9 & 86.6 & 22.2 & 80.0 & 15.6 & 60.0 & 27.3 & 59.1 & 28.9 & 79.7 & 30.3 & 33.5 & \basetxt & \basetxt \\
\multirow{-2}{*}{CE only} & Token & 716 & 1252 & 796 & 10485 & 891 & 5420 & 973 & 2572 & 1145 & 829 & 857 & 1029 & \basetxt & \basetxt \\
\arrayrulecolor{restcoral}\specialrule{0.8pt}{0pt}{0pt}\arrayrulecolor{black!30}
\rowcolor{restcoral!40}\multicolumn{16}{c}{\textbf{\textsc{REST (ours), single property}}} \\
\arrayrulecolor{restcoral}\specialrule{0.8pt}{0pt}{0pt}\arrayrulecolor{black!30}
 & Acc. & 75.4 & 86.8 & 30.0 & 80.0 & 14.4 & 86.7 & 28.3 & 54.9 & 28.1 & 78.7 & 34.1 & 39.1 & \gaincell{$\uparrow 2.5$} & \gaincell{$\uparrow 4.6$} \\
\multirow{-2}{*}{Causality} & Token & 792 & 1158 & 933 & 8576 & 1024 & 8278 & 1087 & 1568 & 1304 & 996 & 777 & 1924 & $+10.0\%$ & $+4.2\%$ \\
\midrule
 & Acc. & 67.2 & 85.8 & 33.3 & 86.7 & 15.0 & 83.3 & 27.3 & 62.6 & 28.3 & 83.0 & 30.4 & 39.0 & \gaincell{$\uparrow 1.0$} & \gaincell{$\uparrow 6.9$} \\
\multirow{-2}{*}{Minimality} & Token & 800 & 1204 & 1072 & 9873 & 1148 & 9558 & 1322 & 2528 & 1440 & 1275 & 889 & 1875 & $+24.0\%$ & $+21.9\%$ \\
\midrule
 & Acc. & 75.8 & 82.2 & 25.6 & 73.3 & 16.7 & 65.6 & 27.4 & 52.5 & 25.4 & 78.7 & 27.9 & 31.6 & \gaincell{$\uparrow 0.6$} & \losscell{$\downarrow 2.5$} \\
\multirow{-2}{*}{Separability} & Token & 753 & 1191 & 885 & 7039 & 980 & 6940 & 1086 & 1940 & 1374 & 1189 & 867 & 1280 & $+10.6\%$ & \savecell{$-9.3\%$} \\
\midrule
 & Acc. & 72.1 & 85.6 & 21.1 & 80.0 & 17.8 & 80.0 & 27.9 & 62.6 & 28.2 & 75.0 & 33.8 & 34.2 & \gaincell{$\uparrow 0.9$} & \gaincell{$\uparrow 3.1$} \\
\multirow{-2}{*}{Stability} & Token & 699 & 1105 & 850 & 7588 & 896 & 6497 & 958 & 1908 & 1458 & 941 & 743 & 1215 & $+4.2\%$ & \savecell{$-10.8\%$} \\
\arrayrulecolor{restcoral}\specialrule{0.8pt}{0pt}{0pt}\arrayrulecolor{black!30}
\rowcolor{restcoral!40}\multicolumn{16}{c}{\textbf{\textsc{REST (ours), composition of properties}}} \\
\arrayrulecolor{restcoral}\specialrule{0.8pt}{0pt}{0pt}\arrayrulecolor{black!30}
 & Acc. & 75.5 & 87.0 & 27.8 & 83.3 & 21.1 & 86.7 & 29.5 & 60.1 & 28.2 & 85.0 & 32.3 & 41.7 & \gaincell{$\uparrow 3.2$} & \gaincell{$\uparrow 7.5$} \\
\multirow{-2}{*}{Best pair} & Token & 794 & 1227 & 926 & 10281 & 981 & 9828 & 1039 & 2629 & 1330 & 1259 & 780 & 2134 & $+8.8\%$ & $+26.7\%$ \\
\midrule
 & Acc. & 76.1 & 86.6 & 32.2 & 78.9 & 16.7 & 86.7 & 27.8 & 63.6 & 30.9 & 81.7 & 34.0 & 42.1 & \gaincell{$\uparrow 3.7$} & \gaincell{$\uparrow 6.8$} \\
\multirow{-2}{*}{All properties} & Token & 783 & 1253 & 941 & 11033 & 1023 & 11275 & 1086 & 2673 & 1377 & 1373 & 755 & 2151 & $+10.9\%$ & $+37.9\%$ \\
\arrayrulecolor{restcoral}\specialrule{2.0pt}{2pt}{0pt}\arrayrulecolor{black}
\end{tabular}%
}
\end{table}

\subsection{Full Property Sweep}
\label{app:property-sweep}

As shown in \cref{tab:property-sweep}, the four property terms separate into two groups. Causality and minimality improve over the CE-only objective at every weight in both settings, indicating that neither term depends on a tuned value of $\beta$. Separability and stability remain close to the CE-only objective in the Light setting and fall below it in the Scaled setting at every weight, and the loss of accuracy is accompanied by a substantial reduction in decoded tokens.

\begin{table}[h!]
\centering \small
\caption{Full property sweep, multi-agent (round $r=1$), Light vs Scaled.}
\label{tab:property-sweep}
\resizebox{\linewidth}{!}{%
\arrayrulecolor{black!30}
\begin{tabular}{l|c|c>{\columncolor{tolGrey!40}}cc>{\columncolor{tolGrey!40}}cc>{\columncolor{tolGrey!40}}c|c>{\columncolor{tolGrey!40}}c|c>{\columncolor{tolGrey!40}}c|c>{\columncolor{tolGrey!40}}c|cc}
\arrayrulecolor{restcoral}\specialrule{2.0pt}{0pt}{2pt}\arrayrulecolor{black!30}
\textbf{Method} & \textbf{Metric} & \multicolumn{2}{c}{\textbf{Math500}} & \multicolumn{2}{c}{\textbf{AIME2025}} & \multicolumn{2}{c|}{\textbf{AIME2026}} & \multicolumn{2}{c|}{\textbf{GPQA-D}} & \multicolumn{2}{c|}{\textbf{MedQA}} & \multicolumn{2}{c|}{\textbf{Code Gen.}} & \multicolumn{2}{c}{\textbf{Avg.\ Change}} \\
\cmidrule(lr){3-4}\cmidrule(lr){5-6}\cmidrule(lr){7-8}\cmidrule(lr){9-10}\cmidrule(lr){11-12}\cmidrule(lr){13-14}\cmidrule(lr){15-16}
 & & \textbf{Light} & \textbf{Scaled} & \textbf{Light} & \textbf{Scaled} & \textbf{Light} & \textbf{Scaled} & \textbf{Light} & \textbf{Scaled} & \textbf{Light} & \textbf{Scaled} & \textbf{Light} & \textbf{Scaled} & \textbf{Light} & \textbf{Scaled} \\
\midrule
 & Acc. & 71.1 & 87.7 & 22.2 & 76.7 & 17.8 & 86.7 & 24.9 & 63.1 & 28.9 & 80.2 & 32.5 & 39.0 & \basetxt & \basetxt \\
\multirow{-2}{*}{CE only} & Token & 550 & 1018 & 849 & 9009 & 884 & 8488 & 749 & 2379 & 868 & 1137 & 604 & 1507 & \basetxt & \basetxt \\
\arrayrulecolor{restcoral}\specialrule{0.8pt}{0pt}{0pt}\arrayrulecolor{black!30}
\rowcolor{restcoral!40}\multicolumn{16}{c}{\textbf{\textsc{REST (ours), causality}}} \\
\arrayrulecolor{restcoral}\specialrule{0.8pt}{0pt}{0pt}\arrayrulecolor{black!30}
 & Acc. & 76.1 & 86.6 & 31.1 & 86.7 & 20.0 & 83.3 & 26.6 & 65.2 & 28.2 & 83.3 & 33.3 & 41.5 & \gaincell{$\uparrow 3.0$} & \gaincell{$\uparrow 2.2$} \\
\multirow{-2}{*}{$\beta=0.1$} & Token & 608 & 999 & 940 & 10437 & 1009 & 10161 & 941 & 2187 & 1151 & 1066 & 613 & 1882 & $+16.9\%$ & $+13.6\%$ \\
\midrule
 & Acc. & 77.8 & 86.4 & 27.8 & 80.0 & 22.2 & 85.6 & 28.6 & 63.1 & 30.0 & 82.7 & 33.9 & 39.5 & \gaincell{$\uparrow 3.8$} & \gaincell{$\uparrow 0.7$} \\
\multirow{-2}{*}{$\beta=0.3$} & Token & 601 & 1041 & 894 & 9538 & 978 & 9430 & 888 & 2285 & 1049 & 1115 & 643 & 1545 & $+12.2\%$ & $+6.0\%$ \\
\midrule
 & Acc. & 75.1 & 87.6 & 27.8 & 83.3 & 18.9 & 86.7 & 29.0 & 62.6 & 30.1 & 82.0 & 32.6 & 42.1 & \gaincell{$\uparrow 2.7$} & \gaincell{$\uparrow 1.8$} \\
\multirow{-2}{*}{$\beta=1.0$} & Token & 601 & 1034 & 880 & 10156 & 1063 & 9942 & 963 & 2089 & 1127 & 1119 & 685 & 1868 & $+18.1\%$ & $+11.3\%$ \\
\midrule
 & Acc. & 77.0 & 86.8 & 27.8 & 78.9 & 17.8 & 90.0 & 26.9 & 64.1 & 29.1 & 84.0 & 33.4 & 42.7 & \gaincell{$\uparrow 2.4$} & \gaincell{$\uparrow 2.2$} \\
\multirow{-2}{*}{$\beta=3.0$} & Token & 598 & 1022 & 915 & 9946 & 982 & 9436 & 955 & 2069 & 1106 & 1052 & 622 & 1968 & $+15.0\%$ & $+8.3\%$ \\
\arrayrulecolor{restcoral}\specialrule{0.8pt}{0pt}{0pt}\arrayrulecolor{black!30}
\rowcolor{restcoral!40}\multicolumn{16}{c}{\textbf{\textsc{REST (ours), minimality}}} \\
\arrayrulecolor{restcoral}\specialrule{0.8pt}{0pt}{0pt}\arrayrulecolor{black!30}
 & Acc. & 74.1 & 87.2 & 26.7 & 83.3 & 16.7 & 90.0 & 27.1 & 64.6 & 27.9 & 83.3 & 34.1 & 42.6 & \gaincell{$\uparrow 1.5$} & \gaincell{$\uparrow 2.9$} \\
\multirow{-2}{*}{$\beta=0.1$} & Token & 610 & 1052 & 922 & 10181 & 990 & 9858 & 865 & 2292 & 1087 & 1079 & 626 & 1695 & $+13.3\%$ & $+11.1\%$ \\
\midrule
 & Acc. & 77.1 & 88.0 & 30.0 & 83.3 & 22.2 & 90.0 & 27.6 & 67.2 & 26.8 & 81.3 & 32.0 & 42.3 & \gaincell{$\uparrow 3.1$} & \gaincell{$\uparrow 3.1$} \\
\multirow{-2}{*}{$\beta=0.3$} & Token & 605 & 1044 & 920 & 10309 & 1052 & 9956 & 849 & 2287 & 1158 & 1120 & 575 & 2036 & $+14.6\%$ & $+13.7\%$ \\
\midrule
 & Acc. & 76.1 & 87.0 & 32.2 & 86.7 & 21.1 & 84.4 & 28.5 & 65.7 & 27.3 & 83.7 & 26.6 & 43.7 & \gaincell{$\uparrow 2.4$} & \gaincell{$\uparrow 3.0$} \\
\multirow{-2}{*}{$\beta=1.0$} & Token & 596 & 1072 & 902 & 10099 & 995 & 11072 & 871 & 1996 & 1093 & 1093 & 785 & 2075 & $+16.4\%$ & $+16.4\%$ \\
\midrule
 & Acc. & 75.3 & 87.2 & 25.6 & 80.0 & 20.0 & 86.7 & 25.1 & 60.6 & 28.8 & 83.3 & 31.0 & 43.1 & \gaincell{$\uparrow 1.4$} & \gaincell{$\uparrow 1.2$} \\
\multirow{-2}{*}{$\beta=3.0$} & Token & 593 & 1053 & 906 & 10527 & 1006 & 10111 & 867 & 2257 & 1162 & 1080 & 613 & 1955 & $+14.3\%$ & $+14.6\%$ \\
\arrayrulecolor{restcoral}\specialrule{0.8pt}{0pt}{0pt}\arrayrulecolor{black!30}
\rowcolor{restcoral!40}\multicolumn{16}{c}{\textbf{\textsc{REST (ours), separability}}} \\
\arrayrulecolor{restcoral}\specialrule{0.8pt}{0pt}{0pt}\arrayrulecolor{black!30}
 & Acc. & 74.7 & 85.6 & 25.6 & 56.7 & 17.8 & 60.0 & 25.6 & 53.5 & 30.9 & 77.7 & 31.1 & 30.0 & \gaincell{$\uparrow 1.4$} & \losscell{$\downarrow 11.6$} \\
\multirow{-2}{*}{$\beta=0.1$} & Token & 564 & 790 & 838 & 6648 & 923 & 6180 & 758 & 1084 & 1116 & 582 & 1650 & 744 & $+29.9\%$ & \savecell{$-31.9\%$} \\
\midrule
 & Acc. & 73.6 & 84.8 & 24.4 & 63.3 & 16.7 & 40.0 & 26.9 & 56.1 & 27.7 & 78.3 & 28.8 & 29.3 & \gaincell{$\uparrow 0.1$} & \losscell{$\downarrow 13.6$} \\
\multirow{-2}{*}{$\beta=0.3$} & Token & 548 & 728 & 881 & 5972 & 982 & 4337 & 825 & 971 & 1169 & 598 & 1756 & 795 & $+36.8\%$ & \savecell{$-43.1\%$} \\
\midrule
 & Acc. & 71.3 & 80.2 & 26.7 & 60.0 & 16.7 & 40.0 & 24.2 & 58.6 & 26.3 & 78.3 & 32.2 & 32.1 & $0.0$ & \losscell{$\downarrow 14.0$} \\
\multirow{-2}{*}{$\beta=1.0$} & Token & 549 & 718 & 948 & 4209 & 961 & 4255 & 807 & 893 & 1168 & 567 & 790 & 1034 & $+16.0\%$ & \savecell{$-50.4\%$} \\
\midrule
 & Acc. & 72.3 & 84.6 & 26.7 & 70.0 & 17.8 & 70.0 & 29.1 & 57.1 & 29.4 & 79.0 & 29.3 & 35.0 & \gaincell{$\uparrow 1.2$} & \losscell{$\downarrow 6.3$} \\
\multirow{-2}{*}{$\beta=3.0$} & Token & 570 & 824 & 930 & 5987 & 1014 & 6944 & 907 & 1316 & 1234 & 708 & 656 & 1307 & $+17.9\%$ & \savecell{$-27.4\%$} \\
\arrayrulecolor{restcoral}\specialrule{0.8pt}{0pt}{0pt}\arrayrulecolor{black!30}
\rowcolor{restcoral!40}\multicolumn{16}{c}{\textbf{\textsc{REST (ours), stability}}} \\
\arrayrulecolor{restcoral}\specialrule{0.8pt}{0pt}{0pt}\arrayrulecolor{black!30}
 & Acc. & 70.1 & 85.8 & 22.2 & 73.3 & 17.8 & 56.7 & 28.3 & 61.6 & 26.0 & 81.0 & 32.3 & 31.8 & \losscell{$\downarrow 0.1$} & \losscell{$\downarrow 7.2$} \\
\multirow{-2}{*}{$\beta=0.1$} & Token & 495 & 983 & 773 & 7608 & 888 & 6961 & 729 & 1380 & 913 & 867 & 1219 & 797 & $+11.4\%$ & \savecell{$-21.0\%$} \\
\midrule
 & Acc. & 74.7 & 83.4 & 28.9 & 53.3 & 17.8 & 70.0 & 27.4 & 59.6 & 26.4 & 79.3 & 28.7 & 33.6 & \gaincell{$\uparrow 1.1$} & \losscell{$\downarrow 9.0$} \\
\multirow{-2}{*}{$\beta=0.3$} & Token & 586 & 806 & 904 & 6763 & 992 & 7196 & 929 & 1602 & 1247 & 509 & 1565 & 993 & $+38.2\%$ & \savecell{$-24.1\%$} \\
\midrule
 & Acc. & 74.0 & 79.0 & 27.8 & 40.0 & 16.7 & 46.7 & 26.8 & 52.0 & 31.0 & 77.3 & 35.1 & 31.6 & \gaincell{$\uparrow 2.3$} & \losscell{$\downarrow 17.8$} \\
\multirow{-2}{*}{$\beta=1.0$} & Token & 551 & 581 & 881 & 5220 & 976 & 5369 & 785 & 836 & 1205 & 445 & 781 & 882 & $+15.0\%$ & \savecell{$-43.4\%$} \\
\midrule
 & Acc. & 72.1 & 79.8 & 24.4 & 46.7 & 18.9 & 50.0 & 24.2 & 53.0 & 29.8 & 77.7 & 34.0 & 32.3 & \gaincell{$\uparrow 1.0$} & \losscell{$\downarrow 15.6$} \\
\multirow{-2}{*}{$\beta=3.0$} & Token & 546 & 586 & 884 & 4005 & 964 & 4327 & 825 & 701 & 1274 & 468 & 813 & 1092 & $+17.8\%$ & \savecell{$-52.5\%$} \\
\arrayrulecolor{restcoral}\specialrule{2.0pt}{2pt}{0pt}\arrayrulecolor{black}
\end{tabular}%
}
\end{table}

\subsection{Auxiliary-Loss Baseline Sweeps}
\label{app:baseline-sweeps}

\Cref{tab:baseline-comparison} in the main text reports each auxiliary-loss baseline at its best available weight. Both CODI and SIM-CoT were run as full weight sweeps rather than a single point. We report the full grids here.

As shown in \cref{tab:codi-beta-sweep} and \cref{tab:simcot-lambda-sweep}, neither baseline is sensitive to its auxiliary weight. Accuracy varies by less than a point across the full range of the CODI distillation weight and of the SIM-CoT step weight on the larger benchmarks, and the wider variation on the two AIME sets corresponds to one or two problems. The operating point reported in \cref{tab:baseline-comparison} is therefore representative of each method rather than an artifact of tuning.

\begin{table}[h!]
\centering \small
\caption{CODI $\mathcal{L}_{\text{KD}}$ weight sweep (round $r=1$), Light.}
\label{tab:codi-beta-sweep}
\begin{tabular}{c|c|cccccc}
\toprule
\textbf{$\beta$} & \textbf{Metric} &
\textbf{Math500} & \textbf{AIME2025} & \textbf{AIME2026} & \textbf{GPQA-D} & \textbf{MedQA} & \textbf{Code Gen.} \\
\midrule

\multirow{2}{*}{1} & Acc.  & 76.2 & 30.0 & 15.6 & 26.4 & 28.9 & 32.7 \\
                    & Token & 598 & 9179 & 10298 & 931 & 1215 & 661 \\
\midrule
\multirow{2}{*}{10} & Acc.  & 76.9 & 26.7 & 17.8 & 24.2 & 28.9 & 32.9 \\
                     & Token & 596 & 9371 & 10308 & 877 & 1158 & 699 \\
\midrule
\multirow{2}{*}{20} & Acc.  & 76.9 & 30.0 & 22.2 & 28.8 & 27.3 & 32.0 \\
                     & Token & 598 & 9328 & 9867 & 861 & 1101 & 682 \\
\bottomrule
\end{tabular}
\end{table}

\begin{table}[h!]
\centering \small
\caption{SIM-CoT $\lambda_{\text{step}}$ weight sweep (round $r=1$), Light.}
\label{tab:simcot-lambda-sweep}
\begin{tabular}{c|c|ccccc}
\toprule
\textbf{$\lambda_{\text{step}}$} & \textbf{Metric} &
\textbf{Math500} & \textbf{AIME2025} & \textbf{AIME2026} & \textbf{GPQA-D} & \textbf{MedQA} \\
\midrule

\multirow{2}{*}{0.3} & Acc.  & 76.7 & 27.8 & 21.1 & 26.9 & 28.9 \\
                      & Token & 598 & 9287 & 10042 & 914 & 1202 \\
\midrule
\multirow{2}{*}{1} & Acc.  & 74.9 & 28.9 & 18.9 & 27.9 & 27.9 \\
                    & Token & 574 & 8696 & 9377 & 866 & 1055 \\
\midrule
\multirow{2}{*}{3} & Acc.  & 76.4 & 28.3 & 18.3 & 27.0 & 29.0 \\
                    & Token & 596 & 8945 & 10209 & 902 & 1159 \\
\bottomrule
\end{tabular}
\end{table}

\subsection{Representation Analysis}

Every analysis of this subsection and of \cref{sec:analysis} inspects a representative run of each
arm rather than the full seed and weight grid of the tables above, since the quantities it reports
describe the geometry of a trained thought rather than a benchmark score.

\begin{wrapfigure}{R}{0.425\linewidth}
\centering
\vspace{-2.0\baselineskip}
\includegraphics[width=\linewidth]{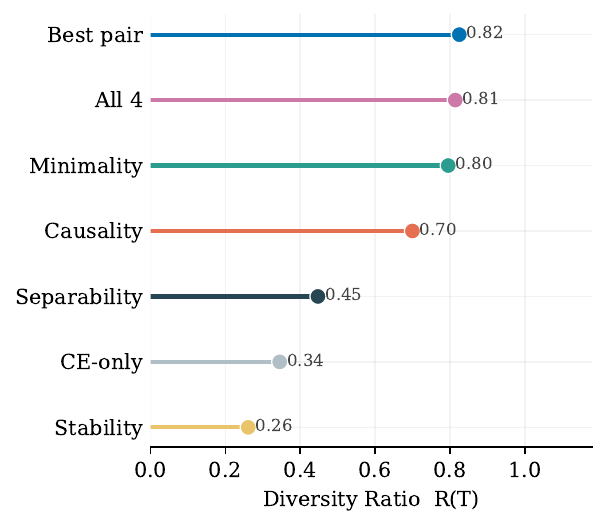}
\caption{Vector diversity within $\mathbf{T}$.}
\label{fig:collapse-diversity}
\vspace{-1.0\baselineskip}
\end{wrapfigure}

\textbf{Diversity Ratio.} We check whether $\mathbf{T}$'s own positions are diverse from one another. We measure diversity ratio
\begin{equation}\label{eq:diversity-ratio}
    R(\mathbf{T}) = \frac{\mathrm{Dist}(\mathbf{T})}{\sqrt{2}\,\mathrm{DistVC}(\mathbf{T})},
\end{equation}
where
\begin{equation}
\mathrm{Dist}(\mathbf{T}) = \frac{2}{m'(m'-1)} \sum_{i<j} \| \mathbf{T}_i - \mathbf{T}_j \|_2
\end{equation}
and
\begin{equation}
\mathrm{DistVC}(\mathbf{T}) = \frac{1}{m'} \sum_{i=1}^{m'} \| \mathbf{T}_i - \mu \|_2
\end{equation}
are the mean pairwise distance and the mean distance to the consumer's vocabulary centroid $\mu$ among $\mathbf{T}$'s $m'$ positions. The $\sqrt{2}$ normalizer follows from concentration of measure, hence $R(\mathbf{T}) = 1$ matches a thought as diverse as a random sample of real vocabulary tokens and $R(\mathbf{T}) = 0$ is total collapse. \Cref{fig:collapse-diversity} shows Minimality, Causality, and their combinations have higher ratios than the CE-only baseline, Separability is close to it, and Stability has a lower ratio. This is consistent with Minimality and Causality's losses spanning many positions of $\mathbf{T}$ while Separability and Stability reduce it to one pooled vector before applying any loss.

\textbf{Effective Superposition.} We measure how many candidate reasoning paths $\mathbf{T}$ supports at once, adapting the effective global parallelism of \citet{deng2025latentsft}.
For a question, let $x^{(1)}, \dots, x^{(M)}$ denote $M$ candidate refined plans of lengths $L_1, \dots, L_M$, namely the reference refined plan and the producer's sampled plans that lead the consumer to the correct answer and remain distinct from one another.
Let $W$ denote the consumer's unembedding matrix, and let $p_i = \mathrm{softmax}(W \mathbf{T}_i)$ denote the token distribution decoded from position $i$ of $\mathbf{T}$.
The decoded mass that $\mathbf{T}$ places on a token $x$, pooled over its $m'$ positions, is
\begin{equation}\label{eq:pooled-mass}
    \mu(x) = \sum_{i=1}^{m'} p_i[x] \, \mathbb{1}\big[ x \in \mathcal{K}(p_i) \big],
\end{equation}
where $\mathcal{K}(p_i)$ is the set of the $K$ most probable tokens under $p_i$.
Each candidate receives the coverage score
\begin{equation}\label{eq:coverage}
    c_m = \frac{1}{L_m} \sum_{j=1}^{L_m} \mu\big( x^{(m)}_j \big),
\end{equation}
which a temperatured softmax converts into a posterior over the candidates,
\begin{equation}\label{eq:candidate-posterior}
    P_m = \frac{\exp\big( \log(c_m + \varepsilon) / \tau \big)}{\sum_{k=1}^{M} \exp\big( \log(c_k + \varepsilon) / \tau \big)} .
\end{equation}
The effective superposition is the exponentiated entropy of that posterior,
\begin{equation}\label{eq:neff}
    N_{\mathrm{eff}} = \exp\Big( - \sum_{m=1}^{M} P_m \log P_m \Big) ,
\end{equation}
which equals one when a single candidate holds all of the mass and $M$ when every candidate holds the same.
We fix the same values for $K = 100$ and $\tau = 1$ as introduced in \citet{deng2025latentsft}.

\citet{deng2025latentsft} align each latent step with a segment of every candidate.
A freely generated $\mathbf{T}$ has no such correspondence.
We therefore pool the decoded mass over all positions of $\mathbf{T}$ in \cref{eq:pooled-mass}, normalize by the length of the candidate in \cref{eq:coverage}, and admit a top-$K$ token at any position.
A question on which every candidate has $c_m = 0$ yields a uniform posterior by construction, which would record an absence of evidence as maximal superposition, and such a question is excluded.

\begin{wrapfigure}{R}{0.36\linewidth}
\centering
\vspace{-2.0\baselineskip}
\includegraphics[width=\linewidth]{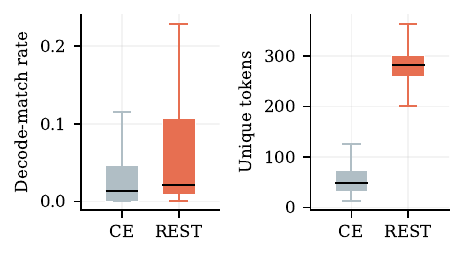}
\caption{Decoded thoughts compared to the producer's output.}
\label{fig:causality-decode-match}
\vspace{-1.0\baselineskip}
\end{wrapfigure}

\textbf{Decoded Thought Content.}
\Cref{fig:causality-decode-match} decodes each transferred thought through the consumer's vocabulary and compares the top-10 decoded tokens against the producer's output.
REST thoughts cover a wider set of distinct tokens, while CE-only concentrates on a small repeated set.
The rate at which the decoded tokens match the producer's output is also higher under REST.
This is the behaviour \cref{eq:loss-causality} targets.

\textbf{Training CE.}
As shown in \Cref{fig:ce-training}(a), CE-only memorizes the training set, while causality retains a higher training CE and a higher accuracy.
Stopping CE-only early, at the training CE where REST plateaus, does not recover the accuracy of REST in the Scaled system at $r=1$. The gain of REST is therefore not explained by avoiding memorization.
\Cref{fig:ce-training}(b) showcases the relation between training CE relative to CE-only and the change in accuracy. A lower training CE does not indicate a higher accuracy, which is the limitation of answer-level supervision in \cref{sec:price}.

\textbf{Questions Inside One Cluster.}
\Cref{fig:separability-pca-full} projects the thoughts of a fixed subsample of the training distribution, for which no held-out split exists.
At the refiner to solver transfer on math, CE-only thoughts spread along a one-dimensional curve rather than a point, although most questions on it still share a near-identical thought with another question.
We take the tightest cluster the CE-only objective produces at the planner to refiner transfer and inspect the questions behind it.
The three below belong to that cluster, shortened to their statements, and each is labelled by its index in the training dataset.

\begin{figure}[h!]
    \centering
    \includegraphics[width=\linewidth]{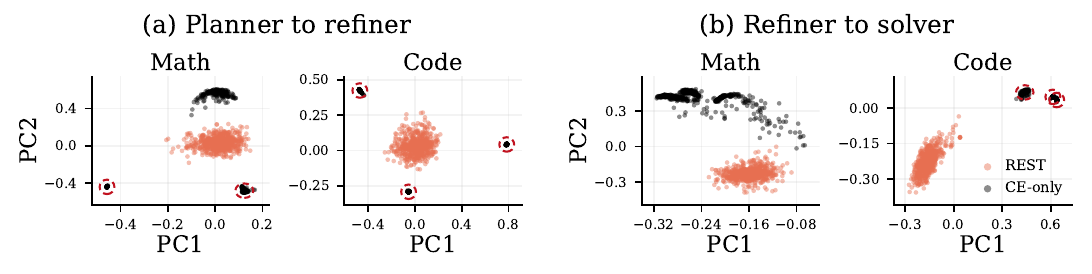}
    \caption{\textbf{PCA for $\mathbf{T}$.} CE thoughts collapse into dense clusters, where each dashed ring holds all questions of one training run, while REST spreads thoughts apart.}
    \label{fig:separability-pca-full}
\end{figure}

\begin{figure}[!h]
    \centering
    \includegraphics[width=0.85\linewidth]{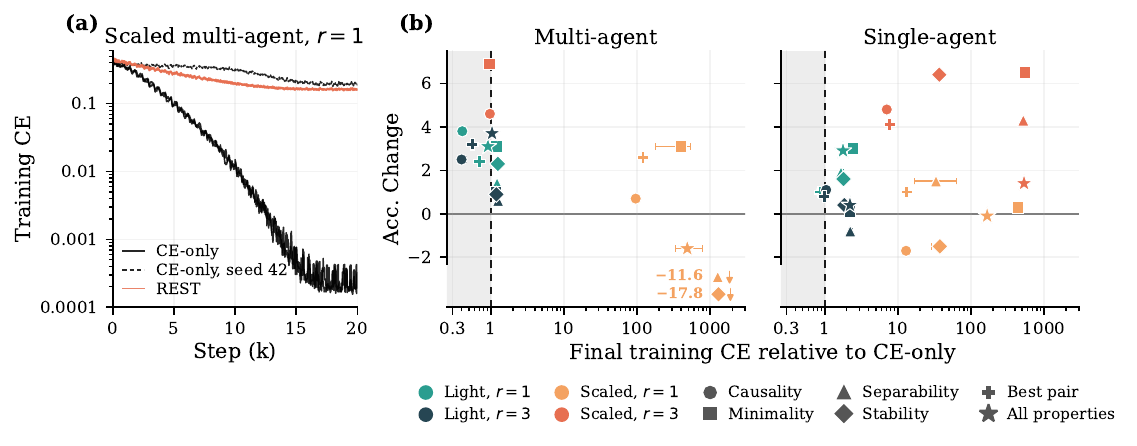}
    \caption{\textbf{Training CE.} (a) Training CE loss for causality at $\beta=3.0$. (b) Mean of last CE 200 steps relative to CE-only against Acc.\ Change. Bars span seeds.}
    \label{fig:ce-training}
\end{figure}

Their topics are a trigonometric series, refraction through a prism, and a bound state in quantum mechanics, and the cluster holds three further questions on combinatorial game theory, a recurrence relation, and a construction over the real numbers.
The questions do not share a domain nor a method of solution. Therefore, we conclude that the proximity of their thoughts represents collapse.

Under the CE-only objective the six thoughts of this cluster span $2.8\%$ of the projection, measured as the largest distance between any two of them as a fraction of the largest distance between any two of the $200$ thoughts.
The same six questions span $56.4\%$ under separability at $\beta=3.0$.
The questions grouped by the CE-only objective share no more content than unrelated ones.
Under a six-cluster partition of the projection, taken across both transfers and all three seeds, the questions inside a cluster have a mean pairwise TF-IDF cosine similarity of $0.03$, against $0.02$ for groups of the same sizes drawn at random from the same pool.

\begin{tcolorbox}[
  casebox,
  title={Three Questions From One CE-only Cluster (Planner to Refiner Transfer, seed 42)},
  left=2mm,
  right=2mm
]
\footnotesize
\textbf{Sample 1832, trigonometric series.}\\
Given that $A_k = \frac{k(k-1)}{2}\cos\frac{k(k-1)\pi}{2}$, find $|A_{19} + A_{20} + \cdots + A_{98}|$.

\vspace{4pt}\hrule\vspace{4pt}
\textbf{Sample 967, refraction through a prism.}\\
For an isosceles prism of angle $A$ and refractive index $\mu$, the angle of minimum deviation is
$\delta_m = A$. Which of the following options is or are correct? [A] For the angle of incidence
$i_1 = A$, the ray inside the prism is parallel to the base of the prism. (Three further options
omitted.)

\vspace{4pt}\hrule\vspace{4pt}
\textbf{Sample 1553, bound state in quantum mechanics.}\\
A particle of mass $m$ moves in a one-dimensional potential $V(x) = -\alpha\,\delta(x)$, where
$\delta(x)$ is the Dirac delta function and $\alpha$ is a positive constant. The particle is bound.
Find the value of $x_0$ such that the probability of finding the particle with $|x| < x_0$ is
exactly $1/2$.
\end{tcolorbox}

\begin{figure}[h!]
    \centering
    \includegraphics[width=0.7\columnwidth]{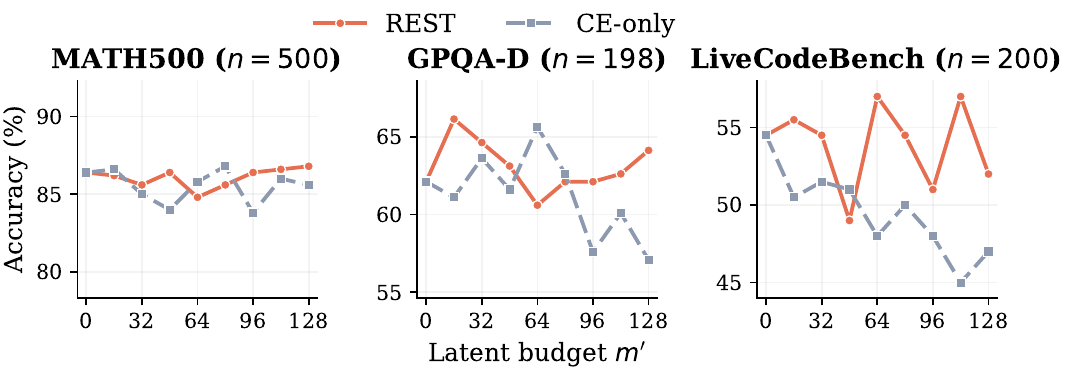}
    \caption{Accuracy against the latent budget $m'$ for the Scaled system at $r=1$.}
    \label{fig:latent-budget}
\end{figure}

\textbf{Latent Budget.}
\Cref{fig:latent-budget} varies the latent budget $m$ of the Scaled system at a fixed recursion round and compares REST against the CE-only objective.
The two objectives stay close at the smallest budgets.
As the budget grows, the CE-only objective loses accuracy on GPQA-D and on LiveCodeBench while REST holds its level, and neither objective separates on MATH500.
Each cell is a single run per arm, and the variation between adjacent budgets is therefore wider than in the tables above.

\needspace{18\baselineskip}
\section{Case Study: Effect on Generation}
\label{app:study}

\begin{wraptable}{r}{0.425\linewidth}
\centering
\small
\setlength{\tabcolsep}{4pt}
\vspace{-2.0\baselineskip}
\caption{Single-agent repeated text.}
\label{tab:generation-stalling}
\begin{tabular}{l|c|cc}
\toprule
\multirow{2}{*}{\textbf{Benchmark}} & \textbf{Answers $\geq$200} & \multicolumn{2}{c}{\textbf{Repeated text}} \\
\cmidrule(lr){3-4}
 & \textbf{tokens shorter} & CE & REST \\
\midrule
Math500  & 10--12\% & 4.5\% & \textbf{2.2\%} \\
AIME2025 & 21--23\% & 9.2\% & \textbf{3.4\%} \\
AIME2026 & 27--32\% & 8.7\% & \textbf{4.0\%} \\
GPQA-D   & 24--27\% & 28.1\% & \textbf{6.7\%} \\
MedQA    & 24--33\% & 51.5\% & \textbf{8.9\%} \\
\bottomrule
\end{tabular}
\vspace{-1.0\baselineskip}
\end{wraptable}

To study the difference in tokens when using REST compared to CE, we first analyzed the single
agent system.
Since the single agent system contained shorter outputs on average, we looked at the samples that
are shorter by at least 200 tokens.
On those answers we measured how much of each answer repeats a token span. We sweep the span width from 4 to 24 tokens and the result stays the same. We report the midpoint percentage drops across span widths in \cref{tab:generation-stalling}, averaged over the four property terms, which agree on every benchmark. The tokens REST saves are repeated text.

The two boxes below follow one of these questions in full.
The CE-only objective commits to polynomial long division and rewrites the full tableau at every
elimination step, exhausting the token budget before reaching an answer.
REST applies synthetic division and answers correctly in less than half the tokens. The shorter answer follows a different method (not a truncation of the other).

\begin{tcolorbox}[
  casebox,
  title={Output of the Single-Agent System, CE-only Objective (MATH500, sample 51, seed 42)},
  left=1.5mm,
  right=1.5mm
]
\footnotesize
\textbf{Question:}\\
Find the quotient when $x^6 - 3$ is divided by $x + 1$.
\\

\hrule
\vspace{8pt}
\textbf{Final Answer of the CE-only Objective:}\\
To find the quotient when $x^6-3$ is divided by $x+1$, we can use polynomial long division. Here are the steps:

\textbf{1. Set up the division:}
\(
\begin{array}{r|rrrrrrr} x+1 & x^6 & 0 & 0 & 0 & 0 & 0 & -3 \end{array}
\)

\textbf{2. Divide the leading term of the dividend by the leading term of the divisor:}
\(
\frac{x^6}{x} = x^5
\). Write $x^5$ above the division bar.

\textbf{3. Multiply $x^5$ by $x+1$:} $x^5 \cdot (x+1) = x^6 + x^5$. Write this product under the dividend and subtract:
\(
\begin{array}{r|rrrrrrr} x+1 & x^6 & 0 & 0 & 0 & 0 & 0 & -3 \\ & -(x^6+x^5) & & & & & & \\ \hline & 0 & -x^5 & 0 & 0 & 0 & 0 & -3 \end{array}
\)

\textbf{4. Repeat the process with the new polynomial $-x^5$:} $\frac{-x^5}{x} = -x^4$. Write $-x^4$ above the division bar.

\textbf{5. Multiply $-x^4$ by $x+1$:} $-x^4 \cdot (x+1) = -x^5 - x^4$. Write this product under the current polynomial and subtract:

\textit{[steps 6 through 8 continue the same pattern, each restating the whole tableau]}

\textbf{9. Multiply $-x^2$ by $x+1$:} $-x^2 \cdot (x+1) = -x^3 - x^2$. Write this product under the current polynomial and subtract:
\verb|\[ \begin{array}{r|
\textcolor{worsetxt}{\textit{Budget exhausted, no quotient stated}~\ding{55}}

\end{tcolorbox}

\begin{tcolorbox}[
  casebox,
  title={Output of the Single-Agent System, REST (MATH500, sample 51, seed 42)},
  left=1.5mm,
  right=1.5mm
]
\footnotesize
\textbf{Question:}\\
Find the quotient when $x^6 - 3$ is divided by $x + 1$.
\\

\hrule
\vspace{8pt}
\textbf{Final Answer of REST:}\\
To find the quotient when $x^6-3$ is divided by $x+1$, we can use polynomial long division or synthetic division. Here, I will use synthetic division for efficiency.

\textbf{1. Set up the synthetic division:}
The divisor is $x+1$, so the root is $-1$.
The coefficients of the dividend $x^6-3$ are $1, 0, 0, 0, 0, 0, -3$ (we include $0$ for the missing $x^5$, $x^4$, $x^3$, and $x^2$ terms).

\textbf{2. Perform the synthetic division:}
Bring down the first coefficient: $1$.
Multiply $1$ by $-1$ and add to the next coefficient: $0 + (-1) = -1$.
Multiply $-1$ by $-1$ and add to the next coefficient: $0 + 1 = 1$.
Multiply $1$ by $-1$ and add to the next coefficient: $0 + (-1) = -1$.
Multiply $-1$ by $-1$ and add to the next coefficient: $0 + 1 = 1$.
Multiply $1$ by $-1$ and add to the next coefficient: $0 + (-1) = -1$.
Multiply $-1$ by $-1$ and add to the next coefficient: $-3 + 1 = -2$.

The quotient is the polynomial formed by the numbers above the line, starting from the second coefficient: $x^5 - x^4 + x^3 - x^2 + x - 1$. The remainder is the last number: $-2$.

Therefore, the quotient when $x^6-3$ is divided by $x+1$ is
\(
\boxed{x^5 - x^4 + x^3 - x^2 + x - 1}
\) \textcolor{improvetxt}{\mbox{\textit{Correct Answer}~\ding{51}}}

\end{tcolorbox}

The multi-agent system shows the opposite pattern on some questions. There REST spends more
tokens than CE-only, and the extra length is genuine derivation rather than repeated text. The
boxes below follow one such question from GPQA-D. The answer options appear
in the order the run assigns, since the benchmark options are permuted per run. The CE-only
objective states an answer with no derivation. REST works through the relevant physics before answering, and only REST reaches
the correct option. The derivation carries a slight arithmetic mistake at one intermediate step, and
the answer is still correct.

\begin{tcolorbox}[
  casebox,
  title={Output of the Multi-Agent System, CE-only Objective (GPQA-D, sample 0, seed 42)},
  left=1.5mm,
  right=1.5mm
]
\footnotesize
\textbf{Question:}\\
Two quantum states with energies E1 and E2 have a lifetime of $10^{-9}$ sec and $10^{-8}$ sec,
respectively. We want to clearly distinguish these two energy levels. Which one of the following
options could be their energy difference so that they can be clearly resolved?\\
A. $10^{-8}$ eV \quad B. $10^{-11}$ eV \quad C. $10^{-4}$ eV \quad D. $10^{-9}$ eV

\hrule
\vspace{8pt}
\textbf{Final Answer of the CE-only Objective:}\\
The correct answer is \boxed{B}. \textcolor{worsetxt}{\mbox{\textit{Wrong Answer}~\ding{55}}}

\end{tcolorbox}

\begin{tcolorbox}[
  casebox,
  title={Output of the Multi-Agent System, REST (GPQA-D, sample 0, seed 42)},
  left=1.5mm,
  right=1.5mm
]
\footnotesize
\textbf{Question:}\\
Two quantum states with energies E1 and E2 have a lifetime of $10^{-9}$ sec and $10^{-8}$ sec,
respectively. We want to clearly distinguish these two energy levels. Which one of the following
options could be their energy difference so that they can be clearly resolved?\\
A. $10^{-8}$ eV \quad B. $10^{-11}$ eV \quad C. $10^{-4}$ eV \quad D. $10^{-9}$ eV

\hrule
\vspace{8pt}
\textbf{Final Answer of REST:}\\
To determine the energy difference between the two states that allows them to be clearly
resolved, we need to use the relationship between energy and temperature in statistical physics,
specifically the Fermi-golden rule and the linewidth of the energy levels.

The linewidth (width of the energy level) is given by the formula
\(
\Delta E = \hbar \frac{\pi}{\Delta t}
\),
where $\hbar$ is the reduced Planck's constant and $\Delta t$ is the lifetime of the level.

Given the lifetimes of the energy levels, $\Delta t_1 = 10^{-9}$ s and $\Delta t_2 = 10^{-8}$ s,
the corresponding linewidths are
\(
\Delta E_1 = \hbar \pi / \Delta t_1 \approx 3.30 \times 10^{-25}
\) J and
\(
\Delta E_2 = \hbar \pi / \Delta t_2 \approx 3.30 \times 10^{-24}
\) J. Converting to eV gives $\Delta E_1 \approx 2.06 \times 10^{-6}$ eV and
$\Delta E_2 \approx 2.06 \times 10^{-5}$ eV.

To clearly distinguish the two energy levels, the energy difference between the two states
should be greater than the (larger) linewidth of the levels, so greater than roughly
$2.06 \times 10^{-5}$ eV. Comparing against the options, only $10^{-4}$ eV clears this bound.

The correct option is
\(
\boxed{C}
\) \textcolor{improvetxt}{\mbox{\textit{Correct Answer}~\ding{51}}}

\end{tcolorbox}

\end{document}